\newcommand{\param}{\theta} 
\newcommand{\fairnessloss}{L_\fairnessaggweight} 
\newcommand{\fairnessaggweight}{\beta} 
\newcommand{\loss}{L}
\newcommand{\reweight}{w}
\newcommand{\lsmooth}{C} 
\newcommand{\metalr}{\eta} 
\newcommand{\bdd}{\sigma} 
\newcommand{\alternative}{A} 
\newcommand{\dispoint}{D} 
\newcommand{\feasresult}{S} 
\newcommand{\disresult}{d} 
\newcommand{\vallosstmp}{L^{(val)}}
\newcommand{\trainlosstmp}{L^{(train)}}
\newcommand{\traindata}{D^{(train)}}
\newcommand{\valdata}{D^{(val)}}
\newcommand{\intermweight}{{\Tilde{\reweight}}} 
\newcommand{\backwardad}{\text{\textbf{BackwardAD}}} 
\newcommand{\normalize}{\text{\textbf{Normalize}}}
\newcommand{\minibatch}{D} 
\newcommand{\lr}{\eta}
\newcommand{\groupvalloss}{L^{(val)}} 
\newcommand{\groupgrad}{g}
\newcommand{\utility}{u} 
\newcommand{\groupgradagg}{\nabla \fairnessloss}
\newcommand{\gradupd}{\nabla \loss_\gcoefunknown} 
\newcommand{\radius}{\epsilon} 
\newcommand{\ball}{B_{\radius}} 
\newcommand{\alignangle}{\delta} 
\newcommand{\allgrad}{G} 
\newcommand{\groupcnt}{K} 
\newcommand{\paramdim}{d} 
\newcommand{\unknown}{x} 
\newcommand{\const}{\gamma} 
\newcommand{\gcoefunknown}{\alpha}  
\newcommand{\funcf}{f}
\newcommand{\funcg}{g} 
\newcommand{\paramspace}{\Theta}
\definecolor{lightgray}{gray}{0.9}
\definecolor{myorange}{HTML}{ff9900}
\definecolor{myred}{HTML}{ff0000}
\definecolor{myblue}{HTML}{027BF2}
\definecolor{deepred}{rgb}{0.631,0.102,0.102}
\definecolor{mildyellow}{HTML}{FFF2CC}
\theoremstyle{plain}
\newtheorem{theorem}{Theorem}[section]
\newtheorem{corollary}[theorem]{Corollary}
\theoremstyle{definition}
\theoremstyle{remark}
\newcolumntype{C}[1]{>{\centering\arraybackslash}p{#1}}
\newcommand{\yi}[1]{\textbf{\textcolor{red}{[Yi: #1]}}}
\newcommand{\xuelin}[1]{\textbf{\textcolor{cyan}{[Xuelin: #1]}}}
\title{Fairness-Aware Meta-Learning via Nash Bargaining}
\author[1]{Yi Zeng\thanks{Xuelin and Yi contributed equally. Yi's work is partially done at Responsible AI, Meta AI. Corresponding: \href{mailto:lichen66@meta.com}{Li Chen} and \href{mailto:ruoxijia@vt.edu}{Ruoxi Jia}. Code:
\href{https://github.com/ruoxi-jia-group/Nash-Meta-Learning}{Github: Nash-Meta-Learning}.}
}
\author[2]{Xuelin Yang$^*$}
\author[3]{Li Chen}
\author[3]{Cristian Canton Ferrer}
\author[1]{Ming Jin}
\author[2]{\\Michael I. Jordan}
\author[1]{Ruoxi Jia}
\affil[1]{Virginia Tech, Blacksburg, VA 24061, USA}
\affil[2]{University of California, Berkeley, CA 94720, USA}
\affil[3]{Meta AI, Menlo Park, CA 94025, USA}
\begin{document}

\maketitle

\begin{abstract}
To address issues of group-level fairness in machine learning, it is natural to adjust model parameters based on specific fairness objectives over a sensitive-attributed validation set. Such an adjustment procedure can be cast within a meta-learning framework. However, naive integration of fairness goals via meta-learning can cause hypergradient conflicts for subgroups, resulting in unstable convergence and compromising model performance and fairness. To navigate this issue, we frame the resolution of hypergradient conflicts as a multi-player cooperative bargaining game. We introduce a two-stage meta-learning framework in which the first stage involves the use of a \textit{Nash Bargaining Solution} (NBS) to resolve hypergradient conflicts and steer the model toward the Pareto front, and the second stage optimizes with respect to specific fairness goals.
Our method is supported by theoretical results, notably a proof of the NBS for gradient aggregation free from linear independence assumptions, a proof of Pareto improvement, and a proof of monotonic improvement in validation loss. We also show empirical effects across various fairness objectives in six key fairness datasets and two image classification tasks. 
\end{abstract}

\begin{figure}[tb]
    \centering
    \includegraphics[width=\linewidth]{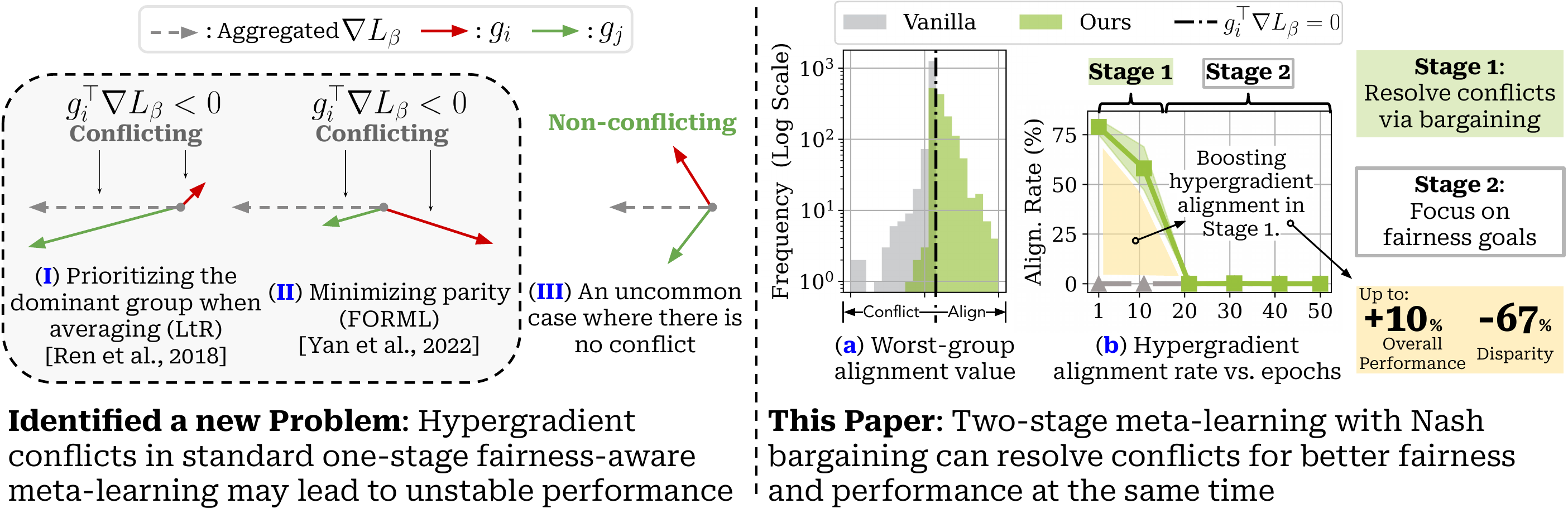}
    \caption{
    \textbf{Overview:} We illustrate the problem of \textit{hypergradient conflicts} in conventional one-stage fairness-aware meta-learning, which we find can lead to erratic performance and/or convergence at suboptimal, unfair local minima. 
    \textbf{Left:} Graphical depiction of group-wise hypergradient conflicts (\textcolor{blue}{\textbf{I}}, \textcolor{blue}{\textbf{II}}) showing different scenarios where conflicts arise in one-stage meta-learning, affecting performance stability; (\textcolor{blue}{\textbf{III}}) provides a depiction of the contrast case where the aggregated direction is not conflicting with any of the groups, which leads to a more stable, fair, and performant model.
    \textbf{Right:} (\textcolor{blue}{\textbf{a}}, \textcolor{blue}{\textbf{b}}) Comparison of traditional one-stage meta-learning (Vanilla, highlighted in \scalebox{0.8}{\colorbox[HTML]{e8e8e8}{gray}}) with our proposed two-stage meta-learning approach, which resolves inter-group hypergradient conflicts through a bargaining process (Ours, highlighted in \scalebox{0.8}{\colorbox[HTML]{dfecc3}{green}}). In our evaluation, we show the efficacy of our method in enhancing fairness-aware meta-learning, with improvements in performance by up to 10\% and fairness by up to 67\%, by initially focusing on conflict resolution in Stage 1 to steer the model towards the Pareto front followed by focusing on fairness goals in Stage 2.
    }
    \label{fig:teaser}
\end{figure}

\section{Introduction}

The traditional formulation of machine learning is in terms of a system that improves its predictive and decision-making performance by interacting with an environment.  Such a formulation is overly narrow in emerging applications---it lumps the social context of a learning system into the undifferentiated concept of an ``environment'' and provides no special consideration of the collective nature of learning.  Such social context includes notions of scarcity and conflict, as well as goals such as social norms and collaborative work that are best formulated at the level of social collectives.  The neglect of such considerations in traditional machine learning leads to undesirable outcomes in real-world deployments of machine learning systems, including outcomes that favor particular groups of people over others \cite{sagawa2019distributionally,caton2020fairness, mehrabi2021survey, chen2023algorithmic,pagano2023bias,wan2023processing}, the amplification of social biases and stereotypes \cite{leslie2020understanding,diaz2023connecting, ueda2024fairness}, and an ongoing lack of clarity regarding issues of communication, trust, and fairness.

Our focus is the current paper is fairness, and we take a perspective on fairness that blends learning methodology with economic mechanisms.  The current favored methodology for addressing fairness recognizes that it is not a one-size-fits-all concept---different fairness notions are appropriate for different social settings \cite{verma2018fairness, mitchell2021algorithmic,wachter2021fairness}---and treats fairness via meta-learning ideas. Meta-learning is implemented algorithmically with the tools of bi-level optimization. Specifically, fairness-aware meta-learning employs outer optimization to align with a specific fairness goal over a small, demographically balanced validation set 
to adjust a set of hyperparameters, while the inner optimization minimizes the hyperparameter-adjusted training loss \cite{ren2018learning,wei2022comprehensive,yan2022forml}.
This approach addresses two central challenges in group-level algorithmic fairness. First, it can integrate distinct fairness goals into the outer optimization. This flexibility allows customization of the focus of objectives, including enhancing the averaged loss across demographic- and label-balanced groups \cite{ren2018learning} and minimizing disparities \cite{yan2022forml}. This is a conceptual improvement over methods confined to a single fairness objective \cite{hashimoto2018fairness,lahoti2020fairness}.
Additionally, meta-learning reduces the reliance on sensitive attribute labels in the training data. This circumvents the label dependence in conventional methods \cite{luong2011k,kamiran2012data, iosifidis2019fae} and addresses ethical concerns over the acquisition of sensitive attributes \cite{ashurst2023fairness}. 

Although these arguments suggest that a meta-learning approach is promising for group-level fairness, it stops short of providing an economic mechanism which embodies fairness in terms of allocations and the management of conflict.  Our work aims to bridge this gap by bringing a concept from economic mechanism design---that of \emph{Nash bargaining}---into contact with meta-learning.  
Specifically, in our initial empirical explorations of meta-learning algorithms, we found that performance and fairness can vary substantially according to the choice of fairness metric across different datasets, suggesting a form of conflict that is not being resolved effectively via basic meta-learning procedures.  Investigating further,
we identified a phenomenon that we term \emph{hypergradient conflict} which we believe is a pivotal factor in driving the contrast in effectiveness among different fairness goals when integrated with meta-learning.  Briefly, the aggregated gradient of the outer optimization objective (the \emph{hypergradient})
conflicts with the desired update associated with particular groups (Figure \ref{fig:teaser}). 
To address this, we propose a novel framework that resolves hypergradient conflicts as a cooperative bargaining game. Specifically, we present a two-stage meta-learning framework for fairness: first incorporate the \textit{Nash Bargaining Solution} (NBS) at an early training stage to mitigate conflicts and steer the model toward the Pareto front, and then engage in the pursuit of specified fairness goals. 

Our work also introduces a new derivation of the NBS for gradient aggregation, one that dispenses with gradient independence assumptions made in the past work so that it is applicable to broader contexts.  This derivation may be of independent interest, and accordingly we present material on game-theoretic justification, Pareto improvement, and monotonic improvement in validation loss that help to relate the NBS to our gradient-based learning setting.  Our analysis sheds light on the convergence exhibited in empirical studies and provides an understanding of how our method improves meta-learning for fairness.

Finally, we present a thoroughgoing set of empirical studies that evaluate our method using synthetic and real-world datasets, encompassing six key fairness datasets and two image classification tasks. As we'll show, our framework uniformly enhances the performance of one-stage meta-learning methods, yielding up to 10\% overall performance improvement and a 67\% drop in disparity (Figure \ref{fig:teaser}).
The remainder of this paper is structured as follows: Section \ref{sec:problem} describes the problem. Section \ref{sec:method} details our method, including the bargaining game formulation, solution, and theoretical analyses ($\S$\ref{subsec:nash_setup}-\ref{subsec:nash_sol_theory}), and our two-stage meta-learning framework, its theoretical foundations, and dynamical issues ($\S$\ref{subsec:nash_meta}-\ref{subsec:nash_meta_dynamics}). Section \ref{sec:evaluation} presents empirical results and analysis. Related work is deferred to Appendix \ref{app:related_work}.

\section{Problem Statement}
\label{sec:problem}

Let $\theta$ denote a vector of model parameters, let $\traindata$ denote the training data and let $\valdata$ a validation set, with $\groupcnt$ sensitive groups $\valdata_i$, for $i \in [\groupcnt]$. Let $\loss$ be a vector-valued function where each entry corresponds to the per-example conventional loss (e.g., cross-entropy). Define \emph{group-wise validation loss} $\groupvalloss$ as a vector of size $\groupcnt$ where $\groupvalloss_i = \frac{1}{|\valdata_i|}\loss(\valdata_i|\param^*(\reweight))^\top\mathbbm{1}$ (the averaged loss over samples in group $i$). 
Define $\fairnessloss = \fairnessaggweight^\top \groupvalloss$ as the fairness loss with vector $\fairnessaggweight$ of size $\groupcnt$ encoding the fairness objectives under consideration. Following established work \cite{ren2018learning,yan2022forml}, we target group-level fairness objectives via meta-learning as follows:
\begin{align}
\reweight^* &= \arg\min_{\reweight \geq 0} \fairnessloss(\valdata|\param^*(\reweight)), \label{eq:opt_outer}\\
\param^* (\reweight) &= \arg\min_\param \reweight \cdot \loss(\traindata|\param). \label{eq:opt_inner}
\end{align}
%
The vector $\reweight$ is a set of hyperparameters that reweight each training example in the current minibatch when updating $\param$, optimized on $\valdata$ to improve group-level fairness.

Existing fairness-aware meta-learning work can be characterized as different protocols for $\fairnessaggweight$. \textbf{LtR} \cite{ren2018learning} computes the average of all group-losses with demographic and label balanced $\valdata$ (i.e., $\fairnessaggweight_{\text{LtR}}=\frac{1}{\groupcnt} \mathbbm{1}$). \textbf{FORML} \cite{yan2022forml} calculates the difference between the maximum and minimum group-loss (i.e., $\fairnessaggweight_{\text{FORML}}$ has 1 for max, -1 for min, and 0 otherwise), emphasizing parity. Meanwhile, group-level Max-Min fairness inspired from \cite{rawls2001justice} focuses solely on the maximum group-loss \cite{sagawa2019distributionally}, yielding a procedure referred to as \textbf{Meta-gDRO}, with $\fairnessaggweight_{\text{Meta-gDRO}}$ set equal to for the max and zero otherwise.
Throughout the training process, the hypergradient of these methods, $\nabla_{\reweight} \fairnessloss(\valdata|\param^*(\reweight))$,
is derived by applying the aggregation protocol $\fairnessaggweight$ to the \emph{group-wise hypergradients}, $\nabla_{\reweight} \groupvalloss_i(\valdata|\param^*(\reweight))$.
Their training process, wherein the aggregated hypergradient is iteratively utilized to update $w$, is referred to as a \emph{one-stage method}. They differ from our \emph{two-stage method}, which employs distinct hypergradient aggregation rules at two separate stages.

\begin{figure}[bt]
    \centering
        \begin{subfigure}[t]{0.49\linewidth}
        \vspace{-7.35em}
        \centering
        \resizebox{\linewidth}{!}{
        \begin{tabular}{@{}C{0.em}C{3em}C{6em}C{6em}C{6em}@{}}

        \toprule
        \multicolumn{1}{r|}{}        & \textbf{Baseline} &  \cellcolor[HTML]{ffffff}\textbf{LtR}         & \cellcolor[HTML]{ffffff}\textbf{FORML}         & \cellcolor[HTML]{ffffff}\textbf{Meta-gDRO} \\ 
        
        
        \hline \midrule
        \multicolumn{5}{l}{\multirow{1}{*}{\textbf{Adult Income}~\cite{misc_adult_2}, (sensitive attribute: \textbf{race)}}}\\[0pt] \midrule

        \multicolumn{1}{r|}{\textbf{Overall AUC (↑)}} & {0.668}                                                   & \cellcolor[HTML]{ffffff}0.803 (+20\%)  & 0.710 (+6.2\%) & 0.775 (+16\%)               \\

        \multicolumn{1}{r|}{\textbf{Max-gAUCD (↓)}}   &{0.225}                                               & 0.090 (-96\%)   & \cellcolor[HTML]{ffffff}0.290 (+29\%)  & 0.163 (-28\%)                 \\
        
        \multicolumn{1}{r|}{\textbf{Worst-gAUC (↑)}}  & {0.544}                                                    & 0.755 (+38\%)  & 0.540 (-0.7\%)  & \cellcolor[HTML]{ffffff}0.694  (+28\%)          \\

        


        
        
        
        \hline \midrule
        \multicolumn{5}{l}{\multirow{1}{*}{\textbf{Titanic Survival}~\cite{titanic}, (sensitive attribute: \textbf{sex)}}}\\ [0pt]\midrule

        \multicolumn{1}{r|}{\textbf{Overall AUC (↑)}} & {0.972}                                                   & \cellcolor[HTML]{ffffff}0.967 (-0.5\%)  & 0.950 (-2.3\%) & 0.961 (-1.1\%)           \\

        \multicolumn{1}{r|}{\textbf{Max-gAUCD (↓)}}   &{0.056}                                               & 0.044 (-21\%)  & \cellcolor[HTML]{ffffff}0.033 (-41\%)  & 0.033 (-41\%)                  \\
        
        \multicolumn{1}{r|}{\textbf{Worst-gAUC (↑)}}  & {0.944}                                                    & 0.944 (-) & 0.933 (-1.2\%)  & \cellcolor[HTML]{ffffff}0.944 (-)           \\

        \bottomrule
        \end{tabular}
        }
        \caption{
        Comparison of the model performance of three conventional one-stage meta-learning methods (LtR, FORML, and Meta-gDRO) across various fairness notions (results averaged from 5 runs, \% relative to the baseline only using the training set w/o meta-learning)
        }
        \label{table:fairness_problem}
        \end{subfigure}%
    \hfill
        \begin{subfigure}[t]{0.49\linewidth}
            \centering
            \includegraphics[width=\linewidth]{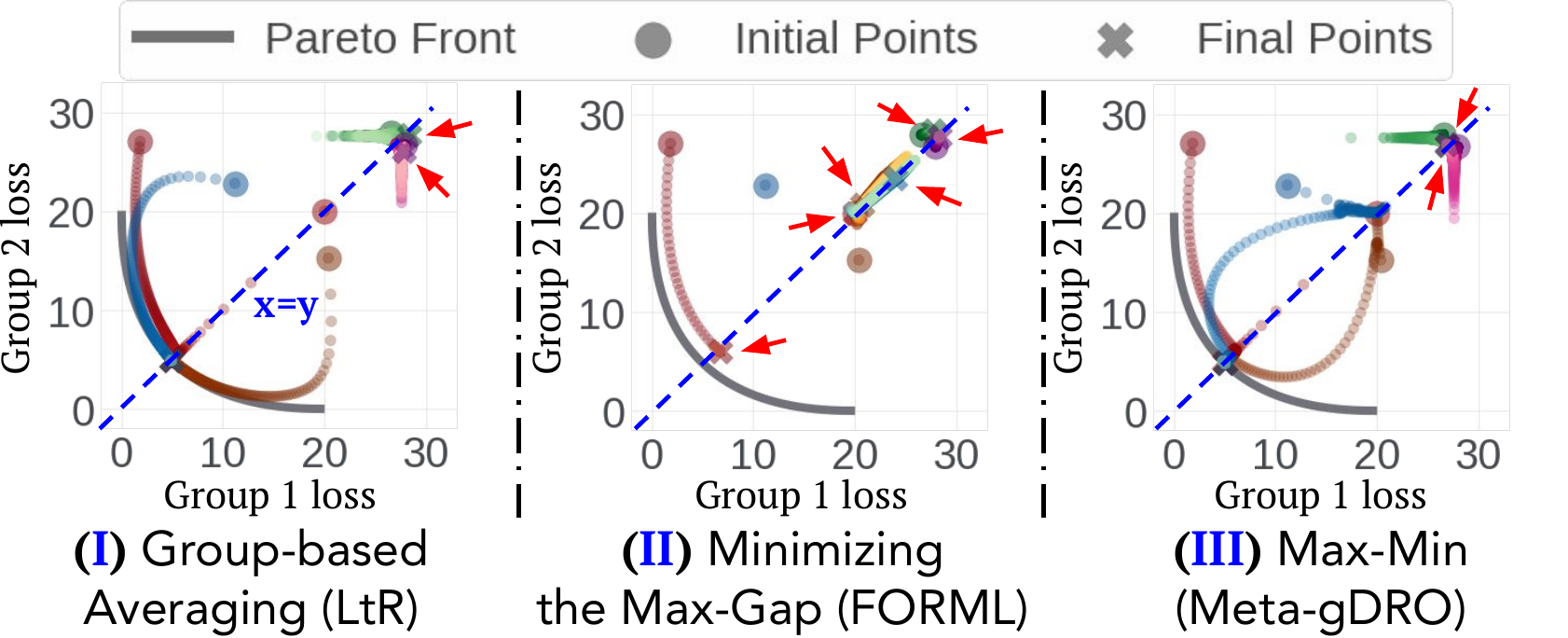}
            \caption{
            Trajectory of 1000-step optimizations. X and y-axis: validation losses for groups 1 and 2, rsp. Gray curves (lower left): the Pareto front. \textcolor{blue}{\textbf{x=y}}: fairness (equal validation loss). Ideal endpoint: their intersections. Undesirable endpoints: ``\textcolor{myred}{\ding{220}}''. 6 random initials. 
            }
            \label{fig:problem}
        \end{subfigure}
    \caption{The unreliable performance of conventional one-stage fairness-aware meta-learning.}
\end{figure}

\textbf{Unreliability of one-stage meta-learning for fairness.}
The traditional approach of plugging a fairness objective into $\fairnessloss$ seems natural. However, 
we find that the effect of this approach on performance and fairness can be unstable. We evaluate the above three one-stage fairness-aware methods based on targeted fairness metrics (detailed settings in $\S$\ref{subsec:fairness}): Overall Area Under the Curve (\textbf{Overall AUC}) for LtR which also measures prediction performance, maximum group AUC disparity (\textbf{Max-gAUCD}) for FORML, and worst group AUC (\textbf{Worst-gAUC}) for Meta-gDRO (Figure \ref{table:fairness_problem}).


\textbf{Delving into the cause.}
\label{subsec:motivation}
When inspecting the aggregated hypergradient with different one-stage meta-learning for fairness, we find that 
most epochs have \emph{less than 3\%} of the aggregated directions aligned to the optimization objectives of each subgroup. That is, we see \textit{hypergradient conflicts}:
\begin{align}
    \exists i, \text{ s.t. } \groupgrad_i^{\top} \groupgradagg < 0,
\end{align}
with the group-wise gradient $\groupgrad_i = \nabla_\reweight \groupvalloss_i(\valdata|\param(\reweight))$ and aggregated direction $\groupgradagg = \nabla_\reweight \fairnessloss(\valdata|\param(\reweight))$. 
The prevalence of intrinsic hypergradient conflict in one-stage methods is unsurprising, because their aggregation methods are unable to prevent overlooking or incorrectly de-prioritizing certain groups.
We study this phenomenon in synthetic settings to isolate structural issues from randomness in stochastic optimization (Figure \ref{fig:problem}, settings in Appendix \ref{app:exp_settings}): We define the performance goal as convergence to the Pareto front, while the fairness goal corresponds to the line $x=y$. As observed, alignment issues are prominent (Figure \ref{fig:problem}):
(\textcolor{blue}{\textbf{I}}) Averaging (LtR) may induce oscillatory dominance among groups.
(\textcolor{blue}{\textbf{II}}) Parity-focusing (FORML) leads to conflicting hypergradients due to one group's loss being subtracted, necessitating performance trade-offs for fairness.
(\textcolor{blue}{\textbf{III}}) Minimizing the worst-group loss (Meta-gDRO) often leads to toggling dominance, focusing only on the least-performing group's optimization direction, which can create conflicts and hinder progress toward the Pareto front.

\textbf{Hypergradient conflict resolution.}
%
Given the observation of hypergradient conflicts and convergence issues in one-stage meta-learning, we turn to cooperative bargaining and propose a two-stage method that seeks to attain more reliable improvements by resolving conflicts at the early stage of training.
Our methodology draws inspiration from the \textit{Nash Bargaining Solution} (NBS), a cornerstone of axiomatic bargaining in game theory, known for its general applicability and robustness \cite{nash1953two}. 
Nash Bargaining is chosen for its desirable axiomatic properties, which prohibit unconsented unilateral gains by Pareto Optimality, and its principled approach of effectively balancing interests, making it appealing for practical deployment. We provide additional discussion of the game-theoretic perspective and additional empirical comparisons in Appendix~\ref{app:discussion}.



While the NBS has been studied in multi-task learning \cite{navon2022multi}, it has yet to be explored in fairness-aware meta-learning. Unlike the settings in \cite{navon2022multi}, applying the NBS in our context challenges the assumption of linear independence among tasks, which is generally untenable for group-wise utility towards the same goal of performance gain (i.e., the settings of fairness). This drives our exploration into novel proofs and applications of the NBS in hypergradient aggregation, aiming to circumvent the need for linear independence and optimize shared outcomes through strategic negotiation and nested optimization.

\section{Methodology}
\label{sec:method}
\subsection{Nash Bargaining framework}
\label{subsec:nash_setup}
We start with some preliminaries. Consider $K$ players faced with a set $\alternative$ of alternatives. If all players reach a consensus on a specific alternative, denoted as $a$ in set $A$, then $a$ will be the designated outcome. In the event of a failure to agree on an outcome, a predetermined disagreement result, denoted as $d$, will be the final outcome. The individual utility payoff functions are denoted $\utility_i: \alternative \cup \{\dispoint\} \rightarrow \mathbb{R}$, which represent the players' preferences over $A$. Denote the set of feasible utility payoffs as $\feasresult = {(\utility_1(a), ..., \utility_K(a)) : a \in \alternative} \subset \mathbb{R}^K$ and the disagreement point as $\disresult = (\utility_1(\dispoint), ..., \utility_K(\dispoint))$. Nash proposed to study solutions to the bargaining problem through functions $f: (\feasresult, \disresult) \rightarrow \mathbb{R}$. The unique Nash bargaining solution (NBS), originally proposed for two players \cite{nash1953two} and latter extended to multiple players \cite{forgo1999introduction}, maximizes the function $f(\feasresult, \disresult)= \prod_i (x_i - \disresult_i)$, where $x_i$ is the bargained payoff and $\disresult_i$ is the disagreement payoff for player $i$. The NBS fulfills four axioms: Pareto Optimality, Symmetry, Independence of Irrelevant Alternatives, and Invariant to Affine Transformations. See Appendix \ref{app:prelim} for detailed definitions, and \ref{app:setup} for additional assumptions.

In our problem, denote that the hyperparameter of interest as $\intermweight$, an intermediate vector for optimal $\reweight^*$ in Algorithm \ref{alg:training} to reweight each training sample. Let $\groupvalloss_i$ be the validation loss and $\groupgrad_i = \nabla_\intermweight \groupvalloss_i$ be the hypergradient of group $i$. Let $G$ be the matrix with columns $\groupgrad_i$. We want to find the protocol $\gcoefunknown$ as the weights applied to individual group $i$'s loss. It associates an update step $\gradupd$ to $\intermweight$ that improves the aggregated validation loss among all groups.

We frame this problem as a cooperative bargaining game between the $\groupcnt$ groups. Define the utility function for group $i$ as
\begin{align}
  \utility_i(\gradupd) = \groupgrad_i^\top \gradupd . 
\end{align}
The intuition is that the utility tells us how much of proposed update is applied in the direction of hypergradient of group $i$ (as it can be written as $\lVert \groupgrad_i \rVert \lVert \gradupd \rVert \cos{\alignangle}$ with angle $\alignangle$ between $\groupgrad_i$ and $\gradupd$). This gives the projection of $\gradupd$ along $\groupgrad_i$, or the ``in effect'' update for group $i$. If $\groupgrad_i$ and $\gradupd$ aligns well, the utility of group $i$ is large (or \textit{vice versa}). 
Denote $\ball$ the ball of radius $\radius$ centered at 0. We are interested in the update $\gradupd$ in the agreement alternative set $\alternative = \{ \nabla L:  \nabla L \in B_{\epsilon}, \nabla L_\alpha^\top \groupgrad_i - \dispoint^\top \groupgrad_i > 0,  \forall i \in [K] \}$. Assume $A$ is feasible and the disagreement point is $\dispoint=0$ (i.e. the update $\gradupd = 0$, staying at $\intermweight$). The goal is to find a $\gradupd$ that maximizes the product of the deviations of each group's payoff from their disagreement point. Since $u_i$ forms a (shifted) linear approximation at $\intermweight$, we are essentially maximizing the utility of $\loss_\gcoefunknown$ locally. We provide further discussions on the problem setup and assumptions in Appendix~\ref{app:setup}. 

\subsection{Solving the problem}
\label{subsec:nash_sol}

In this section, we will show the NBS is (up to scaling) achieved at $\gradupd = \sum_{i\in [\groupcnt]} \gcoefunknown_i \groupgrad_i$, where $\gcoefunknown \in \mathbb{R}_+^K$ solves $\allgrad^\top G \gcoefunknown = \frac{1}{\gcoefunknown}$ by the following two theorems, with full proofs available in Appendix~\ref{app:proof}:
\begin{theorem} \label{thm:main_derivative}
     Under $\dispoint=0$, $\arg\max_{\gradupd \in \alternative} \prod_{i\in[\groupcnt]} (u_i({\gradupd}) - d_i)$ is achieved at
    \begin{align}
    \sum_{i\in[\groupcnt]} \frac{1}{{\gradupd}^\top \groupgrad_i} g_i &= \const \gradupd, \quad \text{for some } \const > 0. \label{eq:solve_for_upd}
    \end{align}
\end{theorem}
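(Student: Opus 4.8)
The plan is to convert the product objective into a concave program and then read the stationarity condition off its Lagrangian. Since $\dispoint = 0$, each disagreement payoff is $d_i = u_i(\dispoint) = \groupgrad_i^\top \dispoint = 0$, so on the feasible set $\alternative$ every factor $u_i(\gradupd) = \groupgrad_i^\top \gradupd$ is strictly positive. Maximizing $\prod_{i\in[\groupcnt]} u_i(\gradupd)$ is therefore equivalent to maximizing the monotone transform $\Phi(\gradupd) := \sum_{i\in[\groupcnt]} \log(\groupgrad_i^\top \gradupd)$ over the convex region $C = \{\gradupd \in \ball : \groupgrad_i^\top \gradupd > 0 \text{ for all } i\}$. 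The function $\Phi$ is concave on $C$, since its Hessian $-\sum_i \groupgrad_i \groupgrad_i^\top / (\groupgrad_i^\top \gradupd)^2$ is negative semidefinite, so the reformulated problem is a concave maximization over a convex set in which any stationary point is automatically a global maximizer.

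First I would establish existence of a maximizer and locate it in the relative interior of the half-space constraints. The closure $\overline{C} = \{\gradupd : \lVert \gradupd \rVert \le \radius,\ \groupgrad_i^\top \gradupd \ge 0\}$ is compact, and $\Phi$ extends continuously to $\overline{C}$ with value $-\infty$ wherever some $\groupgrad_i^\top \gradupd = 0$; feasibility of $\alternative$ supplies a point with finite, hence strictly larger, objective, so the maximum is attained at a point where all constraints $\groupgrad_i^\top \gradupd > 0$ are inactive. Next, because $\Phi(t\gradupd) = \Phi(\gradupd) + \groupcnt \log t$ is strictly increasing along rays, the norm constraint must be active at the optimum, i.e. $\lVert \gradupd \rVert = \radius$; otherwise a small radial rescaling would strictly raise $\Phi$, a contradiction.

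With the ball constraint as the only active one, the KKT stationarity condition for maximizing $\Phi$ subject to $\lVert \gradupd \rVert^2 \le \radius^2$ reads $\nabla \Phi(\gradupd) = \const \gradupd$ for a multiplier $\const \ge 0$, that is $\sum_{i\in[\groupcnt]} \frac{1}{\gradupd^\top \groupgrad_i} \groupgrad_i = \const \gradupd$, which is exactly \eqref{eq:solve_for_upd}. To fix the sign of $\const$ I would take the inner product of both sides with $\gradupd$: the left-hand side collapses to $\sum_i \frac{\gradupd^\top \groupgrad_i}{\gradupd^\top \groupgrad_i} = \groupcnt$, while the right-hand side equals $\const \lVert \gradupd \rVert^2 = \const \radius^2$, giving $\const = \groupcnt / \radius^2 > 0$. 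Concavity then certifies this stationary point as the global maximizer, completing the argument.

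The hard part is not the Lagrange computation but the careful treatment of the boundary of $\alternative$: the set is cut out by strict inequalities and is thus not closed, so I must argue through the compact closure and the blow-up of $\log$ at the half-space boundary that the supremum is genuinely attained in the open region, with no half-space constraint active (an active one would inject extra multiplier terms and break the clean form of \eqref{eq:solve_for_upd}). A point worth emphasizing is that this route never invokes linear independence of the $\groupgrad_i$: concavity of $\Phi$ and the radial monotonicity argument hold whether or not the Hessian is strictly negative definite, so existence of the maximizer and validity of \eqref{eq:solve_for_upd} require only feasibility of $\alternative$, while linear independence of the columns of $\allgrad$ would be needed solely to upgrade the conclusion to \emph{uniqueness}.
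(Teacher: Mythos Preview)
Your proof is correct and follows essentially the same route as the paper's: take logarithms to turn the product into a concave sum, use radial monotonicity to place the optimum on the sphere $\partial\ball$, and read off the first-order stationarity condition there. You add rigor the paper omits---existence via compactness and the $-\infty$ blow-up at the half-space boundary, the explicit computation $\const = \groupcnt/\radius^2$, and concavity to certify global optimality---while the paper phrases stationarity geometrically (gradient parallel to the ball's outward normal), but the core argument is identical.
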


\textbf{Proof Sketch.} We employ the same techniques as in Claim 3.1 of \cite{navon2022multi}.

\begin{theorem} \label{thm:solving_grad}
The solution to Equation \ref{eq:solve_for_upd} is (up to scaling) $\gradupd = \sum_{i\in \groupcnt} \gcoefunknown_i \groupgrad_i$ where 
\begin{align} \label{eq:coef_solution}
 \allgrad^\top \allgrad \gcoefunknown = \frac{1}{\gcoefunknown} 
\end{align}
with the element-wise reciprocal $\frac{1}{\gcoefunknown}$. 
\end{theorem}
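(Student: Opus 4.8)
The plan is to solve Equation~\eqref{eq:solve_for_upd} by first restricting \emph{where} a solution can live and then reducing the equation to a finite-dimensional condition on the coefficient vector $\gcoefunknown$. The key structural observation, and the one that lets us dispense with the linear-independence assumption of \cite{navon2022multi}, is that any solution $\gradupd$ of \eqref{eq:solve_for_upd} must lie in $\mathrm{col}(\allgrad)=\mathrm{span}\{\groupgrad_1,\dots,\groupgrad_\groupcnt\}$. Indeed, the left-hand side $\sum_i \tfrac{1}{\gradupd^\top \groupgrad_i}\,\groupgrad_i$ is manifestly a linear combination of the columns $\groupgrad_i$, hence lies in $\mathrm{col}(\allgrad)$; since $\const>0$, the right-hand side $\const\gradupd$, and therefore $\gradupd$ itself, must lie in $\mathrm{col}(\allgrad)$ as well. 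First I would use this to write $\gradupd = \sum_i \gcoefunknown_i \groupgrad_i = \allgrad\gcoefunknown$ for some coefficient vector $\gcoefunknown$. Crucially, I only need the \emph{existence} of such a representation, not its uniqueness, so no linear independence of the $\groupgrad_i$ is required at this step.

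Next I would substitute this ansatz into \eqref{eq:solve_for_upd}. Writing $M := \allgrad^\top\allgrad$, the group-$i$ inner product becomes $\gradupd^\top\groupgrad_i = (\allgrad\gcoefunknown)^\top\groupgrad_i = \gcoefunknown^\top\allgrad^\top\groupgrad_i = (M\gcoefunknown)_i$, the $i$-th entry of $M\gcoefunknown$ (using $M=M^\top$). Hence the left-hand side of \eqref{eq:solve_for_upd} equals $\sum_i \tfrac{1}{(M\gcoefunknown)_i}\,\groupgrad_i = \allgrad\,(1/(M\gcoefunknown))$ with the reciprocal taken entrywise, and the equation collapses to $\allgrad\big(1/(M\gcoefunknown) - \const\,\gcoefunknown\big)=0$. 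I would then verify that it \emph{suffices} to force the bracketed vector to vanish: choosing a componentwise-positive $\gcoefunknown$ with $M\gcoefunknown = 1/\gcoefunknown$ gives $1/(M\gcoefunknown)=\gcoefunknown$, so that $\sum_i \tfrac{1}{\gradupd^\top \groupgrad_i}\,\groupgrad_i = \allgrad\gcoefunknown = \gradupd$, which is exactly \eqref{eq:solve_for_upd} with $\const=1$. A one-line scaling argument (replacing $\gradupd$ by $t\gradupd$ rescales the constant as $\const \mapsto \const/t^2$) then recovers every $\const>0$, which is the origin of the ``up to scaling'' qualifier in the statement.

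I expect the main obstacle to be precisely the point where linear dependence bites: from $\allgrad\big(1/(M\gcoefunknown)-\const\gcoefunknown\big)=0$ one \emph{cannot} in general conclude $1/(M\gcoefunknown)=\const\gcoefunknown$, because $\allgrad$ may have a nontrivial kernel (this is exactly the gap that an independence assumption would close). The construction above sidesteps this: rather than cancelling $\allgrad$, I only invoke the sufficient condition $M\gcoefunknown = 1/\gcoefunknown$, which zeros the bracket regardless of $\ker(\allgrad)$ and still yields a genuine solution of \eqref{eq:solve_for_upd}. The remaining task is to guarantee that a strictly positive $\gcoefunknown$ solving $\allgrad^\top\allgrad\,\gcoefunknown = 1/\gcoefunknown$ exists. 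I would obtain this by reading $M\gcoefunknown = 1/\gcoefunknown$ as the first-order (KKT) condition of the NBS program $\max \sum_i \log (M\gcoefunknown)_i$ subject to $\gcoefunknown^\top M\gcoefunknown \le \radius^2$ over the agreement set $\alternative$: the log-objective is continuous and the feasible region is nonempty (by the feasibility assumption on $\alternative$) and compact, so a maximizer exists, and on the interior where $\groupgrad_i^\top\gradupd>0$ its stationarity condition, after the same scaling normalization, is exactly $M\gcoefunknown=1/\gcoefunknown$. This closes the loop, tying the algebraic characterization back to the existence of the NBS guaranteed through Theorem~\ref{thm:main_derivative}.
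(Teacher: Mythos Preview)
Your argument is correct. Both your proof and the paper's hinge on the same structural observation---that any solution $\gradupd$ lies in $\mathrm{col}(\allgrad)$---and both arrive at the characterization $\allgrad^\top\allgrad\,\gcoefunknown = 1/\gcoefunknown$ without invoking linear independence. They differ in organization: the paper multiplies \eqref{eq:solve_for_upd} by each $\groupgrad_j$ (equivalently, applies $\allgrad^\top$) and then \emph{defines} $\gcoefunknown_i := (\gradupd^\top\groupgrad_i)^{-1}$ directly from the solution; this yields the matrix equation as a necessary condition and sidesteps representation non-uniqueness entirely, since $\gcoefunknown$ is pinned down by inner products rather than by coordinates in a possibly-redundant spanning set. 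You instead parametrize $\gradupd = \allgrad\gcoefunknown$ abstractly, reduce to $\allgrad\bigl(1/(M\gcoefunknown) - \const\gcoefunknown\bigr)=0$, and argue sufficiency plus existence (via the KKT conditions of the NBS program). Your route is slightly longer but makes the role of $\ker(\allgrad)$ explicit and supplies an existence argument the paper leaves implicit; the paper's route is terser because its choice of $\gcoefunknown$ is canonical from the outset. Note that once $M\gcoefunknown = 1/\gcoefunknown$ holds, your $\gcoefunknown$ coincides with the paper's, since then $\gradupd^\top\groupgrad_i = (M\gcoefunknown)_i = 1/\gcoefunknown_i$.
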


\textbf{Proof sketch.} Let $x=\gradupd$. In line with \cite{navon2022multi} we observe that $x= \frac{1}{\const} \sum_{i \in \groupcnt} (x^\top \groupgrad_i)^{-1} \groupgrad_i$. However, whereas \cite{navon2022multi} relied on the linear independence of the $g_i$'s to uniquely determine each coefficient $(x^\top \groupgrad_i)^{-1}$, our technique makes no such assumption.  Instead, we multiply both sides of Equation~\ref{eq:solve_for_upd} by $g_j$ and obtain $\sum_{i \in [\groupcnt]} (x^\top \groupgrad_i)^{-1}  (\groupgrad_i^\top \groupgrad_j) = \const x^\top \groupgrad_i, j\in [\groupcnt]$.  
Set $\gcoefunknown_i = (\unknown^\top \groupgrad_i)^{-1}$. This is equivalent to $\groupgrad_j^\top \sum_{i\in [\groupcnt]} \groupgrad_i \gcoefunknown_i = \gcoefunknown_j^{-1}$, which is the desired solution when written in matrix form.

While our solution aligns with that of multi-task learning \cite{navon2022multi}, our proof of Theorem~\ref{thm:solving_grad} circumvents the necessity for linear independence among $\groupgrad_i$, one of the core initial assumptions in the previous work. Linear independence does not hold in general as the goals for individual groups (or tasks) might overlap (such as sharing common underlying features) or contradict each other (when there is negative multiplicity).
Our proof removes this assumption and sheds light on the effectiveness of updates based on the NBS in general cases. See Appendix~\ref{app:proof} for extended discussions on linear independence.

Furthermore, we derive two useful properties of the NBS in additional to its four axioms, with full proofs available in Appendix \ref{app:proof}:
\begin{corollary}\label{corollary:l2norm}
(Norm of bargained update) The solution in Theorem \ref{thm:solving_grad} has $\ell^2$-norm $\sqrt{\groupcnt}$. 
\end{corollary}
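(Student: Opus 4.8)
The plan is to read off the norm directly from the defining equation of the coefficient vector, since the squared norm telescopes into a trivial inner product. Write the bargained update as $x = \gradupd = \allgrad\gcoefunknown = \sum_{i\in[\groupcnt]}\gcoefunknown_i\groupgrad_i$ exactly as in Theorem~\ref{thm:solving_grad}, where $\gcoefunknown\in\mathbb{R}_+^\groupcnt$ is the specific solution of Equation~\ref{eq:coef_solution}, namely $\allgrad^\top\allgrad\,\gcoefunknown=\tfrac{1}{\gcoefunknown}$ with element-wise reciprocal. The first step is to expand $\lVert x\rVert^2 = x^\top x = (\allgrad\gcoefunknown)^\top(\allgrad\gcoefunknown) = \gcoefunknown^\top\allgrad^\top\allgrad\,\gcoefunknown$, turning the Gram matrix $\allgrad^\top\allgrad$ into the natural object to substitute against.

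The second and decisive step is to apply the defining relation $\allgrad^\top\allgrad\,\gcoefunknown=\tfrac{1}{\gcoefunknown}$ inside this expression, which yields $\lVert x\rVert^2 = \gcoefunknown^\top\tfrac{1}{\gcoefunknown} = \sum_{i\in[\groupcnt]}\gcoefunknown_i\cdot\gcoefunknown_i^{-1} = \groupcnt$, and hence $\lVert x\rVert = \sqrt{\groupcnt}$. The inner product $\gcoefunknown^\top\tfrac{1}{\gcoefunknown}$ is well-defined because each $\gcoefunknown_i>0$ (so no entry is zero and no sign issues arise), which is exactly what the feasibility of $\alternative$ and the construction $\gcoefunknown_i=(x^\top\groupgrad_i)^{-1}$ guarantee; indeed one can sanity-check consistency via $x^\top\groupgrad_i=(\allgrad\gcoefunknown)^\top\groupgrad_i=(\allgrad^\top\allgrad\,\gcoefunknown)_i=\gcoefunknown_i^{-1}$, matching the reciprocal substitution used in the proof of Theorem~\ref{thm:solving_grad}.

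There is no substantial obstacle to overcome in the calculation itself; the one point that deserves care is that the statement concerns the \emph{specifically normalized} solution pinned down by Equation~\ref{eq:coef_solution} rather than the up-to-scaling family produced by Theorem~\ref{thm:main_derivative}. The constant $\const>0$ appearing there leaves a one-dimensional scaling freedom in $x$, and the value $\sqrt{\groupcnt}$ is obtained precisely when that freedom is fixed so that $\allgrad^\top\allgrad\,\gcoefunknown=\tfrac{1}{\gcoefunknown}$ holds exactly. I would therefore state explicitly at the outset that $\gcoefunknown$ denotes this exact solution, so that the substitution $\allgrad^\top\allgrad\,\gcoefunknown=\tfrac{1}{\gcoefunknown}$ is legitimate and the clean value $\groupcnt$ emerges rather than a $\const$-dependent quantity.
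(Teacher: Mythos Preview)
Your proof is correct and essentially identical to the paper's: both compute $\lVert G\gcoefunknown\rVert^2=\gcoefunknown^\top G^\top G\,\gcoefunknown=\gcoefunknown^\top\tfrac{1}{\gcoefunknown}=\groupcnt$, with the paper writing this as the double sum $\sum_i\sum_j\gcoefunknown_i\gcoefunknown_j\,\groupgrad_i^\top\groupgrad_j=\groupcnt$ and you in matrix form. Your extra remarks on positivity of $\gcoefunknown_i$ and on fixing the scaling via Equation~\ref{eq:coef_solution} are more explicit than the paper's one-line proof but add no new ingredient.
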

\begin{corollary}\label{corollary:alpha_bdd}
If $\groupgrad_j$ is $\bdd$-bounded for $j \in [\groupcnt]$, $ \lVert \gcoefunknown_j^{-1} \rVert$ is $(\sqrt{\groupcnt}\bdd)$-bounded for $j\in [\groupcnt]$. 
\end{corollary}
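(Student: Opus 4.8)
The plan is to rewrite $\gcoefunknown_j^{-1}$ directly in terms of the bargained update and then estimate it with Cauchy--Schwarz. Recall from the proof of Theorem~\ref{thm:solving_grad} that, writing $\unknown = \gradupd$ for the NBS update, the coefficients are defined by $\gcoefunknown_i = (\unknown^\top \groupgrad_i)^{-1}$. Inverting this relation gives $\gcoefunknown_j^{-1} = \unknown^\top \groupgrad_j$ for every $j \in [\groupcnt]$, and since each such quantity is a scalar, $\lVert \gcoefunknown_j^{-1} \rVert$ is simply the absolute value $|\unknown^\top \groupgrad_j|$. This identification is the only structural observation needed.

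First I would apply Cauchy--Schwarz to obtain
\begin{align}
\lVert \gcoefunknown_j^{-1} \rVert = |\unknown^\top \groupgrad_j| \le \lVert \unknown \rVert \, \lVert \groupgrad_j \rVert . \nonumber
\end{align}
Next I would invoke Corollary~\ref{corollary:l2norm}, which states that the bargained update has $\ell^2$-norm exactly $\sqrt{\groupcnt}$, so that $\lVert \unknown \rVert = \sqrt{\groupcnt}$. Finally, the $\bdd$-boundedness hypothesis gives $\lVert \groupgrad_j \rVert \le \bdd$; substituting both estimates yields $\lVert \gcoefunknown_j^{-1} \rVert \le \sqrt{\groupcnt}\,\bdd$ for every $j \in [\groupcnt]$, which is precisely the claimed bound.

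I do not anticipate a genuine obstacle here: the corollary is an immediate consequence of the norm computation in Corollary~\ref{corollary:l2norm} together with a single application of Cauchy--Schwarz. The only step requiring care is the identification $\gcoefunknown_j^{-1} = \unknown^\top \groupgrad_j$, which follows directly from the definition of the coefficients used to solve Equation~\ref{eq:coef_solution}; once this is in hand, the remainder is a one-line estimate that reuses the already-established $\ell^2$-norm of the update.
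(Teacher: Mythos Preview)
Your proposal is correct and follows essentially the same approach as the paper: both identify $\gcoefunknown_j^{-1}$ with the inner product $\groupgrad_j^\top \sum_{i}\gcoefunknown_i\groupgrad_i$ (equivalently $\unknown^\top\groupgrad_j$), apply Cauchy--Schwarz, and then invoke Corollary~\ref{corollary:l2norm} together with the $\bdd$-bound on $\groupgrad_j$.
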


Informally, the NBS has implicit $\ell^2$ regularization (Corollary \ref{corollary:l2norm}) which substantiates our empirical observation that a separate $\ell^2$-normalization on $\gradupd$ for meta-learning rate adjustment yields better performance than $\ell^1$-normalization (the conventional setting in \cite{ren2018learning}). 
Note that we do not impose any assumptions on the boundedness of the hypergradient $\groupgrad_i$, ensuring the stability even when certain hypergradients are extreme. Furthermore, when $\groupgrad_i$ is bounded, Corollary \ref{corollary:alpha_bdd} implies that $\lVert \gcoefunknown_j \rVert$ is bounded below and no groups are left behind.

\subsection{Game-theoretic underpinnings}
\label{subsec:nash_sol_theory}

The NBS provides incentives for each player to participate in the bargaining game by assuming the existence of at least one feasible solution that all players prefer over disagreement ($\exists x \in \feasresult$ s.t. $x \succ d$). This aligns players' interests in reaching an agreement. The constraint $\groupgrad_i^\top \gradupd > 0$ resolves hypergradient conflict upon agreement.

Second, note that Equation~\ref{eq:coef_solution} shows relationship between the individual and interactive components:
\begin{align}
 \lVert \gcoefunknown_i \groupgrad_i \rVert_2^2 +  \sum_{j \neq i} (\gcoefunknown_i \groupgrad_i)^\top (\gcoefunknown_j \groupgrad_j) = 1, \label{eq:eq_decom_coef_layout}
\end{align}
for $i  \in [\groupcnt]$.
The relative weights $\gcoefunknown_i$ emerge from a player's own impact ($\lVert \gcoefunknown_i \groupgrad_i \rVert_2^2$) and interactions with others ($(\gcoefunknown_i \groupgrad_i)^\top (\gcoefunknown_j \groupgrad_j)$). This trade-off embodies individual versus collective rationality. Positive interactions (i.e., $\groupgrad_i^\top \groupgrad_j > 0$) incentivize collective improvements by downweighting $\gcoefunknown_i$. Negative interactions (i.e., $\groupgrad_i^\top \groupgrad_j < 0$) increase $\gcoefunknown_i$  to prioritize individual objectives. 
Furthermore, each player accounts for a nontrivial contribution to the chosen alternative $\gradupd$ under mild assumptions (Corollary \ref{corollary:alpha_bdd}). The negotiated solution balances individual and collective rationality through participation incentives and conflict resolution. This equilibrium encapsulates game-theoretic bargaining.


\subsection{Two-stage Nash-Meta-Learning}
\label{subsec:nash_meta}


\begin{algorithm}[tb]
\SetKwInput{KwParam}{Parameters}
\small
    \caption{Two-stage Nash-Meta-Learning Training}
    \label{alg:training}
    \SetNoFillComment
    \KwIn{
    \quad \quad \ \  $\param^{(0)}$; $\traindata$; $\valdata = \{\valdata_1,\ldots,\valdata_\groupcnt\}$ 
    }

    \KwParam{
    $\fairnessaggweight_0$ (fairness protocol); $\lr^{(t)}$ (step size); \\
    \quad \quad \quad \ \ \ \quad \quad $T_{bar}$ (bargaining steps)

    }
    \BlankLine
    \For{step $t \in \{1,\ldots,T\}$}{
        Minibatch $D^{(t)}$ sampled from $\traindata$\;
 
        \tcc{\colorbox[HTML]{ececec}{1. Unrolling Inner Optimization}}
                
        $\nabla\param^{(t)}(\intermweight) \gets \backwardad_{\param}\big (\intermweight \cdot L(D^{(t)} | \param^{(t)})\big )$\;
        
        $\hat{\param}^{(t)}(\intermweight) \gets \param^{(t)} - \lr^{(t)} \nabla\param^{(t)}(\intermweight)$\;

        $\fairnessaggweight \gets \fairnessaggweight_0$\;

        \If{$t < T_{bar}$}{
            \tcc{\colorbox[HTML]{ececec}{2. Validation Group-Utilities}}
            \For{group $k \in \{1,\ldots,\groupcnt\}$}{
                $\groupgrad_k^{(t)} \gets \backwardad_{\intermweight}\big (\groupvalloss_k(\valdata | \hat{\param}^{(t)}(\intermweight))\big )$\;}

                \tcc{\colorbox[HTML]{ececec}{3. Bargaining Game}}
                Set $\allgrad^{(t)}$ with columns $\groupgrad_k^{(t)}$\;
                
                Solve for $\gcoefunknown$: $(\allgrad^{(t)})^T \allgrad^{(t)}\gcoefunknown = \frac{1}{\gcoefunknown}$ to obtain $\gcoefunknown^{(t)}$\;
                
                \If{$\gcoefunknown^{(t)}$ is not None}{
                   \tcp{\colorbox[HTML]{d9ead3}{bargaining succeeded}}
                    $\fairnessaggweight \gets \gcoefunknown^{(t)}$\;
                }
            
        }
        $\intermweight \gets \backwardad_{\intermweight}\big (\fairnessloss(\valdata|\hat{\param}^{(t)}(\intermweight))\big )$\;
        
        \tcc{\colorbox[HTML]{ececec}{4. Weighted Update}}
        $ w \gets \normalize \big(\max(-\intermweight, 0)\big)$\;
        
        $\nabla\param^{(t)} \gets \backwardad_{\param} \big (w \cdot L(D^{(t)}|\param^{(t)}) \big)$\;
        
        $\param^{(t+1)} \gets \param^{(t)} - \lr^{(t)} \nabla\param^{(t)}$\;
    }
          
          
          
\KwOut{$\param^{(T)}$
}
\end{algorithm}

We now present our two-stage method (Algorithm \ref{alg:training})
that incorporates Nash bargaining into meta-learning training. Previous one-stage algorithms fix a predetermined $\fairnessloss$; our two-stage method assigns $\fairnessaggweight=\gcoefunknown$ in the NBS in Stage 1 and sets it back to the original $\fairnessaggweight_0$ in Stage 2. Define $\param^{(t)}$ as the model parameter at step $t$, $T$ as the number of total steps, and $T_{(bar)} \leq T$ as the number of bargaining steps in Stage 1. Let $\backwardad_{\param}(L)$ be the backward automatic differentiation of computational graph $L$ w.r.t.\ $\param$, and $\normalize(\cdot)$ be $\ell^2$-normalization function. 
Each step $t$ is constituted by four parts:
\textbf{The first part} is unrolling inner optimization, a common technique to approximate the solution to a nested optimization problem \cite{grefenstette2019generalized}.  We compute an temporary (unweighted) update $\hat{\param}^{(t)}$ on training data, which will be withdrawn after obtaining the udpated $\reweight$. $\intermweight$ is initialized to zero and included into the computation graph. \textbf{The second part} calculates the hypergradient $\groupgrad_k^{(t)}$ which ascertains the descent direction of each training data locally on the validation loss surface. \textbf{The third part} is to aggregate the hypergradients as the update direction for $\intermweight$ by $\gcoefunknown$ from the NBS. In Stage 1, successful bargaining grant the update of $\intermweight$ by the NBS. If we are in Stage 2 (not in the bargaining steps) or the players fail to bargain, we calculate $\intermweight$ based on the fairness loss $\fairnessloss$ of our choice. 
\textbf{The last part} is the update of parameter $\param^{(t+1)}$ using the clipped and normalized weights $\reweight$ for each training data in the minibatch. 

\subsection{Theoretical Properties}
\label{subsec:nash_meta_theory}
\begin{theorem}  \label{thm:theta_upd_rule}
    (Update rule of $\param$) Denote $\trainlosstmp_i = \loss(\minibatch_i^{(t)}|\param^{(t)}) \in \mathbb{R}$ for the $i$-th sample in training minibatch $\minibatch^{(t)}$ at step $t$. $\param$ is updated as 
 $
        \param^{(t+1)} = \param^{(t)} -  \frac{\lr^{(t)}}{|\minibatch^{(t)}|} \sum_{i =1}^{|\minibatch^{(t)}|} \Delta \param^{i}
 $
 with $\Delta \param^{i} = \max{((\nabla_\param ((\fairnessaggweight^{(t)})^\top \vallosstmp))^\top \nabla_\param \trainlosstmp_i, 0)} \nabla_\param \trainlosstmp_i$.
\end{theorem}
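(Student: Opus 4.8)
The plan is to unroll the four steps of Algorithm~\ref{alg:training} symbolically and push the chain rule through the bi-level structure, exploiting the fact that $\intermweight$ is initialized to zero so that every downstream gradient is evaluated at $\param^{(t)}$. Writing the weighting as a vector over the minibatch, the ``temporary'' inner update in the first part is $\hat{\param}^{(t)}(\intermweight) = \param^{(t)} - \lr^{(t)}\sum_{i}\intermweight_i \nabla_\param \trainlosstmp_i$, which is \emph{affine} in $\intermweight$. This is the structural fact the whole argument rests on: the Jacobian $\partial\hat{\param}^{(t)}/\partial\intermweight_i = -\lr^{(t)}\nabla_\param\trainlosstmp_i$ is constant in $\intermweight$, so no second-order (Hessian) terms survive and evaluation at $\intermweight=0$ simply sets $\hat{\param}^{(t)}(0)=\param^{(t)}$.

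First I would compute the meta-gradient produced in the third part. Since $\fairnessloss(\valdata\mid\hat\param^{(t)}(\intermweight)) = (\fairnessaggweight^{(t)})^\top\vallosstmp(\valdata\mid\hat\param^{(t)}(\intermweight))$, the chain rule gives, componentwise and evaluated at $\param^{(t)}$,
\begin{align*}
\big[\nabla_\intermweight \fairnessloss\big]_i = \Big(\tfrac{\partial\hat\param^{(t)}}{\partial\intermweight_i}\Big)^\top \nabla_\param\big((\fairnessaggweight^{(t)})^\top\vallosstmp\big) = -\lr^{(t)}\,(\nabla_\param\trainlosstmp_i)^\top \nabla_\param\big((\fairnessaggweight^{(t)})^\top\vallosstmp\big).
\end{align*}
This is exactly the value assigned in the line $\intermweight\gets\backwardad_\intermweight(\cdot)$. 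Next I would apply the clipping and normalization of the fourth part: because $\lr^{(t)}>0$, negating and clipping yields $\max(-\intermweight_i,0) = \lr^{(t)}\max\!\big((\nabla_\param\trainlosstmp_i)^\top\nabla_\param((\fairnessaggweight^{(t)})^\top\vallosstmp),0\big)$, which already carries the gating factor appearing in $\Delta\param^i$. Substituting $\reweight = \normalize(\max(-\intermweight,0))$ into $\param^{(t+1)} = \param^{(t)} - \lr^{(t)}\sum_i \reweight_i\nabla_\param\trainlosstmp_i$ collects each sample's contribution into $\max\!\big((\nabla_\param((\fairnessaggweight^{(t)})^\top\vallosstmp))^\top\nabla_\param\trainlosstmp_i,0\big)\,\nabla_\param\trainlosstmp_i = \Delta\param^i$, with the positive scalars folding into the displayed prefactor.

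The main obstacle is the bookkeeping of that scalar prefactor: the derivation produces a leading constant of the form $(\lr^{(t)})^2/Z$, where $Z$ is the normalization constant, rather than $\lr^{(t)}/|\minibatch^{(t)}|$ verbatim. I would therefore state the identity up to this positive normalization scalar (equivalently, absorbing it into an effective per-step rate), since the content of the theorem is the \emph{direction} $\Delta\param^i$—a per-sample update gated by the sign of the alignment between the training gradient $\nabla_\param\trainlosstmp_i$ and the fairness hypergradient $\nabla_\param((\fairnessaggweight^{(t)})^\top\vallosstmp)$.

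A secondary point worth making explicit is why the $\fairnessaggweight^{(t)}$ that appears is the one actually used at step $t$: in Stage~1 it is the bargaining solution $\gcoefunknown^{(t)}$ when bargaining succeeds, and otherwise the fixed protocol $\fairnessaggweight_0$. The derivation is agnostic to which vector populates $\fairnessaggweight^{(t)}$, because $\fairnessaggweight$ enters only linearly through $\fairnessloss = \fairnessaggweight^\top\groupvalloss$; hence the same update rule form holds across both stages, which is precisely what makes this theorem the common bridge used by the subsequent Pareto-improvement and monotonic-descent analyses.
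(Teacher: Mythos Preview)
Your proposal is correct and follows essentially the same approach as the paper: both compute the meta-gradient by pushing the chain rule through the one-step inner update at $\intermweight=0$, obtain $\intermweight_i \propto -(\nabla_\param((\fairnessaggweight^{(t)})^\top\vallosstmp))^\top\nabla_\param\trainlosstmp_i$, then clip, normalize, and substitute into the outer update. Your treatment of the scalar prefactor is in fact more careful than the paper's, which simply remarks that one may ``disregard $\lr^{(t)}$ as if it absorbs the normalization constant.''
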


\begin{theorem} \label{thm:pareto_improvement}
    (Pareto improvement of $\intermweight$) Use $\alpha^{(t)}$ for the update. Assume $\groupvalloss_i$ is Lipschitz-smooth with constant $\lsmooth$ and $\groupgrad_i^{(t)}$ is $\bdd$-bounded at step $t$. If the meta learning rate for $\intermweight$ satisfies $\metalr^{(t)} \leq \frac{2}{\lsmooth \groupcnt \gcoefunknown_j^{(t)}}$ for $j\in [\groupcnt]$, then $\groupvalloss_i(\intermweight^{(t+1)})  \leq \groupvalloss_i(\intermweight^{(t)})$ for any group $i \in [\groupcnt]$.
\end{theorem}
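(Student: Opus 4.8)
The plan is to read the Stage-1 update of $\intermweight$ as a single gradient-descent step along the bargained direction and then invoke the descent lemma for each group's validation loss separately. Concretely, with $\fairnessaggweight^{(t)}=\gcoefunknown^{(t)}$ the meta-update reads $\intermweight^{(t+1)} = \intermweight^{(t)} - \metalr^{(t)} \gradupd^{(t)}$, where $\gradupd^{(t)} := \sum_{j\in[\groupcnt]} \gcoefunknown_j^{(t)} \groupgrad_j^{(t)}$ is the NBS direction of Theorem~\ref{thm:solving_grad}. Because $\gradupd^{(t)}$ lies in the agreement set $\alternative$, we have $(\groupgrad_i^{(t)})^\top \gradupd^{(t)} > 0$ for every $i$, so each group already enjoys a first-order decrease; the real work is to control the second-order term with a suitable step size.

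First I would apply the $\lsmooth$-smoothness of $\groupvalloss_i$ with $x=\intermweight^{(t)}$, $y=\intermweight^{(t+1)}$, and $\nabla\groupvalloss_i(\intermweight^{(t)}) = \groupgrad_i^{(t)}$, giving
\begin{align}
\groupvalloss_i(\intermweight^{(t+1)}) \leq \groupvalloss_i(\intermweight^{(t)}) - \metalr^{(t)} (\groupgrad_i^{(t)})^\top \gradupd^{(t)} + \tfrac{\lsmooth}{2}(\metalr^{(t)})^2 \lVert \gradupd^{(t)} \rVert^2. \nonumber
\end{align}
The claim then reduces to showing the last two terms are jointly nonpositive, i.e.\ $\tfrac{\lsmooth}{2}\metalr^{(t)}\lVert\gradupd^{(t)}\rVert^2 \leq (\groupgrad_i^{(t)})^\top \gradupd^{(t)}$ after dividing through by $\metalr^{(t)}>0$.

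The crux is to evaluate both quantities in closed form using the defining Equation~\ref{eq:coef_solution}, namely $\allgrad^\top\allgrad\gcoefunknown = 1/\gcoefunknown$. Reading off its $i$-th coordinate gives $(\groupgrad_i^{(t)})^\top \gradupd^{(t)} = (\allgrad^\top\allgrad\gcoefunknown)_i = 1/\gcoefunknown_i^{(t)}$, while contracting with $\gcoefunknown$ gives $\lVert\gradupd^{(t)}\rVert^2 = \gcoefunknown^\top\allgrad^\top\allgrad\gcoefunknown = \gcoefunknown^\top(1/\gcoefunknown) = \groupcnt$, which is exactly Corollary~\ref{corollary:l2norm}. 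Substituting, the nonpositivity condition becomes $\tfrac{\lsmooth}{2}\metalr^{(t)}\groupcnt \leq 1/\gcoefunknown_i^{(t)}$, i.e.\ $\metalr^{(t)} \leq \tfrac{2}{\lsmooth\groupcnt\gcoefunknown_i^{(t)}}$. Since the hypothesis assumes $\metalr^{(t)} \leq \tfrac{2}{\lsmooth\groupcnt\gcoefunknown_j^{(t)}}$ for all $j\in[\groupcnt]$, it holds in particular for $j=i$, closing the argument for every group $i$ simultaneously. The $\bdd$-boundedness of the $\groupgrad_j^{(t)}$ enters only to guarantee, via Corollary~\ref{corollary:alpha_bdd}, that the coefficients $\gcoefunknown_j^{(t)}$ are well defined and finite so that the step-size threshold is strictly positive and nonvacuous.

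I expect the main obstacle to be conceptual rather than computational: correctly pinning down the direction and sign of the $\intermweight$-update (so that $-\gradupd^{(t)}$ is a joint descent direction consistent with the agreement constraint $(\groupgrad_i^{(t)})^\top\gradupd^{(t)}>0$), and recognizing that the two identities $(\groupgrad_i^{(t)})^\top\gradupd^{(t)} = 1/\gcoefunknown_i^{(t)}$ and $\lVert\gradupd^{(t)}\rVert^2 = \groupcnt$ are precisely what convert the generic descent lemma into the stated per-group step-size bound. Once these are in hand, the remaining algebra is routine. A secondary point to handle carefully is that the guarantee must hold for all groups at once, which is what forces the uniform-in-$j$ form of the hypothesis rather than a single group-dependent condition.
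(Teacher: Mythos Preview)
Your proposal is correct and follows essentially the same approach as the paper: apply the descent lemma from $\lsmooth$-smoothness, evaluate the first-order term via $(\groupgrad_i^{(t)})^\top\gradupd^{(t)}=1/\gcoefunknown_i^{(t)}$ from Equation~\ref{eq:coef_solution}, evaluate the second-order term via $\lVert\gradupd^{(t)}\rVert^2=\groupcnt$ from Corollary~\ref{corollary:l2norm}, and read off the step-size bound. The paper also remarks at the end (as you do) that Corollary~\ref{corollary:alpha_bdd} ensures the threshold is meaningful, and notes that its constant $2$ slightly improves the analogous bound in \cite{navon2022multi}.
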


\begin{theorem} \label{thm:monotone_val}
     Assume $\vallosstmp$ is Lipschitz-smooth with constant $\lsmooth$ and $\nabla_{\param} \trainlosstmp_i$ is $\bdd$-bounded. If the learning rate for $\param$ satisfies $\lr^{(t)} \leq \frac{2 |\minibatch^{(t)}| }{ \lsmooth \lVert \fairnessaggweight^{(t)} \rVert \bdd^2}$, then $ \loss_{\fairnessaggweight^{(t)}} (\param^{(t + 1)}) \leq \loss_{\fairnessaggweight^{(t)}} (\param^{(t)})$ for any fixed vector $\fairnessaggweight^{(t)}$ with finite $\lVert \fairnessaggweight^{(t)} \rVert$ used to update $\param^{(t)}$. 
\end{theorem}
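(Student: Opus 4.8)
The plan is to prove the one-step monotone decrease through the descent lemma for Lipschitz-smooth functions, exploiting the fact that the update rule established in Theorem~\ref{thm:theta_upd_rule} produces a genuine descent direction for $\loss_{\fairnessaggweight^{(t)}}$. Write $\funcg := \nabla_\param \loss_{\fairnessaggweight^{(t)}}(\param^{(t)}) = \nabla_\param\big((\fairnessaggweight^{(t)})^\top \vallosstmp\big)$ for the meta-gradient at $\param^{(t)}$ (taken with respect to $\param$, and hence distinct from the group hypergradients $\groupgrad_i$), and $h_i := \nabla_\param \trainlosstmp_i$ for the per-sample training gradient, so that $a_i := \funcg^\top h_i$ is exactly the alignment scalar appearing in Theorem~\ref{thm:theta_upd_rule}. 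By that theorem the step is $\param^{(t+1)}-\param^{(t)} = -\lr^{(t)} s$ with $s := \frac{1}{|\minibatch^{(t)}|}\sum_i \max(a_i,0)\, h_i$, a nonnegative combination of training gradients.

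The conceptual heart is the alignment identity produced by the clipping operator: since $\max(a_i,0)\,a_i = \max(a_i,0)^2 \ge 0$,
\begin{align}
\funcg^\top s \;=\; \frac{1}{|\minibatch^{(t)}|}\sum_i \max(a_i,0)\, a_i \;=\; \frac{1}{|\minibatch^{(t)}|}\sum_i \max(a_i,0)^2 \;\ge\; 0 .
\end{align}
Thus discarding the samples whose training gradient is misaligned with the meta-gradient guarantees the retained direction never increases $\loss_{\fairnessaggweight^{(t)}}$ to first order. I would then invoke smoothness: because $\vallosstmp$ is $\lsmooth$-Lipschitz-smooth and $\fairnessaggweight^{(t)}$ is a fixed vector, the scalar objective $\loss_{\fairnessaggweight^{(t)}}=(\fairnessaggweight^{(t)})^\top \vallosstmp$ has $\lsmooth\lVert\fairnessaggweight^{(t)}\rVert$-Lipschitz gradient (the Jacobian-transpose bound contributes the $\lVert\fairnessaggweight^{(t)}\rVert$ factor, which is where finiteness of $\lVert\fairnessaggweight^{(t)}\rVert$ is used), so the descent lemma gives
\begin{align}
\loss_{\fairnessaggweight^{(t)}}(\param^{(t+1)}) \;\le\; \loss_{\fairnessaggweight^{(t)}}(\param^{(t)}) - \lr^{(t)}\,\funcg^\top s + \tfrac{\lsmooth\lVert\fairnessaggweight^{(t)}\rVert}{2}\,(\lr^{(t)})^2\lVert s\rVert^2 .
\end{align}

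Finally I would control the quadratic term. Using the $\bdd$-bound $\lVert h_i\rVert\le\bdd$ to estimate $\lVert s\rVert^2 \le \frac{\bdd^2}{|\minibatch^{(t)}|^2}\sum_i\max(a_i,0)^2 = \frac{\bdd^2}{|\minibatch^{(t)}|}\,\funcg^\top s$ and substituting, the right-hand side is at most $\loss_{\fairnessaggweight^{(t)}}(\param^{(t)})$ exactly when $\tfrac{\lsmooth\lVert\fairnessaggweight^{(t)}\rVert\bdd^2}{2|\minibatch^{(t)}|}\,\lr^{(t)}\le 1$; here the common factor $\funcg^\top s\ge 0$ cancels (the inequality is trivial when $\funcg^\top s=0$, since then $s=0$ and there is no update). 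This rearranges to $\lr^{(t)}\le \frac{2|\minibatch^{(t)}|}{\lsmooth\lVert\fairnessaggweight^{(t)}\rVert\bdd^2}$, which is precisely the stated hypothesis.

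The step I expect to be the main obstacle is the sharp control of $\lVert\sum_i\max(a_i,0)h_i\rVert^2$. Landing exactly the $|\minibatch^{(t)}|$ factor in the numerator requires the bound $\lVert\sum_i\max(a_i,0)h_i\rVert^2 \le \bdd^2\sum_i\max(a_i,0)^2$, i.e.\ controlling the cross terms $h_i^\top h_j$ (they vanish under orthogonality and are harmless when non-positive, while a plain triangle-inequality-plus-Cauchy--Schwarz bound instead costs a factor $|\minibatch^{(t)}|$ and tightens the learning-rate constant). It is therefore exactly this grouping of the mini-batch normalization that fixes the constant, and where care is needed. The remaining ingredients---the alignment identity and the descent lemma---are routine, as is translating the vector-valued smoothness of $\vallosstmp$ into the scalar constant $\lsmooth\lVert\fairnessaggweight^{(t)}\rVert$.
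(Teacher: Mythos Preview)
Your approach is essentially identical to the paper's: invoke the closed-form update from Theorem~\ref{thm:theta_upd_rule}, observe that the clipping makes the first-order term a sum of squares, apply the descent lemma with the $\lsmooth\lVert\fairnessaggweight^{(t)}\rVert$-smoothness of $\loss_{\fairnessaggweight^{(t)}}$ (the paper records this as a separate corollary), and bound the quadratic term using $\lVert h_i\rVert\le\bdd$.

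You are right to single out the bound $\bigl\lVert\sum_i\max(a_i,0)h_i\bigr\rVert^2\le\bdd^2\sum_i\max(a_i,0)^2$ as the delicate step. The paper uses exactly this inequality and attributes it to ``triangle inequality and Cauchy--Schwarz,'' but as you note, those tools actually deliver $\bigl\lVert\sum_i v_i\bigr\rVert^2\le|\minibatch^{(t)}|\sum_i\lVert v_i\rVert^2$, not $\sum_i\lVert v_i\rVert^2$; the cross terms $h_i^\top h_j$ are not controlled in general. So your diagnosis of the obstacle is accurate, and the paper's own argument carries the same gap: a fully rigorous version of either proof yields the step-size condition $\lr^{(t)}\le\frac{2}{\lsmooth\lVert\fairnessaggweight^{(t)}\rVert\bdd^2}$, without the extra $|\minibatch^{(t)}|$ factor in the numerator.
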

Informally, the closed-form update rule of $\param$ indicates that the weight of a training sample is determined by its local similarity between the $\beta$-reweighted validation loss surface and the training loss surface (Theorem \ref{thm:theta_upd_rule}). Under mild conditions, $\intermweight$ yields Pareto improvement for all groups for the outer optimization using NBS (Theorem \ref{thm:pareto_improvement}). For the inner optimization, under mild conditions, the fairness loss $\loss_{\fairnessaggweight^{(t)}}$ monotonically decreases w.r.t.\ $\fairnessaggweight^{(t)}$ regardless of the choice of protocol (Theorem \ref{thm:monotone_val}). This generalizes \cite{ren2018learning} from $\fairnessaggweight = \frac{1}{\groupcnt} \mathbbm{1}$ to any $\fairnessaggweight$ with finite norm and provides a uniform property for the fairness-aware meta-learning methods. It entails the flexibility of our two-stage design that switches $\fairnessaggweight$ between phases. Setting $\fairnessaggweight=\gcoefunknown$ gives the desired property for the NBS. The validation loss surface reweighted by the NBS has the maximum joint utility, which empirically boosts the overall performance when used to update $\param$. Full proofs are given in Appendix \ref{app:proof}.


\subsection{Dynamics of Two-stage Nash-Meta-Learning}
\label{subsec:nash_meta_dynamics}
Our method captures the interplay and synergy among different groups. Specifically, previous methods like linear scalarization (i.e., assigning a fix weight to each group) are limited to identifying points on the convex envelope of the Pareto front \cite{boyd2004convex}. Our method offers a more adaptable navigation mechanism by dynamically adjusting the weight with the NBS, which accounts for the intricate interactions and negotiations among groups. Moreover, the optimal $\gradupd$ maximizes the utilization of information on validation loss landscape and leads to empirical faster and more robust convergence to the Pareto front even with distant initial points. Although first-order methods typically avoid saddle points \cite{frank1956algorithm, panageas2019first}, if one is encountered, switching to the fairness goal upon unsuccessful bargaining offers a fresh starting point for subsequent bargaining iterations and helps to escape (Figure \ref{fig:alignment}, Appendix \ref{app:additional_result}). Our experiments show that Stage 2 training focused solely on the fairness goal does not deviate the model from the Pareto front (Figure \ref{fig:oursimu}). Specifically, the worst group utility $\groupgrad_i\groupgradagg$ tends to concentrate around zero, ensuring the model stays in the neighborhood of the Pareto front and implying the robustness of our approach. The theoretical understanding of this interesting phenomenon is an open problem for future study.

\section{Evaluation}
\label{sec:evaluation}
We evaluate our method in three key areas: synthetic simulation ($\S$\ref{subsec:pareto}) for Pareto optimality vis-à-vis fairness objectives, real-world fairness datasets ($\S$\ref{subsec:fairness}), and two imbalanced image classification scenarios (Appendix \ref{subsec:image}).

\subsection{Simulation}
\label{subsec:pareto}

Under the synthetic settings ($\S$\ref{sec:problem} and Appendix \ref{app:exp_settings}), we observe the convergence enhancements compared to Figure \ref{fig:problem} and the effect of continuous bargaining throughout the entire training (another case of one-stage).
%
Figure \ref{fig:oursimu} demonstrates that Nash bargaining effectively resolves gradient conflicts and facilitates convergence to the Pareto front in comparison to Figure \ref{fig:problem}. This is evident from the reduced number of non-Pareto-converged nodes in both continuous bargaining (Figure \ref{fig:oursimu}\textcolor{blue}{a}) and early-stage bargaining (the final solution, Figure \ref{fig:oursimu}{\textcolor{blue}{b}). Notice that one-stage NBS doesn't always enhance fairness, evidenced by the observation that nodes at the Pareto front tend to stagnate. The NBS does not leverage any information about specific fairness objectives. Our two-stage approach built on the bargaining steps can further push the model to the ideal endpoints. 

\begin{figure}[!h] 
    \centering
    \includegraphics[width=\linewidth]{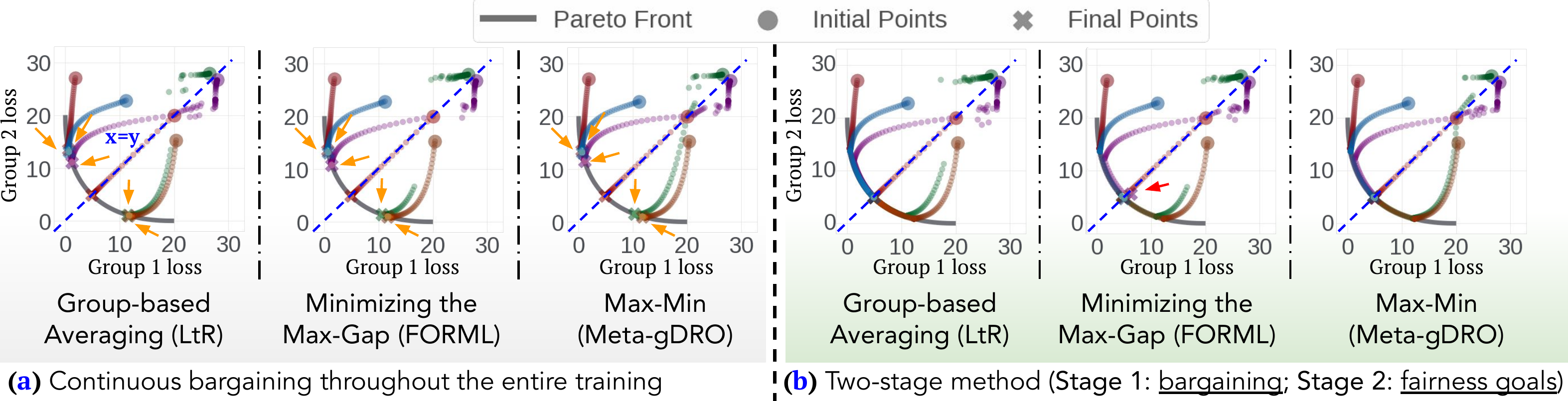}
    \caption{Synthetic illustration of the bargaining's effects. ``\textcolor{myorange}{\ding{220}}'': final point not close to the fairness goal (\textcolor{blue}{\textbf{x=y}}). ``\textcolor{myred}{\ding{220}}'': final point not at the Pareto front. \textbf{(\textcolor{blue}{a})} Bargaining across all 1000 steps; \textbf{(\textcolor{blue}{b})} Bargaining only included in the first 100 steps (two-stage method). 
    }
    \label{fig:oursimu}
\end{figure}

\begin{table*}[h!]
\centering
\resizebox{\linewidth}{!}{
\begin{tabular}{@{}cc|ccccccc@{}}
\toprule
\multicolumn{1}{c|}{\multirow{2}{*}{}}        & \multicolumn{1}{C{6.em}|}{\multirow{2}{*}{\textbf{Baseline}}} & \multicolumn{1}{C{6.em}|}{\multirow{2}{*}{\textbf{DRO}}} & \multicolumn{2}{c|}{\cellcolor[HTML]{e6ddd6}\textbf{LtR}}           & \multicolumn{2}{c|}{\cellcolor[HTML]{ecdcf9}\textbf{FORML}}         & \multicolumn{2}{c}{\cellcolor[HTML]{e7effe}\textbf{Meta-gDRO}} \\ \cmidrule(l){4-9} 
\multicolumn{1}{c|}{}                         & \multicolumn{1}{c|}{}                                   & \multicolumn{1}{c|}{}                              & \multicolumn{1}{C{6.em}}{one-stage}   & \multicolumn{1}{c|}{two-stage (ours)}        & \multicolumn{1}{C{5.em}}{one-stage}   & \multicolumn{1}{c|}{two-stage (ours)}        & \multicolumn{1}{C{6.em}}{one-stage}           & two-stage (ours)              \\

\hline \midrule
\multicolumn{9}{l}{\multirow{2}{*}{\textbf{I. Adult Income}~\cite{misc_adult_2}, (sensitive attribute: \textbf{sex})}}\\ \\  \midrule 

\multicolumn{1}{r|}{\textbf{Overall AUC (↑)}} & \multicolumn{1}{c|}{0.778}                              & \multicolumn{1}{c|}{0.761}                         & \cellcolor[HTML]{e6ddd6}0.830 & \multicolumn{1}{c|}{\cellcolor[HTML]{e6ddd6}0.830 (+.000)} & 0.801 & \multicolumn{1}{c|}{0.810 (+.009)} & 0.810         & \textbf{0.837} (+.027)         \\

\multicolumn{1}{r|}{\textbf{Max-gAUCD (↓)}}   & \multicolumn{1}{c|}{\textbf{0.016}}                              & \multicolumn{1}{c|}{0.029}                         & 0.050 & \multicolumn{1}{c|}{0.047 (-.003)}  & \cellcolor[HTML]{ecdcf9}0.075 & \multicolumn{1}{c|}{\cellcolor[HTML]{ecdcf9}0.031 (-.044)} & 0.052        & 0.046 (-.006)         \\

\multicolumn{1}{r|}{\textbf{Worst-gAUC (↑)}}  & \multicolumn{1}{c|}{0.770}                              & \multicolumn{1}{c|}{0.747}                         & 0.805 & \multicolumn{1}{c|}{0.807 (+.002)} & 0.763 & \multicolumn{1}{c|}{0.795 (+.032)} & \cellcolor[HTML]{e7effe}0.809         & \cellcolor[HTML]{e7effe}\textbf{0.814} (+.006)   \\


\hline \midrule
\multicolumn{9}{l}{\multirow{2}{*}{\textbf{II. Adult Income}~\cite{misc_adult_2}, (sensitive attribute: \textbf{race};
\scalebox{1}{\colorbox[HTML]{FFEEC2}{* one group in training data contains only one positive (favorable) label.}}
)}} \\ \\  \midrule 

\multicolumn{1}{r|}{\textbf{Overall AUC (↑)}} & \multicolumn{1}{c|}{0.668}                              & \multicolumn{1}{c|}{0.652}                         & \cellcolor[HTML]{e6ddd6}0.803 & \multicolumn{1}{c|}{\cellcolor[HTML]{e6ddd6}\textbf{0.805} (+.002)} & 0.710 & \multicolumn{1}{c|}{0.775 (+.065)} & 0.775         & 0.793 (+.018)         \\

\multicolumn{1}{r|}{\textbf{Max-gAUCD (↓)}}   & \multicolumn{1}{c|}{0.225}                              & \multicolumn{1}{c|}{0.236}                         & \textbf{0.090} & \multicolumn{1}{c|}{\textbf{0.090} (-.000)}  & \cellcolor[HTML]{ecdcf9}0.290 & \multicolumn{1}{c|}{\cellcolor[HTML]{ecdcf9}0.134 (-.156)} & 0.163         & 0.158 (-.005)         \\

\multicolumn{1}{r|}{\textbf{Worst-gAUC (↑)}}  & \multicolumn{1}{c|}{0.544}                              & \multicolumn{1}{c|}{0.538}                         & 0.755 & \multicolumn{1}{c|}{\textbf{0.760} (+.005)} & 0.540 & \multicolumn{1}{c|}{0.688 (+.148)} & \cellcolor[HTML]{e7effe}0.694         & \cellcolor[HTML]{e7effe}0.703 (+.009)   \\

\hline \midrule
\multicolumn{9}{l}{\multirow{2}{*}{\textbf{III. Bank Telemarketing}~\cite{moro2014data}, (sensitive attribute: \textbf{age})}}\\ \\  \midrule 

\multicolumn{1}{r|}{\textbf{Overall AUC (↑)}} & \multicolumn{1}{c|}{0.697}                              & \multicolumn{1}{c|}{0.686}                         & \cellcolor[HTML]{e6ddd6}0.724 & \multicolumn{1}{c|}{\cellcolor[HTML]{e6ddd6}0.728 (+.004)} & 0.706 & \multicolumn{1}{c|}{\textbf{0.779} (+.073)} & 0.698         & 0.722 (+.024)         \\

\multicolumn{1}{r|}{\textbf{Max-gAUCD (↓)}}   & \multicolumn{1}{c|}{\textbf{0.013}}                              & \multicolumn{1}{c|}{0.025}                         & 0.099 & \multicolumn{1}{c|}{0.083 (-.016)}  & \cellcolor[HTML]{ecdcf9}0.039 & \multicolumn{1}{c|}{\cellcolor[HTML]{ecdcf9}0.029 (-.010)} & 0.098         & 0.079 (-.019)         \\

\multicolumn{1}{r|}{\textbf{Worst-gAUC (↑)}}  & \multicolumn{1}{c|}{0.691}                              & \multicolumn{1}{c|}{0.691}                         & 0.675 & \multicolumn{1}{c|}{0.686 (+.011)} & 0.686 & \multicolumn{1}{c|}{\textbf{0.764} (+.078)} & \cellcolor[HTML]{e7effe}0.649         & \cellcolor[HTML]{e7effe}0.683 (+.034)   \\


\hline \midrule
\multicolumn{9}{l}{\multirow{2}{*}{\textbf{IV. Credit Default}~\cite{yeh2009comparisons}, (sensitive attribute: \textbf{sex};
\scalebox{1}{\colorbox[HTML]{FFEEC2}{* val data is more noisy than others, see analysis in Table \ref{tab:noisy_test_set}, Appendix \ref{app:additional_result}.}}
)}}\\ \\  \midrule 

\multicolumn{1}{r|}{\textbf{Overall AUC (↑)}} & \multicolumn{1}{c|}{0.634}                              & \multicolumn{1}{c|}{0.624}                         & \cellcolor[HTML]{FFFFFF}0.630 & \multicolumn{1}{c|}{\cellcolor[HTML]{FFFFFF}0.616 (-.014)} & 0.554 & \multicolumn{1}{c|}{0.611 (+.057)} & \textbf{0.682}        & 0.661 (-.021)         \\

\multicolumn{1}{r|}{\textbf{Max-gAUCD (↓)}}   & \multicolumn{1}{c|}{0.024}                              & \multicolumn{1}{c|}{0.022}                         & 0.037 & \multicolumn{1}{c|}{0.025 (-.012)}  & \cellcolor[HTML]{FFFFFF}0.017 & \multicolumn{1}{c|}{\cellcolor[HTML]{FFFFFF}0.033 (+.016)} & \textbf{0.016}         & 0.022 (+.006)         \\

\multicolumn{1}{r|}{\textbf{Worst-gAUC (↑)}}  & \multicolumn{1}{c|}{0.622}                              & \multicolumn{1}{c|}{0.613}                         & 0.612 & \multicolumn{1}{c|}{0.603 (-.009)} & 0.545 & \multicolumn{1}{c|}{0.595 (+.050)} & \cellcolor[HTML]{FFFFFF}\textbf{0.674}         & \cellcolor[HTML]{FFFFFF}0.650 (-.024)   \\


\hline \midrule
\multicolumn{9}{l}{\multirow{2}{*}{\textbf{V. Communities and Crime}~\cite{misc_communities_and_crime_183}, (sensitive attribute: \textbf{blackgt6pct}; 
\scalebox{1}{\colorbox[HTML]{FFEEC2}{* val data is noisy, meanwhile one group in training data are all positive labels}}
)}}\\ \\  \midrule 

\multicolumn{1}{r|}{\textbf{Overall AUC (↑)}} & \multicolumn{1}{c|}{0.525}                              & \multicolumn{1}{c|}{0.568}                         & \cellcolor[HTML]{e6ddd6}0.679 & \multicolumn{1}{c|}{\cellcolor[HTML]{e6ddd6}\textbf{0.700} (+.021)} & 0.554 & \multicolumn{1}{c|}{0.568 (+.014)} & 0.686        & 0.686 (+.000)         \\

\multicolumn{1}{r|}{\textbf{Max-gAUCD (↓)}}   & \multicolumn{1}{c|}{\textbf{0.050}}                              & \multicolumn{1}{c|}{0.136}                         & 0.071 & \multicolumn{1}{c|}{0.129 (+.058)}  & \cellcolor[HTML]{FFFFFF}0.107 & \multicolumn{1}{c|}{\cellcolor[HTML]{FFFFFF}0.136 (+.029)} & 0.114         & 0.114 (-.000)         \\

\multicolumn{1}{r|}{\textbf{Worst-gAUC (↑)}}  & \multicolumn{1}{c|}{0.500}                              & \multicolumn{1}{c|}{0.500}                         & \textbf{0.643} & \multicolumn{1}{c|}{0.636 (-.007)} & 0.500 & \multicolumn{1}{c|}{0.500 (+.000)} & \cellcolor[HTML]{e7effe}0.629         & \cellcolor[HTML]{e7effe}0.629 (+.000)   \\


\hline \midrule
\multicolumn{9}{l}{\multirow{2}{*}{\textbf{VI. Titanic Survival}~\cite{titanic}, (sensitive attribute: \textbf{sex})}}\\ \\ \midrule 

\multicolumn{1}{r|}{\textbf{Overall AUC (↑)}} & \multicolumn{1}{c|}{0.972}                              & \multicolumn{1}{c|}{\textbf{0.983}}                         & \cellcolor[HTML]{e6ddd6}0.967 & \multicolumn{1}{c|}{\cellcolor[HTML]{e6ddd6}0.978 (+.011)} & 0.950 & \multicolumn{1}{c|}{0.972 (+.022)} & 0.961        & 0.972 (+.011)         \\

\multicolumn{1}{r|}{\textbf{Max-gAUCD (↓)}}   & \multicolumn{1}{c|}{0.056}                              & \multicolumn{1}{c|}{0.033}                         & 0.044 & \multicolumn{1}{c|}{0.044 (-0.00)}  & \cellcolor[HTML]{ecdcf9}0.033 & \multicolumn{1}{c|}{\cellcolor[HTML]{ecdcf9}\textbf{0.011} (-.022)} & 0.033        & 0.033 (-.000)         \\

\multicolumn{1}{r|}{\textbf{Worst-gAUC (↑)}}  & \multicolumn{1}{c|}{0.944}                              & \multicolumn{1}{c|}{\textbf{0.967}}                         & 0.944 & \multicolumn{1}{c|}{0.956 (+.012)} & 0.933 & \multicolumn{1}{c|}{\textbf{0.967} (+.034)} & \cellcolor[HTML]{e7effe}0.944         & \cellcolor[HTML]{e7effe}0.956 (+.012)   \\


\hline \midrule
\multicolumn{9}{l}{
\multirow{2}{*}{
\textbf{VII. Student Performance}~\cite{misc_student_performance_320}, (sensitive attribute: \textbf{sex})
}}\\ \\ \midrule

\multicolumn{1}{r|}{\textbf{Overall AUC (↑)}} & \multicolumn{1}{c|}{0.784}                              & \multicolumn{1}{c|}{0.816}                         & \cellcolor[HTML]{e6ddd6}0.900 & \multicolumn{1}{c|}{\cellcolor[HTML]{e6ddd6}0.900 (+.000)} & 0.828 & \multicolumn{1}{c|}{0.822 (-.006)} & 0.909         & \textbf{0.912} (+.003)         \\

\multicolumn{1}{r|}{\textbf{Max-gAUCD (↓)}}   & \multicolumn{1}{c|}{0.119}                              & \multicolumn{1}{c|}{0.106}                         & \textbf{0.013} & \multicolumn{1}{c|}{0.037 (+.024)}  & \cellcolor[HTML]{ecdcf9}0.056 & \multicolumn{1}{c|}{\cellcolor[HTML]{ecdcf9}0.031 (-.025)} & 0.031        & 0.025 (-.006)         \\

\multicolumn{1}{r|}{\textbf{Worst-gAUC (↑)}}  & \multicolumn{1}{c|}{0.725}                              & \multicolumn{1}{c|}{0.762}                         & 0.894 & \multicolumn{1}{c|}{0.881 (+.013)} & 0.800 & \multicolumn{1}{c|}{0.806 (+.006)} & \cellcolor[HTML]{e7effe}0.894         & \cellcolor[HTML]{e7effe}\textbf{0.900} (+.006)   \\

\bottomrule
\end{tabular}}
\caption{Comparison on standard fairness datasets (averaged from 5 runs). Each of \{\scalebox{0.8}{\colorbox[HTML]{e6ddd6}{LtR}}, \scalebox{0.8}{\colorbox[HTML]{ecdcf9}{FORML}}, \scalebox{0.8}{\colorbox[HTML]{e7effe}{Meta-gDRO}}\} is paired with \{\scalebox{0.8}{\colorbox[HTML]{e6ddd6}{Overall AUC}}, \scalebox{0.8}{\colorbox[HTML]{ecdcf9}{Max-gAUCD}}, \scalebox{0.8}{\colorbox[HTML]{e7effe}{Worst-gAUC}}\} rsp. for aligning intended fairness goals. Top results of each row in \textbf{bold}.
}
\label{table:fairness}
\end{table*}

\subsection{Standard fairness benchmarks}
\label{subsec:fairness}
We test our method on six standard fairness datasets across various sectors of fairness tasks: financial services (adult income \cite{misc_adult_2}, credit default \cite{yeh2009comparisons}), marketing (bank telemarketing \cite{moro2014data}), criminal justice (communities and crime \cite{misc_communities_and_crime_183}), education (student performance \cite{misc_student_performance_320}), and disaster response (Titanic survival \cite{titanic}). 
%
Test sets comprise 3\% of each dataset (10\% for the student performance dataset with 649 samples) by randomly selecting a demographically and label-balanced subset. See Table \ref{tab:data_summary} in Appendix \ref{app:exp_settings} for data distribution specifics.

\textbf{General settings and metrics.}
We compare our two-stage Nash-Meta-Learning with traditional one-stage fairness-aware meta-learning (i.e., LtR, FORML, Meta-gDRO), baseline training, and Distributional Robust Optimization (DRO) \cite{hashimoto2018fairness}. All methods share the same model architecture and training hyperparameters on each dataset. Our approach features a 15-epoch bargaining phase within the total 50 epochs. See Appendix \ref{app:exp_settings} for training details. Three metrics from $\S$\ref{sec:problem} are used: \scalebox{0.8}{\colorbox[HTML]{e6ddd6}{\textbf{Overall AUC ($\uparrow$)}}}, \scalebox{0.8}{\colorbox[HTML]{ecdcf9}{\textbf{Max-gAUCD ($\downarrow$)}}}, and \scalebox{0.8}{\colorbox[HTML]{e7effe}{\textbf{Worst-gAUC ($\uparrow$)}}}, corresponding to the goal of LtR, FORML, and Meta-gDRO, respectively.

\textbf{Results and analysis.}
Our NBS-enhanced two-stage meta-learning improves the overall performance, fairness objectives (in color), and stability, as in Table \ref{table:fairness} and with 95\% CI in Table \ref{table:fairness_CI}, Appendix \ref{app:additional_result}.  For instance, with our bargaining steps, FORML's Overall AUC rose by 10.34\% (from 0.706 to 0.779), with a tighter 95\% CI (from 0.202 to 0.054), and the Max-gAUCD decreased by 26\% (from 0.039 to 0.029), with a tighter CI (from 0.046 to 0.018). 
However, our method faced challenges in two datasets: Credit Default, where performance and fairness occasionally declined, and Communities and Crime, where minimal improvement was observed (in particular, the Meta-gDRO). We diagnose that these two dataset's validation set contains low feature-label mutual information, leading to noisy outcomes (Appendix \ref{app:additional_result}) and affecting most tested methods (e.g., LtR, FORML). Additionally, for Communities and Crime, our method is influenced by the low bargaining success/feasible rate, possible due to the lack of favorable (positive) training samples for the minority groups (Table \ref{tab:data_summary}, Appendix \ref{app:exp_settings}). Conversely, our method still yields the anticipated bargaining results on the adult income dataset with only one positive \texttt{Amer-Indian} sample. These insights emphasize the importance of validation set quality and representative samples in the training.


\begin{wrapfigure}{r}{0.49\textwidth}
    \centering
    \includegraphics[width=\linewidth]{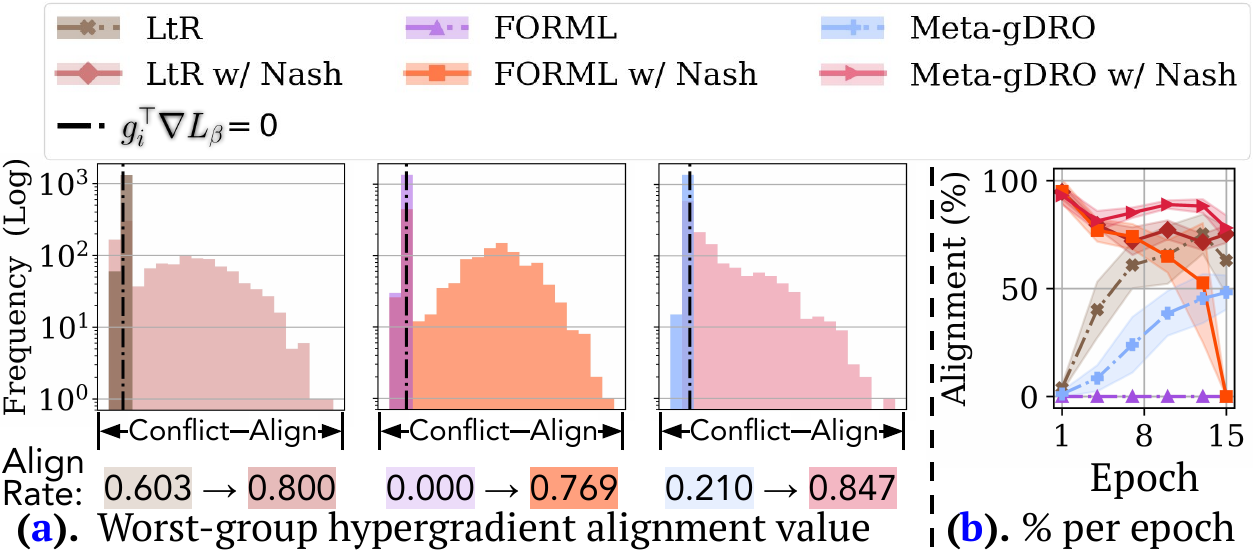}
    \caption{Effects on hypergradient alignment (Bank Telemarketing). \textbf{(\textcolor{blue}{a})} Smallest $\groupgrad_i^\top \nabla \loss_\beta$. Portion of positive values (Align. Rate) at the bottom. \textbf{(\textcolor{blue}{b})} Hypergradient alignment rate per epoch.}
    \label{fig:telemark}
\end{wrapfigure}
\textbf{Effects of bargaining on hypergradient conflicts.}
Bargaining enhances gradient alignment to varying degrees among different one-stage algorithms (Figure \ref{fig:telemark}, \ref{fig:all_uap}). For instance, LtR's alignment rate improves from 60\% to 80\%, and FORML jumps from 0 to 76.9\% accompanied with more substantial performance and fairness gains on Bank Telemarketing (Figure \ref{fig:telemark}). FORML consistently benefits more from bargaining compared to the other two, likely due to its optimization goal that could intensify hypergradient conflicts.
Moreover, our approach uniformly promotes hypergradient alignment during Stage 1 (Figure \ref{fig:all_uap}). We show that early bargaining and hypergradient conflict resolution are crucial for enhancing model performance and fairness. Further illustrations and analyses are in Appendix \ref{app:additional_result}.





\section{Discussion \& Conclusion}
\label{sec:discussion}

In conclusion, our study offers several key insights: We identified \textit{hypergradient conflict} as a pivotal issue undermining stable performance in one-stage fairness-aware meta-learning algorithms. To mitigate this, we proposed a two-stage framework that initially employs the NBS to resolve these conflicts, and then optimizes for fairness. Our assumption-free proof of the NBS extended its applicability to a broader range of gradient aggregation problems. Empirical results demonstrated our method's superior performance across synthetic scenarios, seven real-world fairness settings on six key fairness datasets, and two image classification tasks. 

\textbf{Future directions.}
First, addressing the absence of a specific label in the training subgroup and the low quality of the validation set that affected our method's effectiveness may be mitigated by fairness-aware synthetic data \cite{van2021decaf} or data-sifting methods \cite{zeng2023meta}. Pairing our method with these methods and exploring more resilient solutions adapted to these extreme cases can be a promising direction.
Second, Theorem \ref{thm:monotone_val} establishes the validity to switch the choice of $\fairnessaggweight$ during training. Future work can focus on designing and flexibly choosing outer optimization goals to further improve performance, fairness, or other metrics in interest.
Third, we derive the NBS under $\dispoint=0$ for conflict resolution. Future study could investigate general $\dispoint \neq 0$, which might not be useful for conflict resolution (the scope of our paper), but could be used in other cases as a gradient aggregation method that gains advantages from axiomatic properties, as discussed in Appendix~\ref{app:setup}.

\section*{Acknowledgments}
\label{sec:ack}
We acknowledge Aram H. Markosyan and Vitor Albiero at Meta AI for their insightful discussions.
Ruoxi Jia and the ReDS lab acknowledge the support from the National Science Foundation under Grant No. IIS-2312794.
Michael Jordan wishes to acknowledge support from the European Union (ERC-2022-SYG-OCEAN-101071601).


\bibliographystyle{plain}
\bibliography{allbib,fair}
\clearpage

\newpage
\appendix
\onecolumn
\section{Appendix}
In this section, we first provide background and related work ($\S$\ref{app:related_work}), along with context on the motivation of choosing NBS ($\S$\ref{app:discussion}). 
Then, we give formal definitions of some preliminaries mentioned in the main text ($\S$\ref{app:prelim}). We supplement discussions on the Bargaining game setup and assumptions ($\S$\ref{app:setup}), followed by the full proofs of corollaries and theorems ($\S$\ref{app:proof}). We then provide detailed experimental settings ($\S$\ref{app:exp_settings}) and the additional results and analysis on with 95\% CI, test noise, gradient conflict resolution, and experiments on the imbalanced image classification task ($\S$\ref{app:additional_result}). Finally, we discuss the broader impact and limitations ($\S$\ref{app:broader_impact}). 
We also supplement our code in \texttt{clean\_code.zip}, which will be made publicly available on GitHub after the anonymous period.

\subsection{Background \& Related Work}
\label{app:related_work}

\paragraph{Bargaining game in ML.} Bargaining has been broadly applied across various ML contexts such as multi-task learning \cite{navon2022multi}, multi-agent reinforcement learning \cite{stimpson2003learning, qiao2006multi, chang2020multi, schmid2020multi}, multi-armed bandits \cite{baek2021fair}, feature selection \cite{keshanian2022features}, and Bayesian optimization \cite{binois2020kalai}. 
For bargaining-related gradient aggregation, a key distinction between our work and prior research is our provision of a proof for the NBS that does not rely on the linear independence assumption, and we further extend from the basic gradient descent setting to bi-level fairness-aware meta-learning \cite{navon2022multi}. In the context of meta-learning for fairness, we critically examine and build upon the work of \cite{ren2018learning}, as detailed in $\S$\ref{sec:problem}. To the best of our knowledge, we are the first work to incorporate Nash bargaining into fairness-aware meta-learning.

\paragraph{Bias mitigation.} Traditional bias mitigation methods such as relabeling \cite{luong2011k}, resampling \cite{iosifidis2019fae}, or reweighting \cite{kamiran2012data} hinge upon the availability of sensitive attributes in training data. However, such attributes can often be inaccurate, incomplete, or entirely unavailable due to privacy and ethical concerns in the collection process \cite{ashurst2023fairness}. In response, researchers have developed two lines of proxy-based strategies over the past decades. The first line of work adopts indirect features associated with the sensitive feature of interest, such as zip codes for ethnicity \cite{datta2017proxy} or sound pitch for gender \cite{kum2022pseudo}. Despite circumventing the requirement of sensitive attributes in training data, the effectiveness of these proxies critically hinges on their correlation with the actual sensitive attributes. The second line of work aims at aligning with the Max-Min fairness principle and uses the worst-performing samples as the proxy of the most disadvantaged groups \cite{rawls2001justice, hashimoto2018fairness,lahoti2020fairness,chai2022self}. Challenges also arise in the potential bias toward mislabeled data when targeting worst-performing samples \cite{wan2023processing}. Most importantly, with the proxy group only aligned with Max-Min fairness, these lines of mitigation cannot be generalized or adapted to help other fairness notions.

\paragraph{Scope and Notions of Fairness.} Our exploration of fairness is specifically tailored to group-level notions, as fairness-aware meta-learning inherently relies on a representative set of samples organized by group information for model updates. While Max-Min fairness was chosen as a common evaluation metric, our paper also encompasses two additional mainstream fairness notions, including demographic parity \cite{yan2022forml} and average performance across equally represented groups \cite{ren2018learning}, both used as one-stage baselines (Section~\ref{sec:problem}). Our proposed two-stage method is also evaluated on its effectiveness in reducing disparity and improving the overall performance compared to these two baselines, respectively, as shown in Table~\ref{table:fairness}.

\paragraph{Max-Min Fairness and Rawlsian Justice.}
Group-level Max-Min fairness is a concept originated from Rawl's definition of fairness or, equivalently, justice \cite{rawls2001justice}. Rawls defines the least advantaged group by primary goods with objective characteristics, which are independent of specific predictors. The Difference Principle in Rawlsian Justice requires that the existing mechanism always contributes to the least advantaged group. While Rawlsian Justice has been extended to specific utilities in the context of ML group-level fairness \cite{hashimoto2018fairness,lahoti2020fairness,wan2023processing}, the worst-performing group in ML fairness may vary over epochs and depend on the optimization status, unlike the least advantaged group in the original Rawlsian context. Consequently, the group-level Max-Min fairness approach in ML may not necessarily create a safety net for the least advantaged group as the original Rawlsian Justice intends. Rather, the ML group-level Max-Min fairness provides a dynamic optimization strategy that maximizes the minimum performance across all groups at each epoch. This approach ensures that the worst-performing group, which may change over time, is prioritized during the optimization process. While this interpretation of fairness differs from the original Rawlsian context, it remains a valuable technique for promoting equity in ML systems by preventing any group from being consistently disadvantaged.

\subsection{Additional Discussions}
\label{app:discussion}

\textbf{Why NBS for hypergradient conflict resolution? -- A game theory perspective.} 
Game theory offers two main categories: cooperative and noncooperative games. Cooperative games involve players forming alliances to achieve common objectives, while noncooperative games (e.g., Nash equilibrium \cite{nash1950equilibrium}) focus on players acting independently without contracts. Our problem aligns better with the cooperative category, where players collaborate to maximize their collective gains from the proposed hypergradient update. Cooperative bargaining, a subset of cooperative games, studies how players with distinct interests negotiate to reach mutually beneficial agreements, directly corresponding to our goal of conflict resolution. Among cooperative bargaining solutions, the NBS stands out for its general applicability, robustness \cite{nash1953two}, and unique solution satisfying desirable axioms including Pareto Optimality, Symmetry, Invariance to Affine Transformations, and Independence of Irrelevant Alternatives \cite{nash1950bargaining}. These properties make NBS an attractive choice for resolving hypergradient conflicts in a principled and fair manner.

\begin{figure}[!h] 
    \centering
    \includegraphics[width=\linewidth]{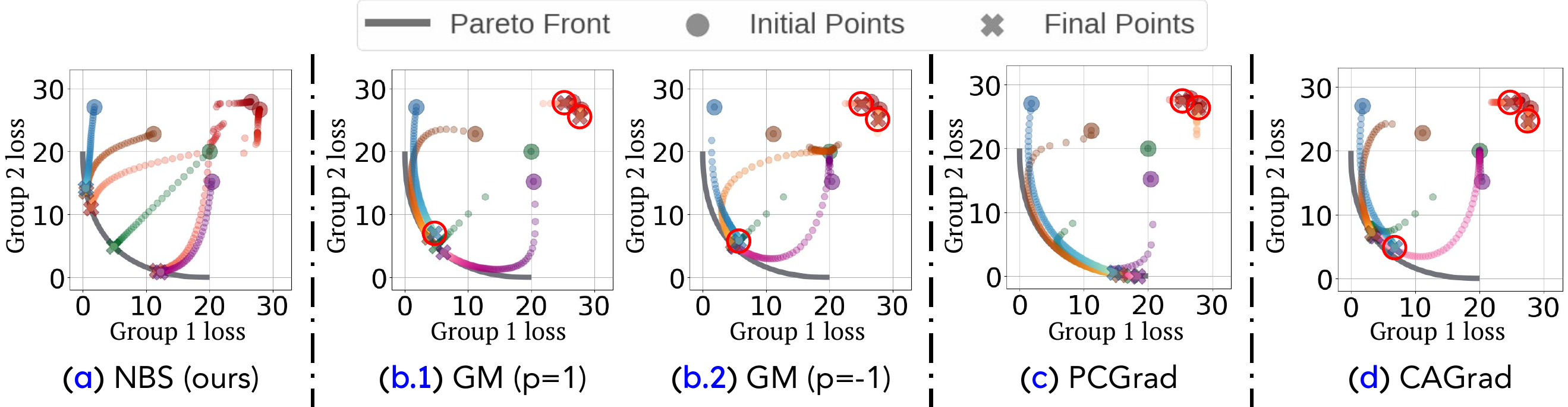}
    \caption{Synthetic illustration of each gradient aggregation method in resolving gradient conflicts and their implication in converging to Pareto front. Red circles imply the nodes that cannot converge to the Pareto front after 1000 steps of updates.
    }
    \label{fig:comparison}
\end{figure}

\textbf{Why NBS for hypergradient conflict resolution? -- An empirical comparison.} 
NBS has a unique ability to find a balanced solution on the Pareto front where no participant can unilaterally improve their position without others' agreement, which cannot be achieved by a standard way of linear scalarization (e.g., Example 2.27 in \cite{boyd2004convex}, and \cite{hu2024revisiting}) or other gradient aggregation methods like Generalized Mean (GM) \cite{bullen2013handbook}. Learning with NBS effectively steers the model toward the Pareto frontier during early training stages, which is the foundation of our proposed second stage (in our ``two-stage'' fairness-aware meta-learning) to achieve fairness goals without compromising model performance (the efficacy is illustrated in Figure \ref{fig:oursimu}). Empirically comparing optimization trajectories of NBS and other baselines of gradient aggregation using the same settings in Section \ref{sec:problem}: GM, PCGrad \cite{yu2020gradient}, and CAGrad \cite{liu2021conflict} (Figure \ref{fig:comparison}), we find that these methods, except NBS, often favor groups with larger loss magnitudes, resulting in inefficient convergence towards the Pareto frontier. In contrast, NBS, with its Pareto Optimality axiom, guides all nodes towards efficient convergence by considering the relative importance of different objectives and maximizing joint gains while avoiding favoring large loss value groups at the expense of others' utility.

\subsection{Preliminaries} 
\label{app:prelim}
We give a formal definition of Pareto Optimal and related terms, followed by the details of the four axioms of the NBS.

We write $x \succeq y$ as $x_i \geq y_i$ for all entry $i$ for vector $x, y$, and $x \succ y$ if $x \succeq y$ and $x \neq y$. We use $\|\cdot\|$ to denote $\ell^2$-norm.

\textbf{Pareto optimality}. Consider a set of function $\funcf_1, \ldots, \funcf_\groupcnt$ we want to minimize using some parameter $\param \in \paramspace$. Let vector valued function $\funcf(\param) = [\funcf_1(\param) \ldots \funcf_\groupcnt(\param)]$. $\param$ is Pareto optimal if for all other $\param' \in \paramspace$ satisfies $\funcf(\param') \succeq \funcf(\param)$. That is, no other $\param' \in \paramspace$ satisfies $\funcf(\param') \prec \funcf(\param)$, meaning that objectives cannot be jointly improved without sacrificing any of them.

\textbf{Pareto Front}. The set of Pareto Optimal models forms the Pareto front. There is no preference of the models in the Pareto front, unless with extra customized criteria (such as fairness) introduced.

\textbf{Pareto Improvement}. $\param'$ is an Pareto improvement to $\param$ if $\funcf(\param') \prec \funcf(\param)$.

\textbf{The four axioms of the NBS:}
\begin{enumerate}
    \item \textbf{Pareto Optimality:} The solution is Pareto Optimal in $\feasresult$. For solution $x \in \feasresult$, it does no exist $y \succ x$ for $y \in \feasresult, y \neq x$. 
    \item \textbf{Symmetry:} The solution is invariant to players' order.
    \item \textbf{Independence of Irrelevant Alternatives:} The solution retains upon superset expansion if it remains in the original set. That is, if we expand the feasible set $\feasresult$ and we know the solution stays in $\feasresult$, then the solution stays the same.
    \item \textbf{Invariant to Affine Transformations:} If we apply affine transformations to the utilities, the new solution is the original solution with transformed utilities accordingly. Consider affine transformations $\funcg_1, \ldots, \funcg_\groupcnt$. The original solution has utility payoffs $[x_1, \ldots, x_\groupcnt]$. If we transform the utilities from $\utility_i$ to $\funcg \circ \utility_i$, the new solution has payoffs $[\funcg_1(x_1), \ldots, \funcg_\groupcnt(x_\groupcnt)]$. 
\end{enumerate}

\subsection{Problem Setup \& Assumptions} \label{app:setup}

\paragraph{Additional assumptions.}
Assume that $\feasresult$ is convex and compact, and that there exists an $x \in \feasresult$ such that $x_i>\disresult_i$ for all players. We also assume that all players have complete information over the game parameterized as $(\feasresult, \disresult)$. Assume without loss of generality $ \groupgrad_i \neq 0$ for $i \in [\groupcnt]$.

\paragraph{Definition of $\alternative$.}

Recall that in Section~\ref{sec:method} we define $\alternative = \{ \nabla L:  \nabla L \in \ball, \nabla L_\alpha^\top \groupgrad_i - \dispoint^\top \groupgrad_i > 0,  \forall i \in [K] \}$ with $\dispoint = 0$. As a result, $\alternative \subset \ball$ and $\alternative$'s outer boundary lies on the boundary of $\ball$. For example, in 2D case, $\alternative$ is a set of circular sectors. A natural question is what happens when considering solutions in $\ball$. First note that we cannot directly take logarithm to obtain Equation~\ref{eq:utility_log_sum} as some $\nabla L_\alpha^\top \groupgrad_i - \dispoint^\top \groupgrad_i$ are non-positive. Yet, an alternative way is to adjust assumptions on $\dispoint$ such that $\nabla L_\alpha^\top \groupgrad_i - \dispoint^\top \groupgrad_i > 0, i \in [\groupcnt]$. For example, a way to guarantee the feasibility of $A$ is to set $\dispoint = \arg\min_{x \in B_\epsilon} x^\top g_i$ for $i \in [K]$ such that $(\nabla L_\alpha - d)^\top g_i > 0$ for all $i \in [K]$. Here, the feasible region can always be $B_\epsilon$. Assumptions on $\dispoint$ will be discussed later in the section. Under current assumption $\dispoint=0$, we can use the axiom of Independence of Irrelevant Alternatives to deduce a result on extending $A$ to $\ball$: if the new NBS stays in $A$, then it remains unchanged. This would serve as a theoretical guarantee if one wants to make assumptions on the solution under the constraint set $B_\epsilon$ instead. 
\paragraph{Feasibility of $\alternative$.}
Another assumption is that $\alternative$ is feasible. If $A$ is empty, then we cannot construct the Bargaining game because it doesn’t satisfy Nash's assumption that there always exists one payoff $x$ in the feasible payoff set such that $x$ is better than $d$ for each player. In this case, we use the default fairness protocol $\beta_0$ instead of the improved bargained outcome $\alpha$ as in Algorithm~\ref{alg:training}. Experiment results show that our mechanism can handle this situation adaptively. In fact, switching to the fairness protocol upon unsuccessful construction of bargaining offers a fresh starting point for subsequent bargaining iterations and helps to escape the saddle points (Figure~\ref{fig:alignment}, also discussed in Section~\ref{subsec:nash_meta_dynamics}).

\paragraph{Assumption $\dispoint = 0$.} We assume disagreement payoff $\dispoint = 0$. This is to address the hypergradient conflict such that the NBS satisfies $\nabla L_\alpha^\top g_i > 0, i \in [\groupcnt]$. We now discuss why we don’t manually adjust $\dispoint$ to make the bargaining problem feasible: Relaxing the constraint set will sacrifice the gradient alignment guarantee and deviate from our focus of conflict resolution. For $\dispoint \neq 0$, we are unsure about whether it would help resolve hypergradient conflict. This is because it may flip the sign of some terms, which might result in some $\nabla L_\alpha^\top g_i < 0$ and instead cause hypergradient conflict. 

\paragraph{General $\dispoint$.} Though general $\dispoint$ may not be suitable for conflict resolution (the problem identified in our paper), it may be useful for other scenarios as a gradient aggregation method (which is out of the scope of this work, but can be a future direction). Here, we include a discussion for completeness: If $A$ is feasible under general $\dispoint$, then we are able to get $\sum_i \log{(\nabla L_\alpha - \dispoint)^\top g_i}$ because each term is positive. The derivative w.r.t. $\nabla L_\alpha$ remains $\sum_i \frac{1}{(\nabla L_\alpha - \dispoint)^\top g_i} g_i$. Though closed-form solution may not be guaranteed because of the applicability of the tangent slope argument of Equation~\ref{eq:solve_for_upd} (that depends on the shape of $A$), but it can be solved as an optimization problem. Note that this may give a very different set coefficient because it is measured in terms of the improvement on $\dispoint$.  

\subsection{Proofs}
\label{app:proof}
\newtheorem*{theorem*}{Theorem}
\newtheorem*{corollary*}{Corollary}

\textbf{Proof of Theorem} \ref{thm:main_derivative}:

\begin{theorem*}
     Under $\dispoint=0$, $\arg\max_{\gradupd \in \alternative} \prod_{i\in[\groupcnt]} (u_i({\gradupd}) - d_i)$ is achieved at
    \begin{align}
    \sum_{i\in[\groupcnt]} \frac{1}{{\gradupd}^\top \groupgrad_i} g_i &= \const \gradupd, \quad \text{for some } \const > 0. 
    \end{align}
\end{theorem*}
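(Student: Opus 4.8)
The plan is to follow the technique of Claim~3.1 in \cite{navon2022multi}, adapting it to our utilities. Because the disagreement point is $\dispoint=0$, each disagreement payoff is $d_i = u_i(\dispoint) = \groupgrad_i^\top \mathbf{0} = 0$, so the Nash product reduces to $\prod_{i\in[\groupcnt]}(u_i(\gradupd)-d_i) = \prod_{i\in[\groupcnt]} \groupgrad_i^\top \gradupd$. By the very definition of the agreement set $\alternative$, every factor $\groupgrad_i^\top \gradupd$ is strictly positive for $\gradupd \in \alternative$. Hence the product is positive and maximizing it is equivalent to maximizing the log-product $\sum_{i\in[\groupcnt]} \log(\groupgrad_i^\top \gradupd)$, a concave objective (a sum of compositions of the concave $\log$ with linear maps). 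First I would establish this reduction.

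Next I would locate the maximizer. The key structural observation is that the log-objective increases along rays emanating from the origin: replacing $\gradupd$ by $t\gradupd$ with $t>1$ adds $\groupcnt \log t > 0$ to the objective, while the conic constraints $\groupgrad_i^\top \gradupd > 0$ are scale-invariant. Consequently, given any feasible point with $\lVert \gradupd \rVert < \radius$, rescaling it to the sphere $\lVert \gradupd \rVert = \radius$ stays feasible and strictly increases the objective, so the maximum is attained on the outer boundary $\partial \ball$. Moreover the log-barrier sends the objective to $-\infty$ as $\gradupd$ approaches any face $\{\groupgrad_i^\top \gradupd = 0\}$, so the optimizer lies strictly inside the cone and those inequality constraints are inactive; existence then follows by continuity on the compact closure of $\alternative$ together with this $-\infty$ behavior on the conic boundary. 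Thus the only active constraint is $\lVert \gradupd \rVert^2 = \radius^2$.

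With a single active equality constraint I would invoke the method of Lagrange multipliers; the constraint gradient $2\gradupd$ is nonzero on the sphere, so the constraint qualification holds. Writing the stationarity condition for the Lagrangian $\sum_{i} \log(\groupgrad_i^\top \gradupd) - \tfrac{\const}{2}(\lVert\gradupd\rVert^2 - \radius^2)$ gives
\begin{align}
\sum_{i\in[\groupcnt]} \frac{1}{\gradupd^\top \groupgrad_i}\, \groupgrad_i = \const\, \gradupd,
\end{align}
which is exactly the claimed identity. To pin down the sign of $\const$, I would take the inner product of both sides with $\gradupd$: the left side simplifies to $\sum_i (\gradupd^\top\groupgrad_i)/(\gradupd^\top\groupgrad_i) = \groupcnt$, while the right side equals $\const\,\lVert\gradupd\rVert^2 = \const\,\radius^2$, yielding $\const = \groupcnt/\radius^2 > 0$.

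I expect the main obstacle to be the boundary analysis in the second step rather than the calculus in the third: one must argue carefully that the supremum is attained and that it occurs on the sphere with all conic constraints strictly satisfied, since $\alternative$ is not closed and a naive Lagrangian including the inequality constraints would require separately verifying that they are inactive. The radial-monotonicity and log-barrier arguments handle exactly this, ensuring that the problem collapses to the clean equality-constrained stationarity condition above.
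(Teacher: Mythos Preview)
Your proposal is correct and follows essentially the same route as the paper: pass to the log-product by positivity on $\alternative$, use radial monotonicity to push the optimizer onto $\partial\ball$, and read off the stationarity condition that the gradient of the log-objective is parallel to $\gradupd$. Your treatment is in fact a bit more careful than the paper's---you explicitly justify that the conic constraints are inactive via the log-barrier and you pin down $\const=\groupcnt/\radius^2>0$ by taking the inner product with $\gradupd$, whereas the paper simply asserts $\const>0$---but these are refinements of the same argument rather than a different approach.
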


\begin{proof}
     We follow the same steps in Claim 3.1 of \cite{navon2022multi}, along with additional  characterization on the shape of $\alternative$ under $\dispoint=0$. Note that this theorem may not work under general $\dispoint$.
     
     By the positivity of each term in $\alternative$, this is equivalent to maximizing the summation of logarithms:
    \begin{align} \label{eq:utility_log_sum}
    \arg\max_{\gradupd \in \ball} \sum_{i\in[\groupcnt]} \log({\gradupd}^\top \groupgrad_i) 
    \end{align} 

    Taking the derivative of the objective w.r.t. $\gradupd$ gives $ \sum_{i\in \groupcnt} \frac{1}{{\gradupd}^\top \groupgrad_i} \groupgrad_i$. For any $i \in [\groupcnt]$, we know $x^\top \groupgrad_i > 0$ if and only if $(c x)^\top \groupgrad >0$ with any given $c>0$. This means that if a point is in $\alternative$, then all points in its radial direction in $\ball$ are in $\alternative$ (i.e. the boundary of $\alternative$ is a subset of the boundary of $\ball$). By the Pareto Optimality, the optimal solution must lie on the boundary of $\ball$ as the utility is monotonically increasing in $\lVert \gradupd \rVert$ for $ \gradupd \in \alternative$.
    The optimal points on the boundary of $\ball$ have tangent slope 0 (i.e. gradient having the same direction as its normal). Hence, we know the normal is in parallel to $\gradupd$ and the desired $\gradupd$ satisfies
    \begin{align}
    \sum_{i\in[\groupcnt]} \frac{1}{{\gradupd}^\top \groupgrad_i} \groupgrad_i &= \const \gradupd, \quad \text{for some } \const > 0.
    \end{align}
\end{proof}

\textbf{Proof of Theorem} \ref{thm:solving_grad}:
\begin{theorem*} 
The solution to Equation \ref{eq:solve_for_upd} is (up to scaling) $\gradupd = \sum_{i\in \groupcnt} \gcoefunknown_i \groupgrad_i$ where 
\begin{align}
 \allgrad^\top \allgrad \gcoefunknown = \frac{1}{\gcoefunknown} 
\end{align}
with the element-wise reciprocal $\frac{1}{\gcoefunknown}$. 
\end{theorem*}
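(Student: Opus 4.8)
The plan is to extract the structural form of $\gradupd$ directly from the stationarity condition in Equation~\ref{eq:solve_for_upd} and then pin down the coefficients by projecting onto the $\groupgrad_j$'s. Writing $x = \gradupd$ and dividing Equation~\ref{eq:solve_for_upd} by $\const > 0$, I would first observe
\[
x = \frac{1}{\const}\sum_{i\in[\groupcnt]} \frac{1}{x^\top \groupgrad_i}\,\groupgrad_i ,
\]
so $x$ is automatically a linear combination of the columns of $\allgrad$, with coefficients $\frac{1}{\const}(x^\top \groupgrad_i)^{-1}$ that are strictly positive because $x \in \alternative$ forces $x^\top \groupgrad_i > 0$ under $\dispoint = 0$. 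This already yields the claimed form $\gradupd \propto \sum_i \gcoefunknown_i \groupgrad_i = \allgrad\gcoefunknown$ with $\gcoefunknown \in \mathbb{R}_+^{\groupcnt}$, and it remains only to identify the equation governing $\gcoefunknown$.

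Here I would pause to flag the one delicate point, which is exactly where the departure from \cite{navon2022multi} occurs. The naive move is to ``read off'' $\gcoefunknown_i = (x^\top \groupgrad_i)^{-1}$ by matching the coefficient of each $\groupgrad_i$ on the two sides of the displayed identity. This is valid only when the $\groupgrad_i$ are linearly independent; without that assumption the representation $x = \allgrad\gcoefunknown$ is not unique and the coefficients are not determined termwise. To avoid the assumption entirely, I would not match coefficients in the (possibly degenerate) set $\{\groupgrad_i\}$ but instead left-multiply the whole of Equation~\ref{eq:solve_for_upd} by $\groupgrad_j^\top$.

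Carrying this out gives, for every $j \in [\groupcnt]$,
\[
\sum_{i\in[\groupcnt]} \frac{1}{x^\top \groupgrad_i}\,(\groupgrad_i^\top \groupgrad_j) = \const\,(x^\top \groupgrad_j).
\]
Setting $\gcoefunknown_i := (x^\top \groupgrad_i)^{-1}$, so that $x^\top \groupgrad_j = \gcoefunknown_j^{-1}$, the left side becomes $\groupgrad_j^\top \sum_i \gcoefunknown_i \groupgrad_i = (\allgrad^\top \allgrad \gcoefunknown)_j$ and the right side becomes $\const\,\gcoefunknown_j^{-1}$. Assembling the $\groupcnt$ scalar equations produces $\allgrad^\top\allgrad\gcoefunknown = \const/\gcoefunknown$ with the element-wise reciprocal; replacing $\gcoefunknown$ by $\gcoefunknown/\sqrt{\const}$ (and $x$ correspondingly) absorbs $\const$ and gives the normalized relation $\allgrad^\top\allgrad\gcoefunknown = 1/\gcoefunknown$, matching the statement and its ``up to scaling'' qualifier.

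The main obstacle is conceptual rather than computational: recognizing that termwise coefficient-matching is illegitimate once linear independence is dropped, and that left-multiplication by $\allgrad^\top$ is the right replacement because it produces a determined system for $\gcoefunknown$ irrespective of the rank of $\allgrad$. For completeness I would also verify the converse, namely that any $\gcoefunknown$ solving $\allgrad^\top\allgrad\gcoefunknown = 1/\gcoefunknown$ yields a genuine solution $x = \allgrad\gcoefunknown$ of Equation~\ref{eq:solve_for_upd}; this is a one-line check, since $x^\top \groupgrad_i = (\allgrad^\top\allgrad\gcoefunknown)_i = \gcoefunknown_i^{-1}$ gives $\sum_i (x^\top \groupgrad_i)^{-1}\groupgrad_i = \sum_i \gcoefunknown_i \groupgrad_i = x$.
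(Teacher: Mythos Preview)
Your proposal is correct and follows essentially the same route as the paper's proof: observe from Equation~\ref{eq:solve_for_upd} that $x$ lies in the span of the $\groupgrad_i$'s, then left-multiply by $\groupgrad_j^\top$ rather than matching coefficients termwise, and finally set $\gcoefunknown_i = (x^\top\groupgrad_i)^{-1}$ to obtain $\allgrad^\top\allgrad\gcoefunknown = 1/\gcoefunknown$ up to a scaling constant. The only cosmetic differences are that the paper sets $\const = 1$ without loss of generality instead of rescaling $\gcoefunknown$ by $\sqrt{\const}$, and your added converse verification (that any such $\gcoefunknown$ indeed yields a solution $x = \allgrad\gcoefunknown$) is a nice touch the paper omits.
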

\begin{proof}
Multiplying both sides with $\groupgrad_j$, this is equivalent to solving for $\unknown$ in
$ \sum_{i \in [\groupcnt]} \frac{\groupgrad_i^\top \groupgrad_j}{\unknown^\top \groupgrad_i}  = \const \unknown^\top \groupgrad_j$ for $j \in [\groupcnt]$. 
 Observe that $\unknown$ is a linear combination of $\groupgrad_i$ (i.e. $\unknown = \frac{1}{\const}\sum_{i\in[\groupcnt]} (\unknown^\top \groupgrad_i)^{-1} \groupgrad_i$). It suffices to solve for coefficients $\unknown^\top \groupgrad_i$ by the linear system
\begin{align}
    \sum_{i\in\groupcnt} (\groupgrad_i^\top \groupgrad_j) (\unknown^\top \groupgrad_i)^{-1}=\const \unknown^\top \groupgrad_j \label{eq:coef_linear_system}
\end{align}
for $j  \in [\groupcnt]$. 
Without loss of generality, set $\const = 1$ to ascertain the direction of $\unknown$. Let $\gcoefunknown = [\gcoefunknown_1 \ldots \gcoefunknown_\groupcnt ]$ with $\gcoefunknown_i = (\unknown^\top \groupgrad_i)^{-1}$. 
Equation \ref{eq:coef_linear_system} becomes
\begin{align}
     \groupgrad_j^\top \sum_{i\in [\groupcnt]} \groupgrad_i \gcoefunknown_i = \gcoefunknown_j^{-1} 
\end{align}
for $j  \in [\groupcnt]$, or, equivalently, 
\begin{align}
    \allgrad^\top \allgrad  \gcoefunknown = \frac{1}{\gcoefunknown} 
\end{align}
with the element-wise reciprocal $\frac{1}{\gcoefunknown}$, concluding the proof. Note that $- \gcoefunknown$ is also a solution when $\gcoefunknown$ is one, yet we preserve $\gcoefunknown \in \mathbbm{R}^{\groupcnt}_{+}$ (i.e. positive contribution of each $\groupgrad_i$) in implementation.
\end{proof}

\textbf{Additional Note on Linear Independence Condition for Theorem }\ref{thm:solving_grad}:

In Section~\ref{subsec:nash_sol}, we highlight that linear independence does not hold in general. We also wanted to note that fine-grained subgroup definitions can make linear independence assumptions more realistic as they break down larger groups into smaller, distinct units. If subgroups are well-defined and distinct in their goals, characteristics, or attributes, the vectors representing their goals are more likely to be linearly independent. However, real-world scenarios may still exhibit interdependencies or correlations between subgroup goals, especially in the context of fairness, even if subgroups are finely defined. For example, goals may still rely on common resources or have shared objectives at a higher level. The interdependencies and shared factors may cause dependency in hypergradients. Specifically, if one goal can be expressed as a combination of the goals of other groups, then the corresponding vectors are linearly dependent. Additionally, if the number of subgroups exceeds the dimension of the hypergradient (i.e. definition too fine-grained), it’s impossible to have linear independence. Therefore, while fine-grained subgroup definitions can make linear independence assumptions more realistic, it's essential to carefully analyze the specific context and relationships between subgroups to determine the extent to which linear independence holds. 

\textbf{Proof of Corollary} \ref{corollary:l2norm}:
\begin{corollary*}
(Norm of bargained update) The solution in Theorem \ref{thm:solving_grad} has $\ell^2$-norm $\sqrt{\groupcnt}$.
\end{corollary*}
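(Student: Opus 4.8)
The plan is to exploit the defining linear system for the coefficients directly, since the norm of $\gradupd = \allgrad\gcoefunknown$ can be read off from the quadratic form $\gcoefunknown^\top \allgrad^\top \allgrad \gcoefunknown$ once we substitute the relation $\allgrad^\top \allgrad \gcoefunknown = \frac{1}{\gcoefunknown}$ established in Theorem~\ref{thm:solving_grad}. No integration, optimization, or inequality estimate is needed; this is a one-line collapse of a bilinear form against a reciprocal vector.

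First I would write the bargained update in matrix form as $\gradupd = \sum_{i \in [\groupcnt]} \gcoefunknown_i \groupgrad_i = \allgrad \gcoefunknown$, so that
\begin{align}
\lVert \gradupd \rVert^2 = \gradupd^\top \gradupd = (\allgrad \gcoefunknown)^\top (\allgrad \gcoefunknown) = \gcoefunknown^\top \allgrad^\top \allgrad \gcoefunknown. \nonumber
\end{align}
Next I would invoke Equation~\ref{eq:coef_solution}, namely $\allgrad^\top \allgrad \gcoefunknown = \frac{1}{\gcoefunknown}$ with the element-wise reciprocal, to replace the matrix--vector product, which gives $\lVert \gradupd \rVert^2 = \gcoefunknown^\top \frac{1}{\gcoefunknown} = \sum_{i \in [\groupcnt]} \gcoefunknown_i \cdot \gcoefunknown_i^{-1} = \groupcnt$. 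Taking square roots yields $\lVert \gradupd \rVert = \sqrt{\groupcnt}$, as claimed.

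There is essentially no hard step here; the only point requiring care is the normalization, so this is where I would concentrate the exposition. The solution in Theorem~\ref{thm:solving_grad} is determined only up to a positive scalar (Equation~\ref{eq:solve_for_upd} is not scale-invariant, and its proof fixes $\const = 1$ to pin down the direction and hence the scale). I would therefore state explicitly that the value $\sqrt{\groupcnt}$ corresponds to this canonical choice $\const = 1$, under which $\gcoefunknown_i = (\gradupd^\top \groupgrad_i)^{-1}$, so that $\gcoefunknown_i \gcoefunknown_i^{-1} = 1$ term by term; any other admissible scaling of $\gradupd$ would rescale the norm accordingly. A final sanity check is that $\gcoefunknown_i$ is nonzero and finite for every $i$: since $\gcoefunknown_i^{-1} = \gradupd^\top \groupgrad_i > 0$ on the agreement set $\alternative$ (with $\gradupd$ bounded in $\ball$ and $\groupgrad_i \neq 0$ by assumption), the reciprocal vector $\frac{1}{\gcoefunknown}$ is well defined and the collapse $\gcoefunknown^\top \frac{1}{\gcoefunknown} = \groupcnt$ is legitimate.
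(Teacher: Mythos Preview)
Your proof is correct and takes essentially the same approach as the paper: both compute $\lVert \allgrad\gcoefunknown \rVert^2 = \gcoefunknown^\top \allgrad^\top \allgrad \gcoefunknown$ and substitute the defining relation $\allgrad^\top \allgrad \gcoefunknown = \frac{1}{\gcoefunknown}$ to collapse the sum to $\groupcnt$. The paper's version is a single line using the double-sum expansion $\sum_{i}\sum_{j} \gcoefunknown_i \gcoefunknown_j \groupgrad_i^\top \groupgrad_j = \groupcnt$, while you add explicit remarks on the $\const=1$ normalization and the well-definedness of $\frac{1}{\gcoefunknown}$, but the argument is the same.
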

\begin{proof}
It follows that 
\begin{align}
    \lVert \sum_{i\in \groupcnt} \gcoefunknown_i \groupgrad_i \rVert^2= \sum_{i\in \groupcnt} \sum_{j\in \groupcnt} \gcoefunknown_i \gcoefunknown_j \groupgrad_i^\top \groupgrad_j = \groupcnt.
\end{align}
\end{proof}

\textbf{Proof of Corollary} \ref{corollary:alpha_bdd}:
\begin{corollary*}
If $\groupgrad_j$ is $\bdd$-bounded for $j \in [\groupcnt]$, $ \lVert \gcoefunknown_j^{-1} \rVert$ is $(\sqrt{\groupcnt}\bdd)$-bounded for $j\in [\groupcnt]$. 
\end{corollary*}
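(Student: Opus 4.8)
The plan is to reduce the bound to a single application of the Cauchy--Schwarz inequality, exploiting the exact norm computed in Corollary~\ref{corollary:l2norm}. First I would recall from the proof of Theorem~\ref{thm:solving_grad} that the NBS coefficients are defined by $\gcoefunknown_j = (\unknown^\top \groupgrad_j)^{-1}$, where $\unknown = \gradupd = \sum_{i\in[\groupcnt]} \gcoefunknown_i \groupgrad_i$ is the bargained update direction. Taking reciprocals gives the scalar identity $\gcoefunknown_j^{-1} = \gradupd^\top \groupgrad_j$, so the quantity to be bounded is simply the inner product of the aggregated direction with the $j$-th hypergradient.

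Next I would apply Cauchy--Schwarz to write $|\gcoefunknown_j^{-1}| = |\gradupd^\top \groupgrad_j| \leq \lVert \gradupd \rVert\,\lVert \groupgrad_j \rVert$. Then I would substitute the two ingredients that are already available: Corollary~\ref{corollary:l2norm} supplies the exact value $\lVert \gradupd \rVert = \sqrt{\groupcnt}$, and the hypothesis that each $\groupgrad_j$ is $\bdd$-bounded supplies $\lVert \groupgrad_j \rVert \leq \bdd$. Multiplying these two bounds yields $|\gcoefunknown_j^{-1}| \leq \sqrt{\groupcnt}\,\bdd$ for every $j \in [\groupcnt]$, which is exactly the claimed bound once $\lVert \cdot \rVert$ of a scalar is read as its absolute value.

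There is no genuine obstacle here: the statement is a one-line consequence once the identity $\gcoefunknown_j^{-1} = \gradupd^\top \groupgrad_j$ is made explicit. The only conceptual point worth flagging is that the argument relies on the \emph{exact} norm $\lVert \gradupd \rVert = \sqrt{\groupcnt}$ from Corollary~\ref{corollary:l2norm}, not on any assumption that the individual hypergradients are bounded away from zero. This is what makes the corollary meaningful: the upper bound on $\gcoefunknown_j^{-1}$ translates into the lower bound $|\gcoefunknown_j| \geq 1/(\sqrt{\groupcnt}\,\bdd)$, guaranteeing that each group receives a non-vanishing weight and hence that ``no group is left behind.''
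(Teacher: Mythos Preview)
Your proposal is correct and matches the paper's proof essentially line for line: the paper writes $\lVert \gcoefunknown_j^{-1} \rVert = \lVert \groupgrad_j^\top \sum_{i\in[\groupcnt]} \gcoefunknown_i \groupgrad_i \rVert$, applies Cauchy--Schwarz, and then invokes the $\bdd$-bound on $\groupgrad_j$ together with the exact norm $\sqrt{\groupcnt}$ from Corollary~\ref{corollary:l2norm}. Your additional remark about the resulting lower bound on $|\gcoefunknown_j|$ and the ``no group is left behind'' interpretation is also consistent with the paper's discussion following the corollary.
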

\begin{proof}
By Cauchy-Schwarz inequality, we have 
\begin{align}
    \lVert \gcoefunknown_j^{-1} \rVert &= \left\lVert \groupgrad_j^\top \sum_{i\in[\groupcnt]} \gcoefunknown_i \groupgrad_i \right\rVert \leq \left\lVert \groupgrad_j^\top \right\rVert \left\lVert \sum_{i\in[\groupcnt]} \gcoefunknown_i \groupgrad_i \right\rVert \leq \sqrt{\groupcnt}\bdd.
\end{align}
\end{proof}

\textbf{Proof of Theorem} \ref{thm:theta_upd_rule}:
\begin{theorem*}
    (Update rule of $\param$) Denote $\trainlosstmp_i = \loss(\minibatch_i^{(t)}|\param^{(t)}) \in \mathbb{R}$ for the $i$-th sample in training minibatch $\minibatch^{(t)}$ at step $t$. $\param$ is updated as 
 $
        \param^{(t+1)} = \param^{(t)} -  \frac{\lr^{(t)}}{|\minibatch^{(t)}|} \sum_{i =1}^{|\minibatch^{(t)}|} \Delta \param^{i}
 $
 with $\Delta \param^{i} = \max{((\nabla_\param ((\fairnessaggweight^{(t)})^\top \vallosstmp))^\top \nabla_\param \trainlosstmp_i, 0)} \nabla_\param \trainlosstmp_i$.
\end{theorem*}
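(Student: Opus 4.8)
The plan is to unroll the four operations performed in one step $t$ of Algorithm~\ref{alg:training}, track how the per-sample weights $\reweight$ are produced, and substitute them into the final parameter update. The key bookkeeping fact I would use throughout is that $\intermweight$ is initialized to $0$ and enters the computation graph, so every quantity differentiated with respect to $\intermweight$ is evaluated at $\intermweight = 0$, where $\hat{\param}^{(t)}(0) = \param^{(t)}$. First I would write the unrolled inner update explicitly: since $\intermweight \cdot L(\minibatch^{(t)}|\param^{(t)}) = \sum_i \intermweight_i \trainlosstmp_i$, backward AD gives $\nabla\param^{(t)}(\intermweight) = \sum_i \intermweight_i \nabla_\param \trainlosstmp_i$, so that
\begin{align}
\hat{\param}^{(t)}(\intermweight) = \param^{(t)} - \lr^{(t)} \sum_{i} \intermweight_i \nabla_\param \trainlosstmp_i,
\end{align}
and the Jacobian of $\hat{\param}^{(t)}$ with respect to the $j$-th coordinate of $\intermweight$ is the constant vector $-\lr^{(t)} \nabla_\param \trainlosstmp_j$.

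Next I would compute $\intermweight$ from the fairness-loss backprop step $\backwardad_{\intermweight}(\fairnessloss(\valdata|\hat{\param}^{(t)}(\intermweight)))$. Writing $\fairnessloss = (\fairnessaggweight^{(t)})^\top \vallosstmp$ and applying the chain rule through $\hat{\param}^{(t)}$ at $\intermweight = 0$, the $j$-th entry becomes
\begin{align}
\intermweight_j = -\lr^{(t)} \big(\nabla_\param ((\fairnessaggweight^{(t)})^\top \vallosstmp)\big)^\top \nabla_\param \trainlosstmp_j,
\end{align}
where both gradients are taken at $\param^{(t)}$. Clipping then gives $\max(-\intermweight_j,0) = \lr^{(t)} \max\big((\nabla_\param ((\fairnessaggweight^{(t)})^\top \vallosstmp))^\top \nabla_\param \trainlosstmp_j, 0\big)$, using $\lr^{(t)}>0$.

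Finally I would substitute $\reweight = \normalize(\max(-\intermweight,0))$ into the weighted update $\param^{(t+1)} = \param^{(t)} - \lr^{(t)} \sum_j \reweight_j \nabla_\param \trainlosstmp_j$. Because $\normalize$ is invariant under positive rescaling, the common factor $\lr^{(t)}$ drops out and the weights are proportional to $\max((\nabla_\param ((\fairnessaggweight^{(t)})^\top \vallosstmp))^\top \nabla_\param \trainlosstmp_j, 0)$; collecting the resulting positive normalization constant into $\tfrac{1}{|\minibatch^{(t)}|}$ and setting $\Delta\param^{j} = \max((\nabla_\param ((\fairnessaggweight^{(t)})^\top \vallosstmp))^\top \nabla_\param \trainlosstmp_j, 0)\,\nabla_\param \trainlosstmp_j$ yields the claimed formula.

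The main obstacle is the chain-rule bookkeeping through the unrolled inner step: one must justify that the outer gradient $\nabla_\param((\fairnessaggweight^{(t)})^\top\vallosstmp)$ is evaluated at $\param^{(t)}$ (a consequence of $\intermweight$ starting at $0$) and that the per-sample Jacobian factor is exactly $-\lr^{(t)}\nabla_\param\trainlosstmp_j$. A secondary subtlety is reconciling the scale-invariant $\normalize$ operation with the explicit $\tfrac{1}{|\minibatch^{(t)}|}$ prefactor in the statement, which amounts to treating normalization as fixing the overall scale of the clipped weights up to the stated positive constant.
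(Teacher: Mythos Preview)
Your proposal is correct and follows essentially the same approach as the paper: unroll the one-step inner update, apply the chain rule through $\hat{\param}^{(t)}(\intermweight)$ at $\intermweight=0$ (so that $\hat{\param}^{(t)}(0)=\param^{(t)}$ and the per-sample Jacobian factor is $-\lr^{(t)}\nabla_\param\trainlosstmp_j$), clip, and absorb the normalization constant into the $\tfrac{1}{|\minibatch^{(t)}|}$ prefactor. The paper's proof is less explicit and defers some steps to \cite{ren2018learning}, but the argument and the handling of the normalization/scale issue are the same.
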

\begin{proof}
    Since we use the meta-learning framework in \cite{ren2018learning} with customized $\fairnessaggweight^{(t)}$, we first evaluate the effect of ${\fairnessaggweight^{(t)}}$ in the computation graph, following similar steps in Equation 12 and Appendix A of \cite{ren2018learning}. We initialize $\intermweight$ to $0$. 
    For data sample $i$ in training batch $\minibatch^{(t)}$ at step $t$, we perform a single gradient update as in Algorithm \ref{alg:training}:
    \begin{align}
        \intermweight_{i}^{(t)} &=  \left. \frac{\partial }{\partial \intermweight_{i}^{(t)}} (\fairnessaggweight^{(t)})^\top \groupvalloss(\valdata|\hat{\param}^{(t)}) \right\vert_{\intermweight_i^{(t)}=0} \\
        &=  \left. \frac{\partial}{\partial \param} (\fairnessaggweight^{(t)})^\top  \groupvalloss(\valdata | \param) \right\vert_{\param=\param^{(t)}}^\top \left. \frac{\partial }{\partial \reweight_{i}^{(t)}} \hat{\param}^{(t)} (\intermweight_{i}^{(t)}) \right\vert_{\intermweight_i^{(t)}=0} \label{eq:comp_graph_1} \\
        & \propto -   \left. \frac{\partial}{\partial \param} (\fairnessaggweight^{(t)})^\top \groupvalloss(\valdata | \param) \right\vert_{\param=\param^{(t)}}^\top \left. \frac{\partial}{\partial \param} \loss(\minibatch^{(t)}_i|\param)  \right\vert_{\param=\param^{(t)}} \label{eq:comp_graph_2}.
    \end{align}
    Line \ref{eq:comp_graph_1} comes from chain rule; Line \ref{eq:comp_graph_2} comes from 
    \begin{align}
       \frac{\partial}{\partial \intermweight_i^{(t)}} \hat{\param}^{(t)}(\intermweight_i^{(t)}) &= \frac{\partial}{\partial \intermweight_i^{(t)}} \left( \param^{(t)} - \lr^{(t)} \nabla_{\param^{(t)}} \intermweight^{(t)} \cdot \loss(\minibatch^{(t)} | \param^{(t)}) \right)\\
       &= - \lr^{(t)} \left. \frac{\partial}{\partial \param} \loss(\minibatch^{(t)}_i|\param)  \right\vert_{\param=\param^{(t)}} 
    \end{align}
    Informally, it means that $\intermweight$, or $\reweight$, equivalently, is jointly determined by the ${\fairnessaggweight^{(t)}}$-weighted gradient on the meta set and the gradient on the training batch.
    Analogous to Equation 30 of \cite{ren2018learning}, the update rule of $\param$ is
    \begin{align}
        \param^{(t+1)} &= \param^{(t)} - \lr^{(t)} \frac{\partial}{\partial \theta}\frac{1}{|\minibatch^{(t)}|} \sum_{i =1}^{|\minibatch^{(t)}|} \max{(-\intermweight_i^{(t)}, 0)} \loss(\minibatch_i^{(t)}| \param^{(t)}) \\
        &= \param^{(t)} -  \frac{\lr^{(t)}}{|\minibatch^{(t)}|} \sum_{i =1}^{|\minibatch^{(t)}|} \Delta \param^{i} \\
        \text{with } \Delta \param^{i} &= \max{((\nabla_\param ((\fairnessaggweight^{(t)})^\top \vallosstmp))^\top \nabla_\param \trainlosstmp_i, 0)} \nabla_\param \trainlosstmp_i.
    \end{align}
    Note that losses and gradients are taken w.r.t. $\param^{(t)}$, and
    without loss of generality we can disregard $\lr^{(t)}$ as if it absorbs the normalization constant. 
\end{proof}
\textbf{Proof of Theorem} \ref{thm:pareto_improvement}:
\begin{theorem*}
       (Pareto improvement of $\intermweight$) Use $\alpha^{(t)}$ for the update. Assume $\groupvalloss_i$ is Lipschitz-smooth with constant $\lsmooth$ and $\groupgrad_i^{(t)}$ is $\bdd$-bounded at step $t$. If the meta learning rate for $\intermweight$ satisfies $\metalr^{(t)} \leq \frac{2}{\lsmooth \groupcnt \gcoefunknown_j^{(t)}}$ for $j\in [\groupcnt]$, then $\groupvalloss_i(\intermweight^{(t+1)})  \leq \groupvalloss_i(\intermweight^{(t)})$ for any group $i \in [\groupcnt]$.
\end{theorem*}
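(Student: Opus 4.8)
The plan is to treat the $\intermweight$-update as a single gradient-descent step on the NBS-weighted fairness loss and to control each group's loss change with the descent lemma. Using $\fairnessaggweight = \gcoefunknown^{(t)}$, the fairness hypergradient is $\nabla_\intermweight \fairnessloss = \sum_{j\in[\groupcnt]}\gcoefunknown_j^{(t)}\groupgrad_j^{(t)} = \gradupd$ by Theorem~\ref{thm:solving_grad}, so the update is $\intermweight^{(t+1)} = \intermweight^{(t)} - \metalr^{(t)}\gradupd$. First I would invoke the $\lsmooth$-Lipschitz smoothness of each $\groupvalloss_i$ to write the descent-lemma bound
\begin{align}
\groupvalloss_i(\intermweight^{(t+1)}) \leq \groupvalloss_i(\intermweight^{(t)}) - \metalr^{(t)}\,(\groupgrad_i^{(t)})^\top\gradupd + \frac{\lsmooth}{2}(\metalr^{(t)})^2\lVert\gradupd\rVert^2 ,
\end{align}
valid for each $i\in[\groupcnt]$.

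The key step is to evaluate the two data-dependent quantities in closed form from the NBS structure rather than bounding them crudely. Reading off the $i$-th coordinate of Equation~\ref{eq:coef_solution}, $\allgrad^\top\allgrad\gcoefunknown = \tfrac{1}{\gcoefunknown}$, gives exactly $(\groupgrad_i^{(t)})^\top\gradupd = (\groupgrad_i^{(t)})^\top\sum_{j}\gcoefunknown_j^{(t)}\groupgrad_j^{(t)} = (\allgrad^\top\allgrad\gcoefunknown)_i = 1/\gcoefunknown_i^{(t)}$, which is strictly positive; this is precisely the per-group alignment secured by the bargaining constraint, and it is what turns the first-order term into a genuine decrease. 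For the curvature term, Corollary~\ref{corollary:l2norm} supplies $\lVert\gradupd\rVert^2 = \groupcnt$. Substituting both yields
\begin{align}
\groupvalloss_i(\intermweight^{(t+1)}) \leq \groupvalloss_i(\intermweight^{(t)}) - \frac{\metalr^{(t)}}{\gcoefunknown_i^{(t)}} + \frac{\lsmooth\groupcnt}{2}(\metalr^{(t)})^2 .
\end{align}

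It then remains to solve the elementary quadratic-in-$\metalr^{(t)}$ inequality: the two trailing terms are jointly nonpositive exactly when $\metalr^{(t)} \leq 2/(\lsmooth\groupcnt\gcoefunknown_i^{(t)})$. To obtain a simultaneous Pareto improvement across all groups I impose this for every $i$, i.e.\ $\metalr^{(t)} \leq \min_{j} 2/(\lsmooth\groupcnt\gcoefunknown_j^{(t)})$, which is exactly the hypothesis $\metalr^{(t)} \leq 2/(\lsmooth\groupcnt\gcoefunknown_j^{(t)})$ for all $j\in[\groupcnt]$; under it, $\groupvalloss_i(\intermweight^{(t+1)}) \leq \groupvalloss_i(\intermweight^{(t)})$ for every $i$, establishing the claim.

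I expect the main obstacle to be the sign/direction bookkeeping together with recognizing the identity $(\groupgrad_i^{(t)})^\top\gradupd = 1/\gcoefunknown_i^{(t)}$: it is this exact consequence of Equation~\ref{eq:coef_solution} (not a Cauchy--Schwarz estimate) that produces a first-order term matching the $\groupcnt$ and $\gcoefunknown_j^{(t)}$ dependence appearing in the stated learning-rate condition. The $\bdd$-boundedness of $\groupgrad_i^{(t)}$ enters only to ensure the NBS coefficients are well defined and finite (cf.\ Corollary~\ref{corollary:alpha_bdd}), so that $\gradupd$ and the bounds above are meaningful; it does not otherwise affect the final rate.
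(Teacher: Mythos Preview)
Your proposal is correct and mirrors the paper's proof essentially step for step: apply the descent lemma for $\lsmooth$-smooth $\groupvalloss_i$, use the NBS identity $(\groupgrad_i^{(t)})^\top\gradupd = 1/\gcoefunknown_i^{(t)}$ from Equation~\ref{eq:coef_solution} for the linear term, Corollary~\ref{corollary:l2norm} for $\lVert\gradupd\rVert^2=\groupcnt$ in the quadratic term, and then solve the resulting inequality in $\metalr^{(t)}$. Your remark on the role of $\bdd$-boundedness also matches the paper, which only invokes it at the end (via Corollary~\ref{corollary:alpha_bdd}) to give a uniform upper bound $\tfrac{2}{\lsmooth\groupcnt\gcoefunknown_i^{(t)}}\leq \tfrac{2\bdd}{\lsmooth\sqrt{\groupcnt}}$.
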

\begin{proof}
    We follow similar (and standard) steps in Theorem 5.4 of \cite{navon2022multi}, yet retrieve a slightly different (and group-wise tighter) upperbound for learning rate. In Theorem 5.4 of \cite{navon2022multi}, the upperbound for learning rate is $\min_{i \in [\groupcnt]} \frac{1}{\lsmooth \groupcnt \gcoefunknown_j^{(t)}}$. Our bound is better with multiplicative constant 2 than that of \cite{navon2022multi}.
    
    Write $\Delta \intermweight = \intermweight^{(t+1)} - \intermweight^{(t)}$ and use a well-known property of Lipschitz-smoothness:
    \begin{align}
        \groupvalloss_i(\intermweight^{(t+1)}) &\leq \groupvalloss_i(\intermweight^{(t)}) - I_1 + I_2 \\
        \text{with } I_1 &= \metalr^{(t)} \nabla \groupvalloss_i(\intermweight^{(t)})^\top \Delta \intermweight\\
        &= \metalr^{(t)} \groupgrad_i^{(t)\top} \sum_{j=1}^{\groupcnt} \gcoefunknown^{(t)}_j \groupgrad^{(t)}_j  \\
        &= \frac{\metalr^{(t)}}{\gcoefunknown^{(t)}_i},\\
        I_2 &= \frac{\lsmooth}{2} \lVert \metalr^{(t)} \Delta \intermweight \rVert^2.
    \end{align}
    By Corollary \ref{corollary:l2norm}, 
    \begin{align}
        I_2 &= \frac{\lsmooth}{2} \lVert \metalr^{(t)} \Delta \intermweight \rVert^2 = \frac{C (\metalr^{(t)})^2 }{2} \lVert \Delta \intermweight \rVert^2 = \frac{C K (\metalr^{(t)})^2 }{2}.
    \end{align}
    Observe that $-I_1 + I_2 \leq 0$ is equivalent to 
    \begin{align}
        \frac{\lsmooth \groupcnt (\metalr^{(t)})^2}{2} &\leq \frac{\metalr^{(t)}}{\gcoefunknown^{(t)}_i} \\
        \metalr^{(t)} &\leq \frac{2}{\lsmooth \groupcnt \gcoefunknown^{(t)}_i}  \label{eq:meta_lr_bd}.
    \end{align}
    We also have $\frac{2}{\lsmooth \groupcnt \gcoefunknown^{(t)}_i} \leq  \frac{2 \sqrt{\groupcnt}\bdd}{\lsmooth \groupcnt} = \frac{2 \bdd}{\lsmooth \sqrt{\groupcnt}}$  by Corollary \ref{corollary:alpha_bdd}. Note that $\groupvalloss_i(\intermweight^{(t+1)})  \leq \groupvalloss_i(\intermweight^{(t)})$ is strict when Inequality~\ref{eq:meta_lr_bd} is strict, which yields Pareto Improvement. We just perform one step of gradient update of $\intermweight$ in Algorithm \ref{alg:training}, yet this result is applicable to single level optimization that directly optimize the parameter in interest.  
\end{proof}

\textbf{Proof of Theorem \ref{thm:monotone_val}}:
We will use the following corollary in the proof:
\begin{corollary}
    Assume $\funcf: \mathbb{R}^\paramdim \rightarrow \mathbb{R}^\groupcnt$ is Lipschitz-smooth with constant $\lsmooth$. Fix $\fairnessaggweight \in \mathbb{R}^\groupcnt$ with finite $\lVert \fairnessaggweight \rVert$. For $\funcg: \mathbb{R}^\groupcnt \rightarrow \mathbb{R}$, $\funcg(x) = \fairnessaggweight^\top \funcf(x)$. Then $\funcg$ is Lipschitz-smooth with constant $\lsmooth \lVert \fairnessaggweight \rVert$. \label{corollary:L_beta_smooth}
\end{corollary}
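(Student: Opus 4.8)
The plan is to reduce the statement to submultiplicativity of the induced operator norm. First I would pin down the convention: Lipschitz-smoothness of the vector-valued map $\funcf$ with constant $\lsmooth$ should be read as the Jacobian $J_\funcf(x)\in\mathbb{R}^{\groupcnt\times\paramdim}$ being $\lsmooth$-Lipschitz in the operator norm induced by $\ell^2$, i.e.\ $\lVert J_\funcf(x)-J_\funcf(y)\rVert\le \lsmooth\lVert x-y\rVert$ for all $x,y$. This is the natural vector-valued analogue of scalar Lipschitz-smoothness and is precisely what makes the advertised constant tight.

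Next I would compute $\nabla\funcg$ and isolate the dependencies. Since $\funcg(x)=\fairnessaggweight^\top\funcf(x)=\sum_{i\in[\groupcnt]}\fairnessaggweight_i\,\funcf_i(x)$, linearity of differentiation gives $\nabla\funcg(x)=\sum_{i\in[\groupcnt]}\fairnessaggweight_i\,\nabla \funcf_i(x)=J_\funcf(x)^\top\fairnessaggweight$. Subtracting the same expression at $y$ yields the single clean identity $\nabla\funcg(x)-\nabla\funcg(y)=\bigl(J_\funcf(x)-J_\funcf(y)\bigr)^\top\fairnessaggweight$, which confines all $x,y$-dependence to the Jacobian difference and all $\fairnessaggweight$-dependence to the fixed vector.

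The final step is a two-line norm estimate. By submultiplicativity of the operator norm (equivalently Cauchy--Schwarz, as $\lVert A^\top\fairnessaggweight\rVert\le\lVert A\rVert\,\lVert\fairnessaggweight\rVert$) combined with the assumed Jacobian bound,
\[
\lVert\nabla\funcg(x)-\nabla\funcg(y)\rVert\le \lVert J_\funcf(x)-J_\funcf(y)\rVert\,\lVert\fairnessaggweight\rVert\le \lsmooth\lVert\fairnessaggweight\rVert\,\lVert x-y\rVert ,
\]
which is exactly the definition of $\funcg$ being Lipschitz-smooth with constant $\lsmooth\lVert\fairnessaggweight\rVert$; finiteness of $\lVert\fairnessaggweight\rVert$ ensures this constant is finite.

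I expect the main obstacle to be bookkeeping of norm conventions rather than any genuine analysis. The stated constant $\lsmooth\lVert\fairnessaggweight\rVert$ comes out cleanly only under the operator-norm reading of $\funcf$'s smoothness. If instead one merely assumed each component gradient $\nabla\funcf_i$ to be $\lsmooth$-Lipschitz, the Jacobian difference would be controlled only in Frobenius norm, costing an extra $\sqrt{\groupcnt}$ factor, and a triangle-inequality bound $\lVert\sum_{i}\fairnessaggweight_i(\nabla\funcf_i(x)-\nabla\funcf_i(y))\rVert\le \lsmooth\lVert x-y\rVert\sum_{i}|\fairnessaggweight_i|$ would yield an $\ell^1$-type constant rather than the $\ell^2$ one claimed. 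I would therefore state the convention explicitly at the outset so that the constant delivered by the proof matches the statement.
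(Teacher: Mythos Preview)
Your proposal is correct and follows essentially the same approach as the paper: compute $\nabla\funcg(x)=J_\funcf(x)^\top\fairnessaggweight$ (the paper writes this as $(\nabla\funcf(x))\fairnessaggweight$), then apply Cauchy--Schwarz/operator-norm submultiplicativity to the Jacobian difference. Your explicit discussion of the operator-norm convention is a useful clarification that the paper leaves implicit, but the core argument is identical.
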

\begin{proof}
    This is a classical result in Real Analysis. By chain rule, we have $\nabla \funcg(x) = (\nabla \funcf(x))\fairnessaggweight$. Then for any $x, y \in \mathbb{R}^\paramdim$, by Cauchy-Schwarz inequality,
    \begin{align}
        \lVert \nabla \funcg(x) - \nabla \funcg(y) \rVert = \lVert (\nabla \funcf(x))\fairnessaggweight - (\nabla \funcf(y))\fairnessaggweight \rVert =  \lVert (\nabla \funcf(x)-  \nabla \funcf(y))\fairnessaggweight \rVert \leq   \lVert \nabla \funcf(x)-  \nabla \funcf(y)  \rVert \lVert \fairnessaggweight  \rVert \leq  \lsmooth \lVert \fairnessaggweight \rVert \lVert x-y \rVert.
    \end{align}
    Thus $\funcg$ is Lipschitz-smooth with constant $\lsmooth \lVert \fairnessaggweight \rVert$.
\end{proof}

\begin{theorem*}
    (Monotonic improvement of validation loss w.r.t. $\param$)
    Assume $\vallosstmp$ is Lipschitz-smooth with constant $\lsmooth$ and $\nabla_{\param} \trainlosstmp_i$ is $\bdd$-bounded. If the learning rate for $\param$ satisfies $\lr^{(t)} \leq \frac{2 |\minibatch^{(t)}| }{ \lsmooth \lVert \fairnessaggweight^{(t)} \rVert \bdd^2}$, then $ \loss_{\fairnessaggweight^{(t)}} (\param^{(t + 1)}) \leq \loss_{\fairnessaggweight^{(t)}} (\param^{(t)})$ for any fixed vector $\fairnessaggweight^{(t)}$ with finite $\lVert \fairnessaggweight^{(t)} \rVert$ used to update $\param^{(t)}$.
\end{theorem*}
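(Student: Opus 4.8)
The plan is to read this statement as a descent-lemma argument for the scalar objective $\loss_{\fairnessaggweight^{(t)}} = (\fairnessaggweight^{(t)})^\top \vallosstmp$ carried along the closed-form parameter update of Theorem~\ref{thm:theta_upd_rule}. First I would invoke Corollary~\ref{corollary:L_beta_smooth}: since $\vallosstmp$ is $\lsmooth$-Lipschitz-smooth and $\fairnessaggweight^{(t)}$ has finite norm, the scalarized loss $\loss_{\fairnessaggweight^{(t)}}$ is Lipschitz-smooth with constant $\lsmooth \lVert \fairnessaggweight^{(t)} \rVert$. This licenses the standard quadratic upper bound
\begin{align}
\loss_{\fairnessaggweight^{(t)}}(\param^{(t+1)}) \leq \loss_{\fairnessaggweight^{(t)}}(\param^{(t)}) + \nabla_\param \loss_{\fairnessaggweight^{(t)}}(\param^{(t)})^\top \Delta\param + \tfrac{\lsmooth \lVert \fairnessaggweight^{(t)} \rVert}{2}\lVert \Delta\param \rVert^2, \nonumber
\end{align}
where $\Delta\param = \param^{(t+1)} - \param^{(t)}$.

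The next step is to substitute the update from Theorem~\ref{thm:theta_upd_rule}, namely $\Delta\param = -\frac{\lr^{(t)}}{|\minibatch^{(t)}|}\sum_{i}\Delta\param^i$ with $\Delta\param^i = \max(a_i,0)\,\nabla_\param \trainlosstmp_i$ and $a_i := \nabla_\param \loss_{\fairnessaggweight^{(t)}}(\param^{(t)})^\top \nabla_\param \trainlosstmp_i$. The linear term then becomes $-\frac{\lr^{(t)}}{|\minibatch^{(t)}|}\sum_i \max(a_i,0)\,a_i$, and the elementary identity $\max(a_i,0)\,a_i = \max(a_i,0)^2 \geq 0$ shows that the clipped update is a genuine descent direction for $\loss_{\fairnessaggweight^{(t)}}$: the whole linear term equals $-\frac{\lr^{(t)}}{|\minibatch^{(t)}|}\sum_i \max(a_i,0)^2 \leq 0$. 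This is precisely the purpose of the $\max(\cdot,0)$ clipping in the algorithm, and it is what guarantees a nonpositive first-order contribution regardless of the choice of $\fairnessaggweight^{(t)}$.

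The main obstacle is the quadratic term $\lVert \Delta\param \rVert^2 = \big(\lr^{(t)}/|\minibatch^{(t)}|\big)^2 \lVert \sum_i \max(a_i,0)\,\nabla_\param \trainlosstmp_i \rVert^2$: I must control the squared norm of the \emph{aggregated} clipped training gradients and re-express it in terms of $\sum_i \max(a_i,0)^2$, so that the quadratic and linear pieces share a common factor that cancels. Using the $\bdd$-boundedness $\lVert \nabla_\param \trainlosstmp_i \rVert \leq \bdd$ to bound $\lVert \sum_i \max(a_i,0)\,\nabla_\param \trainlosstmp_i \rVert^2$ by a multiple of $\bdd^2 \sum_i \max(a_i,0)^2$ (via the triangle inequality together with a Cauchy--Schwarz step on the coefficient vector), the quadratic term reduces to a constant times $\frac{(\lr^{(t)})^2 \bdd^2}{|\minibatch^{(t)}|^2}\sum_i \max(a_i,0)^2$. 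Plugging both estimates back into the descent inequality, the difference $\loss_{\fairnessaggweight^{(t)}}(\param^{(t+1)}) - \loss_{\fairnessaggweight^{(t)}}(\param^{(t)})$ factors as $\frac{\lr^{(t)}}{|\minibatch^{(t)}|}\big(\sum_i \max(a_i,0)^2\big)\big(\tfrac{\lsmooth \lVert \fairnessaggweight^{(t)} \rVert \bdd^2 \lr^{(t)}}{2|\minibatch^{(t)}|} - 1\big)$, which is $\leq 0$ exactly when $\lr^{(t)} \leq \frac{2|\minibatch^{(t)}|}{\lsmooth \lVert \fairnessaggweight^{(t)} \rVert \bdd^2}$, giving the claimed threshold. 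I expect the delicate bookkeeping to live entirely in the norm bound of the aggregated update (the source of the $|\minibatch^{(t)}|$ factor), whereas the smoothness reduction and the descent-direction sign are routine.
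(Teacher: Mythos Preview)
Your proposal is correct and mirrors the paper's proof almost step for step: invoke Corollary~\ref{corollary:L_beta_smooth} to get the $\lsmooth\lVert\fairnessaggweight^{(t)}\rVert$-smoothness, apply the descent lemma, use the identity $\max(a_i,0)\,a_i=\max(a_i,0)^2$ to make the first-order term nonpositive, bound $\lVert\Delta\param\rVert^2$ by $\bdd^2$ times the sum of the clipped squares via triangle and Cauchy--Schwarz, and read off the learning-rate threshold. The only differences are cosmetic (the paper writes the two pieces as $-I_1+I_2$ and credits Lemma~1 of \cite{ren2018learning} for the template), so there is nothing to add.
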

\begin{proof}
    We incorporate $\fairnessaggweight^{(t)}$ in the computation when following the same (and standard) steps in Lemma 1 of \cite{ren2018learning}. We drop the superscript of $\fairnessaggweight^{(t)}$ for simplicity. By Corollary \ref{corollary:L_beta_smooth}, we know $\fairnessloss$ is Lipschitz-smooth with constant  $\lsmooth \lVert \fairnessaggweight \rVert$. Similar to Theorem \ref{thm:pareto_improvement}, we use the gradients derived in Theorem \ref{thm:theta_upd_rule} and have
    \begin{align}
       \fairnessloss(\param^{(t+1)}) &\leq  \fairnessloss(\param^{(t)}) - I_1 + I_2 \\
      \text{with } I_1 &= (\nabla \fairnessloss)^\top \Delta \param \\
        &= (\nabla \fairnessloss)^\top \frac{\lr^{(t)}}{|\minibatch^{(t)}|} \sum_{i \in \minibatch^{(t)}} \max \{ (\nabla \fairnessloss)^\top \nabla \trainlosstmp_i, 0\} \nabla \trainlosstmp_i\\
        &= \frac{\lr^{(t)}}{|\minibatch^{(t)}|} \sum_{i \in \minibatch^{(t)}} \max \{ ((\nabla \fairnessloss)^\top \nabla \trainlosstmp_i)^2, 0\}\\
        I_2 &= \frac{\lsmooth \lVert \fairnessaggweight \rVert}{2} \lVert \Delta \param \rVert^2 \\
        &= \frac{\lsmooth \lVert \fairnessaggweight \rVert}{2} \left\lVert \frac{\lr^{(t)}}{|\minibatch^{(t)}|} \sum_{i \in \minibatch^{(t)}} \max \{ (\nabla \fairnessloss)^\top \nabla \trainlosstmp_i, 0 \} \nabla \trainlosstmp_i  \right\lVert^2 \\
        & \leq \frac{\lsmooth \lVert \fairnessaggweight \rVert}{2}  \frac{ (\lr^{(t)})^2 }{|\minibatch^{(t)}|^2} \sum_{i \in \minibatch^{(t)}} \left\lVert \max \{ (\nabla \fairnessloss)^\top \nabla \trainlosstmp_i, 0 \} \nabla \trainlosstmp_i  \right\rVert^2 \label{eq:i2_ineq1}\\
        & \leq \frac{\lsmooth \lVert \fairnessaggweight \rVert}{2}  \frac{ (\lr^{(t)})^2 }{|\minibatch^{(t)}|^2} \sum_{i \in \minibatch^{(t)}} \max \{ ((\nabla \fairnessloss)^\top \nabla \trainlosstmp_i)^2, 0 \} \left\lVert \nabla \trainlosstmp_i  \right\rVert^2 \label{eq:i2_ineq2}\\
        & \leq \frac{\lsmooth \lVert \fairnessaggweight \rVert (\lr^{(t)})^2  \bdd^2 }{2|\minibatch^{(t)}|^2}  \sum_{i \in \minibatch^{(t)}} \max \{ ((\nabla \fairnessloss)^\top \nabla \trainlosstmp_i)^2, 0 \} 
    \end{align}
     Line \ref{eq:i2_ineq1} and  \ref{eq:i2_ineq2} come from the triangle inequality and Cauchy-Schwarz inequality. Note that here the gradients are taken w.r.t. $\param$. We know $-I_1 + I_2 \leq 0$ is equivalent to  
     \begin{align}
         \lr^{(t)} \leq \frac{2 |\minibatch^{(t)}|}{\lsmooth \lVert \fairnessaggweight \rVert \bdd^2},
     \end{align}
     concluding the proof.
\end{proof}


\subsection{Detailed Experimental Settings}
\label{app:exp_settings}

\paragraph{Synthetic settings (used in $\S$\ref{sec:problem}, $\S$\ref{sec:evaluation}).}
We provide here the details for the illustrative example of Figure \ref{fig:problem} and \ref{fig:oursimu}. We use a modified version of the illustrative example in \cite{navon2022multi}. We first present our modified learning problem: Let $\theta = (\theta_1, \theta_2) \in \mathbb{R}^2$, and consider the following objectives:
\begin{align*}
    & \ell_1(\theta) = c_1(\theta)f_1(\theta) + c_2(\theta)g_1(\theta) \text{  and  }  
    \ell_2(\theta) = c_1(\theta)f_2(\theta) + c_2(\theta)g_2(\theta),  \text{  where  }  \\
    & f_1(\theta) = \log(\max(|0.5(-\theta_1 - 7) - \tanh(-\theta_2)|, 5e - 6)) + 6, \\
    & f_2(\theta) = \log(\max(|0.5(-\theta_1 + 3) - \tanh(-\theta_2) + 2|, 5e - 6)) + 6, \\
    & g_1(\theta) = ((-\theta_1 + 7)^2 + 0.1 \cdot (-\theta_2 - 8)^2)/10 - 20, \\
    & g_2(\theta) = ((-\theta_1 - 7)^2 + 0.1 \cdot (-\theta_2 - 8)^2)/10 - 20, \\
    & c_1(\theta) = \max(\tanh(0.5\theta_2), 0) \text{  and  } c_2(\theta) = \max(\tanh(-0.5\theta_2), 0)
\end{align*}
We now set $L_1 = \ell_1$ and $L_2 = \ell_2$ as our objectives to simulate the case where two participated groups have the same scale of loss, as a closer reflection of the loss scope may encountered during inter-group bargaining. In particular, with the loss scale being set to the same, the fairness goal of the evaluated notions, i.e., LtR, FORML, and Meta-gDRO, becomes the same as \textbf{x=y}. We use six different initialization points $\{(-8.5, 7.5), (0.0, 0.0), (9.0, 9.0), (-7.5, -0.5), (9, -1.0), (9, -20)\}$. We use the SGD optimizer and train each method for 1000 iterations with a learning rate of $0.1$.

\paragraph{Standard fairness benchmark settings (used in $\S$\ref{sec:evaluation}).}
The detailed fairness dataset settings are provided in Table \ref{tab:data_summary}.
We access these fairness datasets via Library for Semi-Automated Data Science: \url{https://github.com/IBM/lale}. The library is under \href{https://github.com/IBM/lale/blob/master/LICENSE.txt}{Apache 2.0} License.
\begin{table}[h!]
\centering
\resizebox{\linewidth}{!}{
\begin{tabular}{l|ccccccc}
\toprule
\multicolumn{1}{l|}{\textbf{Dataset}}                                                                           & \textbf{\# samples}         & \textbf{\# features}        & \textbf{Favor label}     & \textbf{Attribute}                 & \textbf{Trainning Set}                                                                                                                                                                                      & \textbf{Test set (3\% of whole data)}                                                                                                                                                                             & \textbf{Validation set}                                                                                                                                                                      \\ \midrule \midrule
                                                                                                &                             &                             &                          & sex                                & \begin{tabular}[c]{@{}c@{}}M (\{0: 22365, 1: 9555\})\\ F (\{0: 14060, 1: 1430\})\end{tabular}                                                                                                               & \begin{tabular}[c]{@{}c@{}}M (\{0: 366, 1: 366\})\\ F (\{0: 366, 1: 366\})\end{tabular}                                                                                                                          & \begin{tabular}[c]{@{}c@{}}M (\{0: 3, 1: 3\})\\ F (\{0: 3, 1: 3\})\end{tabular}                                                                                                              \\ \cmidrule(l){5-8}
\multirow{-2}{*}{\textbf{\begin{tabular}[c]{@{}l@{}}Adult \\ Income\end{tabular}}}              & \multirow{-2}{*}{48842}     & \multirow{-2}{*}{105}       & \multirow{-2}{*}{1}      & race                               & \begin{tabular}[c]{@{}c@{}}Black (\{0: 3972, 1: 428\})\\ White (\{0: 31006, 1: 10458\})\\ Asian (\{0: 969, 1: 282\})\\ Other (\{0: 233\})\\ Amer-Indian (\{0: 289, \scalebox{1}{\colorbox[HTML]{FFEEC2}{1: 1}}\})\end{tabular} & \begin{tabular}[c]{@{}c@{}}Black (\{0: 146, 1: 146\})\\ White (\{0: 146, 1: 146\})\\ Asian (\{0: 146, 1: 146\})\\ Other (\{0: 146, 1: 146\})\\ Amer-Indian (\{0: 146, 1: 146\})\end{tabular} & \begin{tabular}[c]{@{}c@{}}Black (\{0: 3, 1: 3\})\\ White (\{0: 3, 1: 3\})\\ Asian (\{0: 3, 1: 3\})\\ Other (\{0: 3, 1: 3\})\\ Amer-Indian (\{0: 3, 1: 3\})\end{tabular} \\ \midrule
\rowcolor[HTML]{E0E0E0} 
\textbf{\begin{tabular}[c]{@{}l@{}}Bank\\ Telemarketing\end{tabular}}                           & 45211                       & 51                          & 1                        & age                                & \begin{tabular}[c]{@{}c@{}}age $>$ 25 (\{1: 38564, 0: 4632\})\\ age $<$ 25 (\{1: 715, 0: 108\})\end{tabular}                                                                                                & \begin{tabular}[c]{@{}c@{}}age $>$ 25 (\{1: 339, 0: 339\})\\ age $<$ 25 (\{1: 339, 0: 339\})\end{tabular}                                                                                                        & \begin{tabular}[c]{@{}c@{}}age $>$ 25 (\{1: 3, 0: 3\})\\ age $<$ 25 (\{1: 3, 0: 3\})\end{tabular}                                                                                            \\ \midrule
\textbf{\begin{tabular}[c]{@{}l@{}}Credit \\ Default\end{tabular}}                              & 30000                       & 24                          & 0                        & sex                                & \begin{tabular}[c]{@{}c@{}}M (\{1: 14122, 0: 3542\})\\ F (\{1: 8787, 0: 2656\})\end{tabular}                                                                                                                & \begin{tabular}[c]{@{}c@{}}M (\{1: 225, 0: 225\})\\ F (\{1: 225, 0: 225\})\end{tabular}                                                                                                                          & \begin{tabular}[c]{@{}c@{}}M (\{1: 4, 0: 4\})\\ F (\{1: 4, 0: 4\})\end{tabular}                                                                                                              \\ \midrule
\rowcolor[HTML]{E0E0E0} 
{\color[HTML]{000000} \textbf{\begin{tabular}[c]{@{}l@{}}Communities\\ and Crime\end{tabular}}} & {\color[HTML]{000000} 1994} & {\color[HTML]{000000} 1929} & {\color[HTML]{000000} 0} & {\color[HTML]{000000} blackgt6pct} & {\color[HTML]{000000} \begin{tabular}[c]{@{}c@{}}False (\{1: 1002, \scalebox{1}{\colorbox[HTML]{FFEEC2}{0: 0}}\})\\ True (\{1: 841, 0: 101\})\end{tabular}}                                                                                        & {\color[HTML]{000000} \begin{tabular}[c]{@{}c@{}}False (\{1: 14, 0: 14\})\\ True (\{1: 14, 0: 14\})\end{tabular}}                                                                                                & {\color[HTML]{000000} \begin{tabular}[c]{@{}c@{}}False (\{1: 1, 0: 1\})\\ True (\{1: 1, 0: 1\})\end{tabular}}                                                                                \\ \midrule
\textbf{\begin{tabular}[c]{@{}l@{}}Titanic\\ Survival\end{tabular}}                             & 1309                        & 1526                        & 1                        & sex                                & \begin{tabular}[c]{@{}c@{}}M (\{1: 151, 0: 672\})\\ F (\{1: 329, 0: 118\})\end{tabular}                                                                                                                     & \begin{tabular}[c]{@{}c@{}}M (\{1: 9, 0: 9\})\\ F (\{1: 9, 0: 9\})\end{tabular}                                                                                                                                  & \begin{tabular}[c]{@{}c@{}}M (\{1: 1, 0: 1\})\\ F (\{1: 1, 0: 1\})\end{tabular}                                                                                                              \\ \midrule
\rowcolor[HTML]{E0E0E0} 
\textbf{\begin{tabular}[c]{@{}l@{}}Student\\ Performance\end{tabular}}                          & 649                         & 58                          & 1                        & sex                                & \begin{tabular}[c]{@{}c@{}}M (\{1: 200, 0: 33\})\\ F (\{1: 315, 0: 32\})\end{tabular}                                                                                                                       & \begin{tabular}[c]{@{}c@{}}M (\{1: 16, 0: 16\})\\ F (\{1: 16, 0: 16\})\end{tabular}                                                                                                                              & \begin{tabular}[c]{@{}c@{}}M (\{1: 2, 0: 2\})\\ F (\{1: 2, 0: 2\})\end{tabular}                                                                                                              \\

\bottomrule
\end{tabular}
}
\vspace{0.7em}
\caption{We detail the number of samples per group, categorized by protected attributes and labels. The student performance dataset allocates 10\% for testing due to its smaller size, while others reserve 3\%. Notably, in the adult income training dataset with race as a protected attribute, only one Amer-Indian sample with the positive (favorable) label. Additionally, after balancing test and validation sets, the community and crime dataset has no samples in the ``False'' group labeled ``0''. These training data distribution-wise problems are marked in \scalebox{0.85}{\colorbox[HTML]{FFEEC2}{yellow}} and were maintained to examine how extreme imbalances in training data impact various algorithms.}
\label{tab:data_summary}
\end{table}

\paragraph{Training specifics for standard fairness benchmarks (used in $\S$\ref{sec:evaluation}).}
For the models applied to these fairness datasets, we consistently employ a 3-layer neural network architecture comprised of an input layer, one hidden layer, and an output layer. The input layer takes in features and transforms them to a dimension of size 128. A ReLU activation function and a dropout layer for regularization follow this. The hidden layer further processes the data, again followed by a ReLU and dropout layer. Finally, the output layer maps the representation from the hidden layer to the number of classes specified (2 for the considered fairness datasets). This architecture utilizes dropout after each ReLU activation to reduce overfitting, making it suitable for classification tasks.

We fine-tuned the training hyperparameters based on the baseline model. Common hyperparameters across all algorithms include a total of 50 training epochs, an SGD optimizer momentum of 0.9, and a weight decay of 5e-4, with the bargaining phase limited to 15 epochs for the three settings incorporating proposed Nash-Meta-Learning. Hyperparameters that varied are detailed in Table \ref{tab:hyperparameters}.
\paragraph{Computing.} All experiments were conducted on an internal cluster using one chip of H-100.

\begin{table}[h!]
\centering
\resizebox{0.56\linewidth}{!}{
\begin{tabular}{@{}lcccc@{}}
\toprule
\multicolumn{1}{l|}{\textbf{Dataset}}                                                                                                                         & \textbf{Attribute}                  & \textbf{Learning rate}      & \textbf{Dropout Probability} & \textbf{Batch Size}       \\ \midrule  \midrule
\multicolumn{1}{l|}{}                                                                                                                        & sex                                 & 1e-3                        & 0.2                          & 512                       \\ \cmidrule(l){2-5} 
\multicolumn{1}{l|}{\multirow{-2}{*}{\textbf{\begin{tabular}[c]{@{}l@{}}Adult \\ Income\end{tabular}}}}                                      & race                                & 5e-4                        & 0.4                          & 512                       \\ \midrule
\rowcolor[HTML]{E0E0E0} 
\multicolumn{1}{l|}{\cellcolor[HTML]{E0E0E0}\textbf{\begin{tabular}[c]{@{}l@{}}Bank\\ Telemarketing\end{tabular}}}                           & age                                 & 1e-3                        & 0.3                          & 512                       \\ \midrule
\multicolumn{1}{l|}{\textbf{\begin{tabular}[c]{@{}l@{}}Credit \\ Default\end{tabular}}}                                                      & sex                                 & 1e-3                        & 0.5                          & 512                       \\ \midrule
\rowcolor[HTML]{E0E0E0} 
\multicolumn{1}{l|}{\cellcolor[HTML]{E0E0E0}{\color[HTML]{000000} \textbf{\begin{tabular}[c]{@{}l@{}}Communities\\ and Crime\end{tabular}}}} & {\color[HTML]{000000} blackgt6pact} & {\color[HTML]{000000} 1e-4} & {\color[HTML]{000000} 0.2}   & {\color[HTML]{000000} 32} \\ \midrule
\multicolumn{1}{l|}{\textbf{\begin{tabular}[c]{@{}l@{}}Titanic\\ Survival\end{tabular}}}                                                     & sex                                 & 1e-3                        & 0.4                          & 32                        \\ \cmidrule(l){2-5} 
\rowcolor[HTML]{E0E0E0} 
\multicolumn{1}{l|}{\cellcolor[HTML]{E0E0E0}\textbf{\begin{tabular}[c]{@{}l@{}}Student\\ Performance\end{tabular}}}                          & sex                                 & 1e-3                        & 0.05                         & 32                        \\ 
\bottomrule

\end{tabular}}
\vspace{0.7em}
\caption{Hyperparameter settings for standard fairness datasets.}
\label{tab:hyperparameters}
\end{table}

\subsection{Additional Results}
\label{app:additional_result}

\subsubsection{Results with 95\% Confidence Intervals (CIs).}
The main empirical results are averaged from five independent runs using different random seeds, the 95\% CI marked in Table \ref{table:fairness_CI}. In addition to the key findings shown in Table \ref{table:fairness}, we find the results from the two-stage method are often more stable than the one-stage baselines, reflected by tighter CIs.

\begin{table*}[t!]
\centering
\resizebox{\linewidth}{!}{
\begin{tabular}{@{}cc|ccccccc@{}}
\toprule
\multicolumn{1}{c|}{\multirow{2}{*}{}}        & \multicolumn{1}{c|}{\multirow{2}{*}{\textbf{Baseline}}} & \multicolumn{1}{c|}{\multirow{2}{*}{\textbf{DRO}}} & \multicolumn{2}{c|}{\cellcolor[HTML]{e6ddd6}\textbf{LtR}}           & \multicolumn{2}{c|}{\cellcolor[HTML]{ecdcf9}\textbf{FORML}}         & \multicolumn{2}{c}{\cellcolor[HTML]{e7effe}\textbf{Meta-gDRO}} \\ \cmidrule(l){4-9} 
\multicolumn{1}{c|}{}                         & \multicolumn{1}{c|}{}                                   & \multicolumn{1}{c|}{}                              & \multicolumn{1}{C{4.em}}{one-stage}   & \multicolumn{1}{c|}{two-stage (ours)}        & \multicolumn{1}{C{4.em}}{one-stage}   & \multicolumn{1}{c|}{two-stage (ours}        & \multicolumn{1}{C{4.em}}{one-stage}           & two-stage (ours)               \\

\hline \midrule
\multicolumn{9}{l}{\multirow{2}{*}{\textbf{I. Adult Income}~\cite{misc_adult_2}, (sensitive attribute: \textbf{sex})}}\\ \\  \midrule 

\multicolumn{1}{r|}{\textbf{Overall AUC (↑)}} & \multicolumn{1}{c|}{0.778 $\pm$ 0.004}                              & \multicolumn{1}{c|}{0.761 $\pm$ 0.010}                         & \cellcolor[HTML]{e6ddd6}0.830 $\pm$ 0.002 & \multicolumn{1}{c|}{\cellcolor[HTML]{e6ddd6}0.830 $\pm$ 0.004} & 0.801 $\pm$ 0.009 & \multicolumn{1}{c|}{0.810 $\pm$ 0.004} & 0.810 $\pm$ 0.004         & {0.837} $\pm$ 0.004
         \\

\multicolumn{1}{r|}{\textbf{Max-gAUCD (↓)}}   & \multicolumn{1}{c|}{{0.016 $\pm$ 0.012}}                              & \multicolumn{1}{c|}{0.029 $\pm$ 0.016}                         & 0.050 $\pm$ 0.005 & \multicolumn{1}{c|}{0.047 $\pm$ 0.003}  & \cellcolor[HTML]{ecdcf9}0.075 $\pm$ 0.042 & \multicolumn{1}{c|}{\cellcolor[HTML]{ecdcf9}0.031 $\pm$ 0.009} & 0.052 $\pm$ 0.007        & 0.046 $\pm$ 0.004        \\

\multicolumn{1}{r|}{\textbf{Worst-gAUC (↑)}}  & \multicolumn{1}{c|}{0.770 $\pm$ 0.011}                              & \multicolumn{1}{c|}{0.747 $\pm$ 0.018}                         & 0.805 $\pm$ 0.004 & \multicolumn{1}{c|}{0.807 $\pm$ 0.004} & 0.763 $\pm$ 0.022 & \multicolumn{1}{c|}{0.795 $\pm$ 0.007} & \cellcolor[HTML]{e7effe}0.809 $\pm$ 0.006         & \cellcolor[HTML]{e7effe}{0.814 $\pm$ 0.006}   \\


\hline \midrule
\multicolumn{9}{l}{\multirow{2}{*}{\textbf{II. Adult Income}~\cite{misc_adult_2}, (sensitive attribute: \textbf{race};
\scalebox{1}{\colorbox[HTML]{FFEEC2}{* one group in training data contains only one positive (favorable) label.}}
)}} \\ \\  \midrule 

\multicolumn{1}{r|}{\textbf{Overall AUC (↑)}} & \multicolumn{1}{c|}{0.668 $\pm$ 0.004}                              & \multicolumn{1}{c|}{0.652 $\pm$ 0.005}                         & \cellcolor[HTML]{e6ddd6}0.803 $\pm$ 0.005 & \multicolumn{1}{c|}{\cellcolor[HTML]{e6ddd6}{0.805 $\pm$ 0.007}} & 0.710 $\pm$ 0.023 & \multicolumn{1}{c|}{0.775 $\pm$ 0.008} & 0.775 $\pm$ 0.013         & 0.793 $\pm$ 0.004        \\

\multicolumn{1}{r|}{\textbf{Max-gAUCD (↓)}}   & \multicolumn{1}{c|}{0.225 $\pm$ 0.027}                              & \multicolumn{1}{c|}{0.236 $\pm$ 0.011}                         & {0.090 $\pm$ 0.015} & \multicolumn{1}{c|}{{0.090 $\pm$ 0.018}}  & \cellcolor[HTML]{ecdcf9}0.290 $\pm$ 0.006 & \multicolumn{1}{c|}{\cellcolor[HTML]{ecdcf9}0.134 $\pm$ 0.040} & 0.163 $\pm$ 0.044         & 0.158 $\pm$ 0.029        \\

\multicolumn{1}{r|}{\textbf{Worst-gAUC (↑)}}  & \multicolumn{1}{c|}{0.544 $\pm$ 0.019}                              & \multicolumn{1}{c|}{0.538 $\pm$ 0.002}                         & 0.755 $\pm$ 0.004 & \multicolumn{1}{c|}{{0.760 $\pm$ 0.018} } & 0.540 $\pm$ 0.019 & \multicolumn{1}{c|}{0.688 $\pm$ 0.036} & \cellcolor[HTML]{e7effe}0.694 $\pm$ 0.047         & \cellcolor[HTML]{e7effe}0.703 $\pm$ 0.028   \\

\hline \midrule
\multicolumn{9}{l}{\multirow{2}{*}{\textbf{III. Bank Telemarketing}~\cite{moro2014data}, (sensitive attribute: \textbf{age})}}\\ \\  \midrule 

\multicolumn{1}{r|}{\textbf{Overall AUC (↑)}} & \multicolumn{1}{c|}{0.697 $\pm$ 0.004}                              & \multicolumn{1}{c|}{0.686 $\pm$ 0.015}                         & \cellcolor[HTML]{e6ddd6}0.724 $\pm$ 0.009 & \multicolumn{1}{c|}{\cellcolor[HTML]{e6ddd6}0.728 $\pm$ 0.005} & 0.706 $\pm$ 0.101 & \multicolumn{1}{c|}{0.779 $\pm$ 0.027} & 0.698 $\pm$ 0.006         & 0.722 $\pm$ 0.009         \\

\multicolumn{1}{r|}{\textbf{Max-gAUCD (↓)}}   & \multicolumn{1}{c|}{{0.013 $\pm$ 0.010}}                              & \multicolumn{1}{c|}{0.025 $\pm$ 0.010}                         & 0.099 $\pm$ 0.012 & \multicolumn{1}{c|}{0.083 $\pm$ 0.009}  & \cellcolor[HTML]{ecdcf9}0.039 $\pm$ 0.023 & \multicolumn{1}{c|}{\cellcolor[HTML]{ecdcf9}0.029 $\pm$ 0.009} & 0.098 $\pm$ 0.010       & 0.079 $\pm$ 0.008         \\

\multicolumn{1}{r|}{\textbf{Worst-gAUC (↑)}}  & \multicolumn{1}{c|}{0.691 $\pm$ 0.008}                              & \multicolumn{1}{c|}{0.691 $\pm$ 0.008}                         & 0.675 $\pm$ 0.013 & \multicolumn{1}{c|}{0.686 $\pm$ 0.005} & 0.686 $\pm$ 0.090 & \multicolumn{1}{c|}{0.764 $\pm$ 0.025} & \cellcolor[HTML]{e7effe}0.649 $\pm$ 0.009    & \cellcolor[HTML]{e7effe}0.683 $\pm$ 0.010   \\


\hline \midrule
\multicolumn{9}{l}{\multirow{2}{*}{\textbf{IV. Credit Default}~\cite{yeh2009comparisons}, (sensitive attribute: \textbf{sex};
\scalebox{1}{\colorbox[HTML]{FFEEC2}{* val data is more noisy than others, see Table \ref{tab:noisy_test_set}, Appendix \ref{app:additional_result}.}}
)}}\\ \\  \midrule 

\multicolumn{1}{r|}{\textbf{Overall AUC (↑)}} & \multicolumn{1}{c|}{0.634 $\pm$ 0.004}                              & \multicolumn{1}{c|}{0.624 $\pm$ 0.008}                         & \cellcolor[HTML]{FFFFFF}0.630 $\pm$ 0.019 & \multicolumn{1}{c|}{\cellcolor[HTML]{FFFFFF}0.616 $\pm$ 0.035} & 0.554 $\pm$ 0.039 & \multicolumn{1}{c|}{0.611 $\pm$ 0.018} & {0.682 $\pm$ 0.003}        & 0.661 $\pm$ 0.017       \\

\multicolumn{1}{r|}{\textbf{Max-gAUCD (↓)}}   & \multicolumn{1}{c|}{0.024 $\pm$ 0.011}                              & \multicolumn{1}{c|}{0.022 $\pm$ 0.010}                         & 0.037 $\pm$ 0.022 & \multicolumn{1}{c|}{0.025 $\pm$ 0.012}  & \cellcolor[HTML]{FFFFFF} 0.017 $\pm$ 0.012 & \multicolumn{1}{c|}{\cellcolor[HTML]{FFFFFF}0.033 $\pm$ 0.018} & {0.016 $\pm$ 0.007}         & 0.022 $\pm$ 0.016        \\

\multicolumn{1}{r|}{\textbf{Worst-gAUC (↑)}}  & \multicolumn{1}{c|}{0.622 $\pm$ 0.005}                              & \multicolumn{1}{c|}{0.613 $\pm$ 0.012}                         & 0.612 $\pm$ 0.013 & \multicolumn{1}{c|}{0.603 $\pm$ 0.032} & 0.545 $\pm$ 0.033 & \multicolumn{1}{c|}{0.595 $\pm$ 0.010} & \cellcolor[HTML]{FFFFFF}{0.674 $\pm$ 0.004}         & \cellcolor[HTML]{FFFFFF}0.650 $\pm$ 0.025  \\


\hline \midrule
\multicolumn{9}{l}{\multirow{2}{*}{\textbf{V. Communities and Crime}~\cite{misc_communities_and_crime_183}, (sensitive attribute: \textbf{blackgt6pct}; 
\scalebox{1}{\colorbox[HTML]{FFEEC2}{* val data is noisy, meanwhile one group in training data are all positive labels}}
)}}\\ \\  \midrule 

\multicolumn{1}{r|}{\textbf{Overall AUC (↑)}} & \multicolumn{1}{c|}{0.525 $\pm$ 0.008}                              & \multicolumn{1}{c|}{0.568 $\pm$ 0.006}                         & \cellcolor[HTML]{e6ddd6}0.679 $\pm$ 0.022 & \multicolumn{1}{c|}{\cellcolor[HTML]{e6ddd6}0.700 $\pm$ 0.032} & 0.554 $\pm$ 0.010 & \multicolumn{1}{c|}{0.568 $\pm$ 0.018} & 0.686 $\pm$ 0.035        & 0.686 $\pm$ 0.035        \\

\multicolumn{1}{r|}{\textbf{Max-gAUCD (↓)}}   & \multicolumn{1}{c|}{0.050 $\pm$ 0.015}                              & \multicolumn{1}{c|}{0.136 $\pm$ 0.012}                         & 0.071 $\pm$ 0.045 & \multicolumn{1}{c|}{0.129 $\pm$ 0.073}  & \cellcolor[HTML]{FFFFFF}0.107 $\pm$ 0.020& \multicolumn{1}{c|}{\cellcolor[HTML]{FFFFFF}0.136 $\pm$ 0.037} & 0.114 $\pm$ 0.073        & 0.114 $\pm$ 0.073       \\

\multicolumn{1}{r|}{\textbf{Worst-gAUC (↑)}}  & \multicolumn{1}{c|}{0.500 $\pm$ 0.000}                              & \multicolumn{1}{c|}{0.500 $\pm$ 0.000}                         & {0.643 $\pm$ 0.000} & \multicolumn{1}{c|}{0.636 $\pm$ 0.046} & 0.500 $\pm$ 0.000 & \multicolumn{1}{c|}{0.500 $\pm$ 0.000} & \cellcolor[HTML]{e7effe}0.629 $\pm$ 0.042         & \cellcolor[HTML]{e7effe}0.629 $\pm$ 0.042   \\


\hline \midrule
\multicolumn{9}{l}{\multirow{2}{*}{\textbf{VI. Titanic Survival}~\cite{titanic}, (sensitive attribute: \textbf{sex})}}\\ \\ \midrule 

\multicolumn{1}{r|}{\textbf{Overall AUC (↑)}} & \multicolumn{1}{c|}{0.972 $\pm$ 0.000}                              & \multicolumn{1}{c|}{{0.983 $\pm$ 0.012}}                         & \cellcolor[HTML]{e6ddd6}0.967 $\pm$ 0.010 & \multicolumn{1}{c|}{\cellcolor[HTML]{e6ddd6}0.978 $\pm$ 0.010} & 0.950 $\pm$ 0.018 & \multicolumn{1}{c|}{0.972 $\pm$ 0.022} & 0.961 $\pm$ 0.012       & 0.972 $\pm$ 0.016        \\

\multicolumn{1}{r|}{\textbf{Max-gAUCD (↓)}}   & \multicolumn{1}{c|}{0.056 $\pm$ 0.000}                              & \multicolumn{1}{c|}{0.033 $\pm$ 0.024}                         & 0.044 $\pm$ 0.019 & \multicolumn{1}{c|}{0.044 $\pm$ 0.019}  & \cellcolor[HTML]{ecdcf9} 0.033 $\pm$ 0.024 & \multicolumn{1}{c|}{\cellcolor[HTML]{ecdcf9}0.011 $\pm$ 0.019} & 0.033 $\pm$ 0.024        & 0.033 $\pm$ 0.024         \\

\multicolumn{1}{r|}{\textbf{Worst-gAUC (↑)}}  & \multicolumn{1}{c|}{0.944 $\pm$ 0.000}                              & \multicolumn{1}{c|}{{0.967 $\pm$ 0.024}}                         & 0.944 $\pm$ 0.000 & \multicolumn{1}{c|}{0.956 $\pm$ 0.019} & 0.933 $\pm$ 0.019 & \multicolumn{1}{c|}{0.967 $\pm$ 0.024} & \cellcolor[HTML]{e7effe}0.944 $\pm$ 0.000       & \cellcolor[HTML]{e7effe}0.956 $\pm$ 0.019  \\


\hline \midrule
\multicolumn{9}{l}{
\multirow{2}{*}{
\textbf{VII. Student Performance}~\cite{misc_student_performance_320}, (sensitive attribute: \textbf{sex})
}}\\ \\ \midrule

\multicolumn{1}{r|}{\textbf{Overall AUC (↑)}} & \multicolumn{1}{c|}{0.784 $\pm$ 0.011}                              & \multicolumn{1}{c|}{0.816 $\pm$ 0.020}                         & \cellcolor[HTML]{e6ddd6}0.900 $\pm$ 0.007 & \multicolumn{1}{c|}{\cellcolor[HTML]{e6ddd6}0.900 $\pm$ 0.011} & 0.828 $\pm$ 0.026 & \multicolumn{1}{c|}{0.822 $\pm$ 0.032} & 0.909 $\pm$ 0.020       & 0.912 $\pm$ 0.017      \\

\multicolumn{1}{r|}{\textbf{Max-gAUCD (↓)}}   & \multicolumn{1}{c|}{0.119 $\pm$ 0.032}                              & \multicolumn{1}{c|}{0.106 $\pm$ 0.037}                         & {0.013 $\pm$ 0.013} & \multicolumn{1}{c|}{0.037 $\pm$ 0.027}  & \cellcolor[HTML]{ecdcf9}0.056 $\pm$ 0.032 & \multicolumn{1}{c|}{\cellcolor[HTML]{ecdcf9}0.031 $\pm$ 0.025} & 0.031 $\pm$ 0.018       & 0.025 $\pm$ 0.011       \\

\multicolumn{1}{r|}{\textbf{Worst-gAUC (↑)}}  & \multicolumn{1}{c|}{0.725 $\pm$ 0.020}                             & \multicolumn{1}{c|}{0.762 $\pm$ 0.032}                         & 0.894 $\pm$ 0.013 & \multicolumn{1}{c|}{0.881 $\pm$ 0.020} & 0.800 $\pm$ 0.022 & \multicolumn{1}{c|}{0.806 $\pm$ 0.020} & \cellcolor[HTML]{e7effe}0.894 $\pm$ 0.028         & \cellcolor[HTML]{e7effe}0.900 $\pm$ 0.020   \\

\bottomrule
\end{tabular}}
\caption{Performance comparison on standard fairness datasets (averaged from 5 runs), with 95\% CI.
}
\label{table:fairness_CI}
\end{table*}

\subsubsection{Additional Analysis}
\paragraph{Noise analysis.}
To analyze the noise presented in validation sets, as referenced in Table \ref{tab:noisy_test_set}, we employ normalized mutual information \cite{khan2007relative}. This metric offers a measure of the shared information between features and labels, normalized by the sum of their individual entropies. The normalized mutual information \(I_{norm}\) between a feature set \(X\) and labels \(Y\) is computed as:
\[
I_{norm}(X; Y) = \frac{I(X; Y)}{H(X) + H(Y)}
\]
where \(I(X; Y)\) is the mutual information between \(X\) and \(Y\), representing the amount of shared information, and \(H(X)\) and \(H(Y)\) are the entropies of the feature set and the labels, rsp. A higher \(I_{norm}\) indicates less noise, signifying a stronger relationship between the features and labels. 
We referred to \url{https://github.com/mutualinfo/mutual_info} (Apache-2.0 license) for implementation.

Referring to Table \ref{tab:noisy_test_set}, it is evident that the validation set used from Credit Default dataset and the Communities and Crime dataset exhibits a notably lower mutual information score, compared to the other datasets. This suggests a significantly weaker correlation between the features and labels in these particular validation sets and a higher level of noise. Such a disparity in mutual information underscores the challenge in achieving high model performance and could explain the varying degrees of success observed across different fairness interventions within this dataset.

\paragraph{Additional illustration of resolving hypergradient conflict and our dynamics.}
Figure \ref{fig:all_uap} provides additional visualizations of the impact of our bargaining strategy on gradient conflict resolution, utilizing the adult income dataset with sex as the sensitive attribute. The data indicates that FORML and Meta-gDRO exhibit greater initial conflicts compared to LtR, but also receive more pronounced benefits from our bargaining steps. In Figure \ref{fig:alignment}, we observe instances where bargaining was unsuccessful, yet an external force, aligned with specific fairness goals (shown with Meta-gDRO as an example), aids in overcoming local sticking points. These observations not only substantiate the utility of bargaining steps in mitigating conflicts but also empirically demonstrate the dynamic process of opting for updates over stagnation, which aligns with the analysis of our method discussed in $\S$\ref{subsec:nash_meta_dynamics}.

\begin{table}[!ht]
\centering
\resizebox{0.64\linewidth}{!}{
\begin{tabular}{@{}lccc@{}}
\toprule
\multicolumn{1}{c}{{\color[HTML]{000000} }}                                                                                                  & {\color[HTML]{000000} \textbf{Attribute}} & {\color[HTML]{000000} \textbf{\# features}} & {\color[HTML]{000000} \textbf{Normalized Mutual Information (Validation Set)}} \\ \midrule \midrule
\multicolumn{1}{l|}{}                                                                                                                        & sex                                       & 105                                         &  31.007 e-5                                                             \\ \cmidrule(l){2-4} 
\multicolumn{1}{l|}{\multirow{-2}{*}{\textbf{\begin{tabular}[c]{@{}l@{}}Adult \\ Income\end{tabular}}}}                                      & race                                      & 105                                         & 43.337 e-5                                                             \\ \midrule
\rowcolor[HTML]{E0E0E0} 
\multicolumn{1}{l|}{\cellcolor[HTML]{E0E0E0}\textbf{\begin{tabular}[c]{@{}l@{}}Bank\\ Telemarketing\end{tabular}}}                           & age                                       & 51                                          & 679.576 e-5                                                             \\ \midrule
\multicolumn{1}{l|}{\textbf{\begin{tabular}[c]{@{}l@{}}Credit \\ Default\end{tabular}}}                                                      & sex                                       & 24                                          & \cellcolor[HTML]{FFEEC2}\textbf{26.023 e-5}                             \\ \midrule
\rowcolor[HTML]{E0E0E0} 
\multicolumn{1}{l|}{\cellcolor[HTML]{E0E0E0}{\color[HTML]{000000} \textbf{\begin{tabular}[c]{@{}l@{}}Communities\\ and Crime\end{tabular}}}} & {\color[HTML]{000000} blackgt6pact}       & {\color[HTML]{000000} 1929}                 & \cellcolor[HTML]{FFEEC2}{\color[HTML]{000000}  \textbf{9.962 e-5}}                                      \\ \midrule
\multicolumn{1}{l|}{\textbf{\begin{tabular}[c]{@{}l@{}}Titanic\\ Survival\end{tabular}}}                                                     & sex                                       & 1526                                        &  63.911 e-5                                                             \\ \midrule
\rowcolor[HTML]{E0E0E0} 
\multicolumn{1}{l|}{\cellcolor[HTML]{E0E0E0}\textbf{\begin{tabular}[c]{@{}l@{}}Student\\ Performance\end{tabular}}}                          & sex                                       & 58                                          &  253.198 e-5                                                              \\ \bottomrule
\end{tabular}}
\vspace{0.7em}
\caption{Mutual information of each dataset's validation set normalized by the summation of features' and labels' entropy.}
\label{tab:noisy_test_set}
\end{table}

\begin{figure}[!htb]
    \centering
    \begin{subfigure}[b]{0.49\textwidth}
        \centering
        \includegraphics[width=\textwidth]{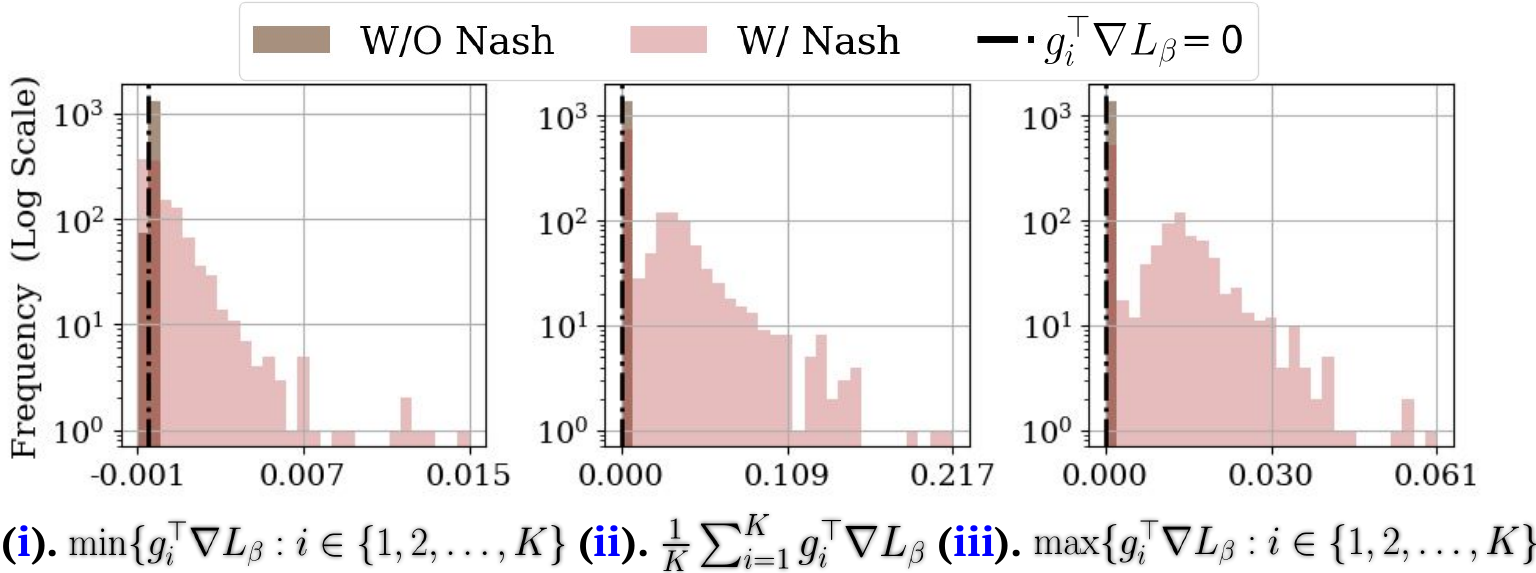}
        \caption{How bargaining helps to resolve conflicts (LtR)}
        \label{fig:uap2}
    \end{subfigure}
    \hfill
    \begin{subfigure}[b]{0.49\textwidth}
        \centering
        \includegraphics[width=\textwidth]{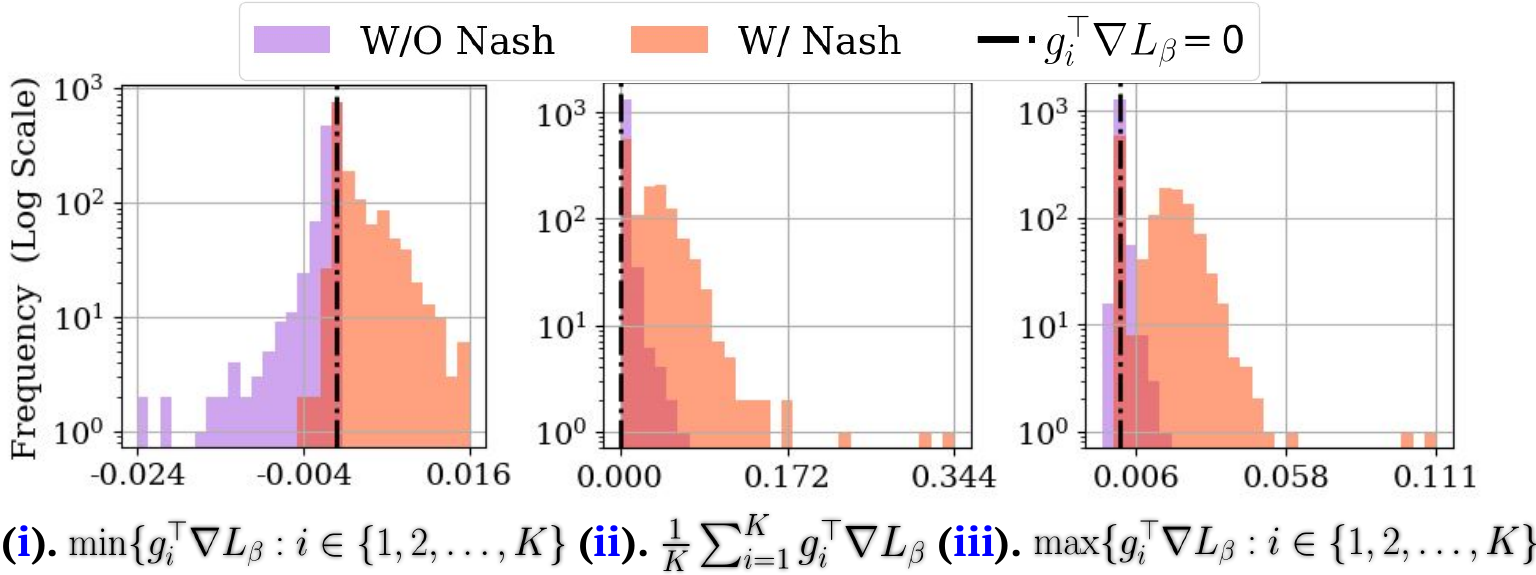}
        \caption{How bargaining helps to resolve conflicts (FORML)}
        \label{fig:uap3}
    \end{subfigure}
    \vspace{\floatsep}
    
    \begin{subfigure}[b]{0.49\textwidth}
        \centering
        \includegraphics[width=\textwidth]{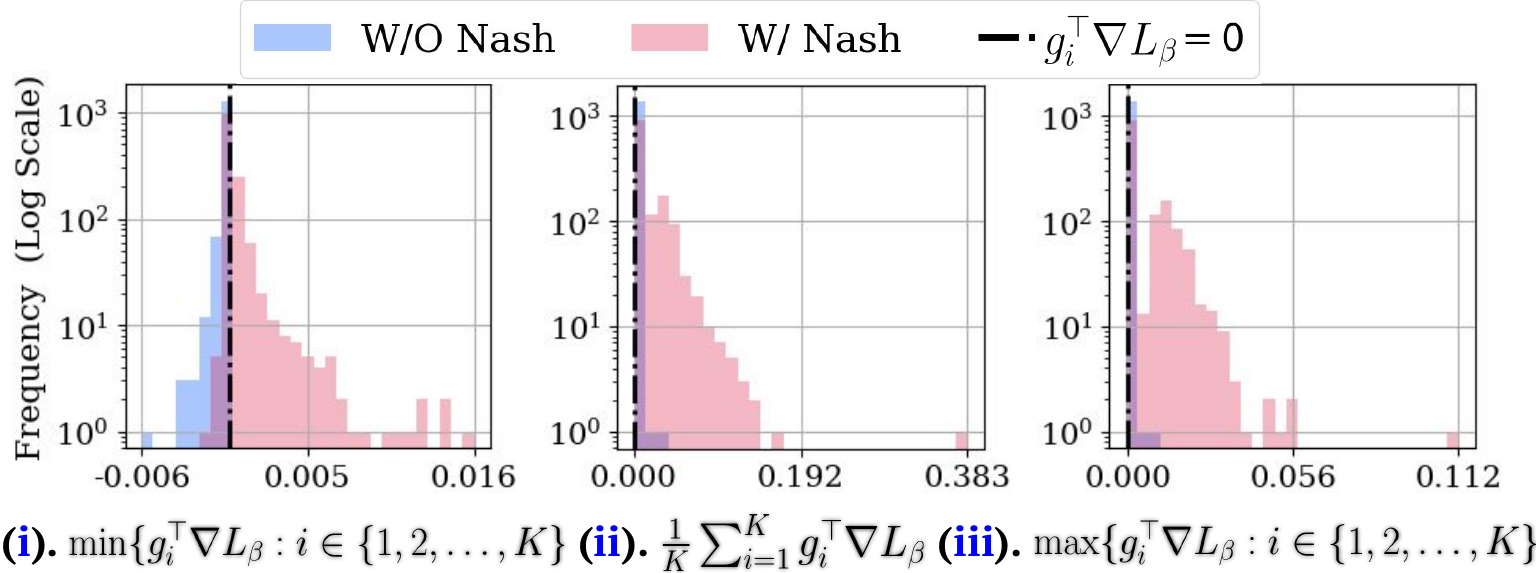}
        \caption{How bargaining helps to resolve conflicts (Meta-gDRO)}
        \label{fig:uap4}
    \end{subfigure}
    \hfill
    \begin{subfigure}[b]{0.49\textwidth}
        \centering
        \includegraphics[width=0.75\textwidth]{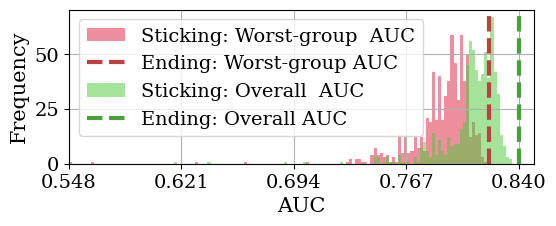}
        \caption{Bargaining failed (local sticking points) and how external force helps (Meta-gDRO with Nash)}
        \label{fig:alignment}
    \end{subfigure}
    \caption{Visualizations of how Nash bargaining improves gradient conflicts across different methods (adult income with sex as the sensitive attribute). With (\textcolor{blue}{i}), (\textcolor{blue}{ii}), (\textcolor{blue}{iii}) in (a), (b), (c) depicting the least value of group-wise hypergradient alignment ($\min\{g_i^{\top}\nabla \fairnessloss : i\in [K]$), the averaged group-wise hypergradient alignment value ($\frac{1}{K}\sum_{i\in [K]} g_i^{\top} \nabla \fairnessloss$), and the maximum value of group-wise hypergradient alignment ($\max\{g_i^{\top}\nabla \fairnessloss : i\in [K]\}$) respectively.
    }
    \label{fig:all_uap}
\end{figure}

\begin{figure}[t!]
    \centering
    \begin{subfigure}[t]{0.49\linewidth}
        \centering
        \includegraphics[width=\linewidth]{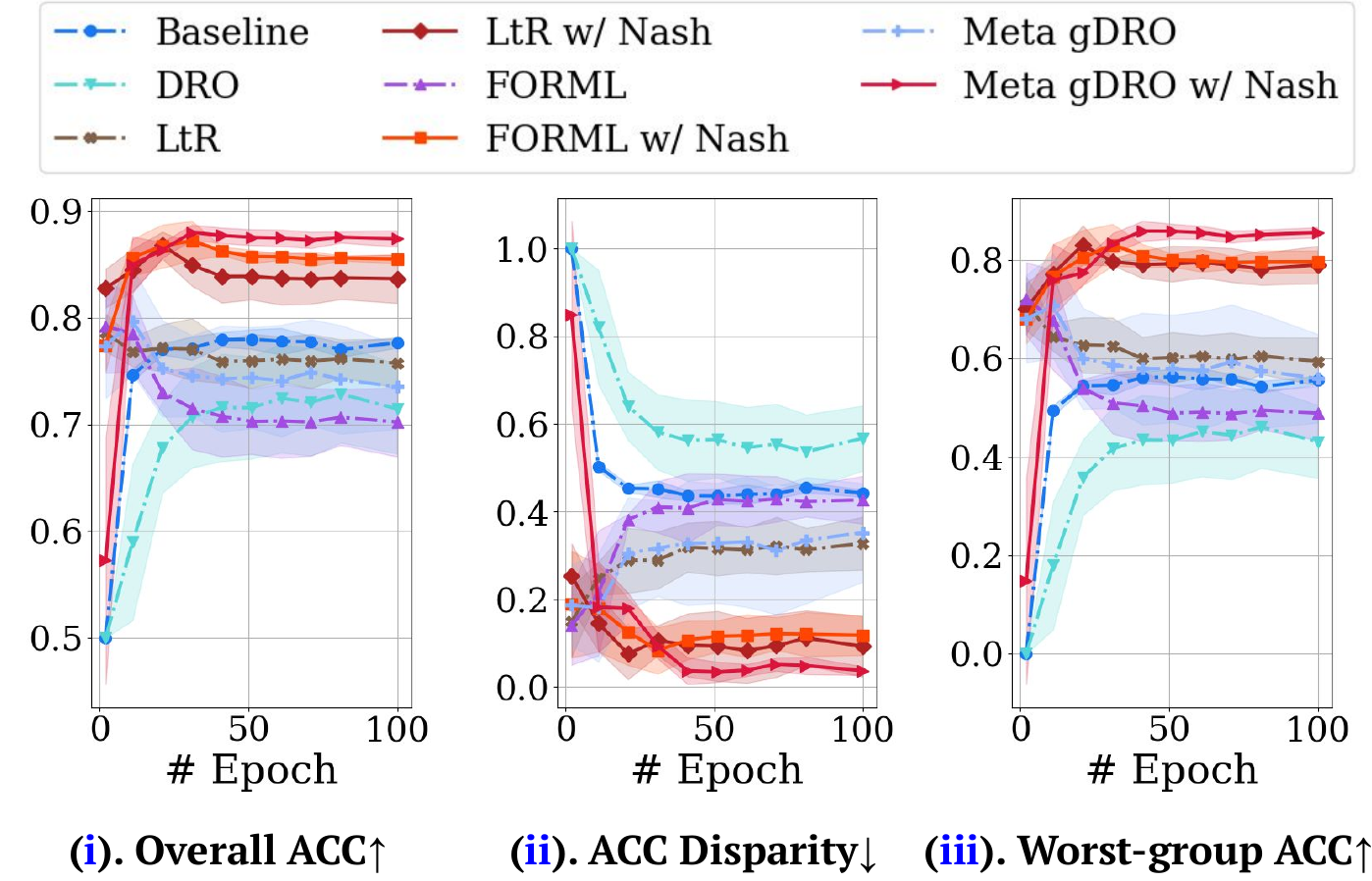}
        \caption{CIFAR-10 imbalanced training results.}
        \label{fig:CIFAR10}
    \end{subfigure}%
    \hfill
    \begin{subfigure}[t]{0.49\linewidth}
        \centering
        \includegraphics[width=\linewidth]{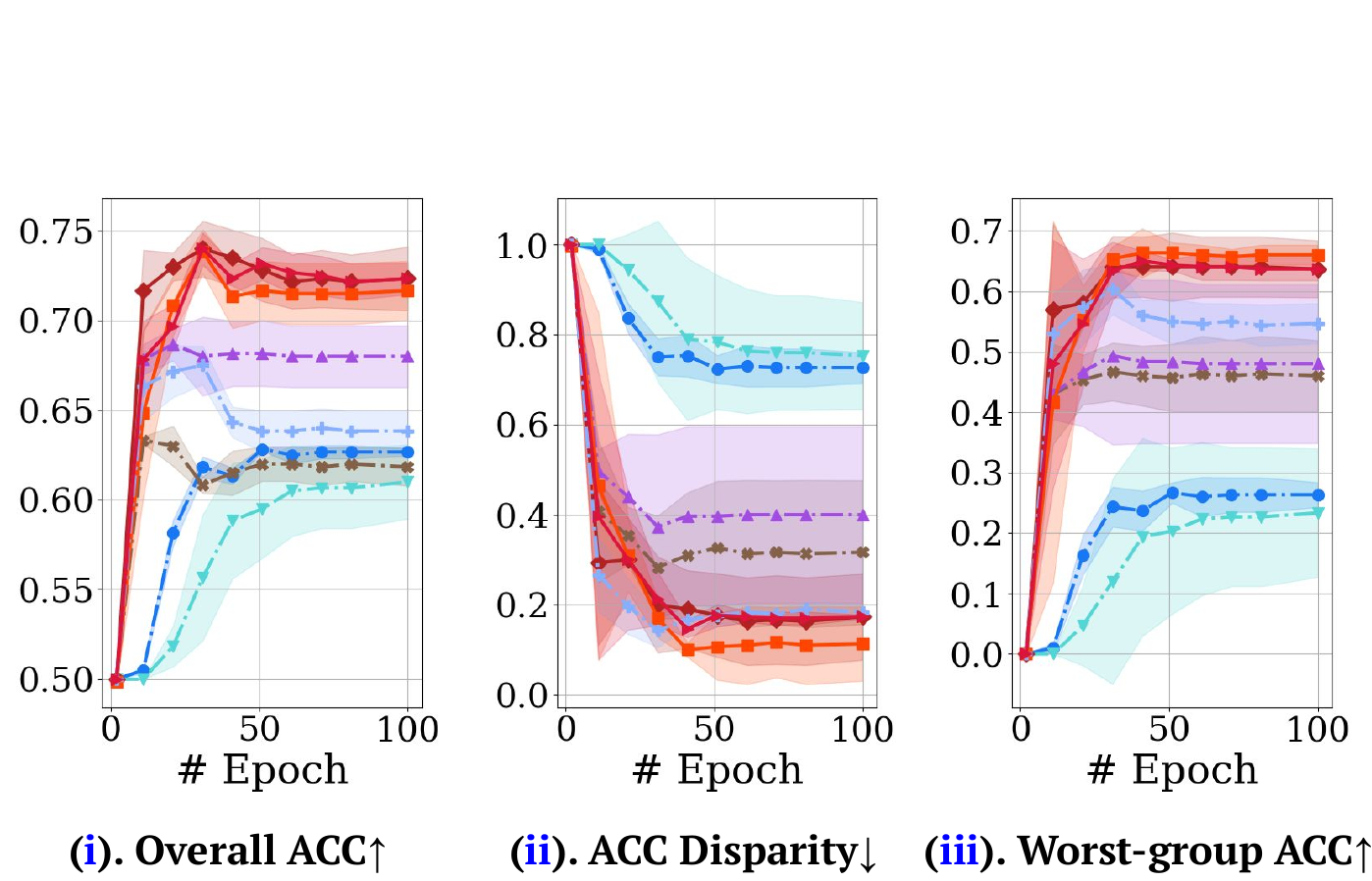}
        \caption{CIFAR-100 imbalanced training results.}
        \label{fig:CIFAR100}
    \end{subfigure}
    \caption{
    Comparative analysis of accuracy and fairness during CIFAR-10 and CIFAR-100 imbalanced training (results averaged from 5 seeds). Subfigures (a) depict the CIFAR-10 results and (b) show CIFAR-100. Each plot tracks the evolution of overall ACC, ACC disparity, and worst-group ACC over training epochs, highlighting performance dynamics under a 99\%-1\% class imbalance for CIFAR-10 and a 95\%-5\% for CIFAR-100. The shaded regions represent the 95\% CI, offering insight into the consistency of each method. The efficacy of Nash bargaining in the two-stage approach is evident, particularly in enhancing the worst-group ACC, vital for fairness in imbalanced settings.
    }
    \label{fig:CIFAR}
\end{figure}

\subsubsection{Imbalanced Image Classification}
\label{subsec:image}

To further validate our framework, we extend our evaluation to imbalanced image classification on subsets of the CIFAR-10 and CIFAR-100 datasets \cite{krizhevsky2009learning}. For CIFAR-10, we construct a training set of 5000 samples with an imbalanced class distribution of 99\% to 1\%. Due to the smaller per-class sample size in CIFAR-100, we create a 500-sample training set with a 95\%-5\% split. Test sets are balanced, containing an equal number of samples from each class (1000 for CIFAR-10, 100 for CIFAR-100), and validation sets are composed of five samples from each group. We utilize ResNet-18 \cite{he2016deep} for modeling and employ SGD with a learning rate of 5e-4, momentum of 0.9, and a weight decay of 5e-4. All methods are trained for 100 epochs with a batch size of 128, and for two-stage algorithms, the initial 30 epochs incorporate bargaining.
Figure \ref{fig:CIFAR10} delineates the trajectory of accuracy (ACC) for different methods during training epochs under a 99\%-1\% class imbalance on CIFAR-10. One-stage meta-learning methods encounter difficulties in converging to a performing and fair model, often underperforming in terms of both ACC and fairness in this context. In contrast, our two-stage method not only achieves significant improvements in ACC and fairness but also enhances algorithmic stability, as indicated by the tighter CIs. The early application of bargaining secures a higher initial ACC, and this advantage is sustained even as the model later transitions to focus on specific fairness objectives.  The consistently improved trajectory, echoing our synthetic example insights (Figure \ref{fig:oursimu}), suggests that reaching the Pareto front and subsequently focusing on fairness does not degrade the model's performance. The above observations underscore the efficacy of the two-stage bargaining approach in managing the trade-offs inherent in fairness-focused tasks.
The challenge of achieving fairness across groups is amplified by the granular nature of CIFAR-100 (more classes and fewer samples per class) (Figure \ref{fig:CIFAR100}). Despite this, the two-stage Nash bargaining approach demonstrates a substantial improvement in aligning model performance with fairness goals.  In contrast, the one-stage baseline methods show limitations in addressing the imbalance, as evidenced by their fluctuating and often lower ACC trajectories. Nash bargaining yields a marked improvement in the worst-group ACC, which is critical in such imbalanced scenarios. This indicates that the bargaining phase helps the model to better navigate the complex decision space of CIFAR-100, providing a more equitable distribution of predictive accuracy among the classes. Notably, the early-stage bargaining not only propels the model towards higher initial accuracy but also prepares it to maintain performance when the fairness objectives become the primary focus in the later stages of training. This strategic approach ensures that the pursuit of fairness does not come at the expense of overall accuracy, a balance that is particularly challenging in the diverse and imbalanced CIFAR-100 environment. The above results from CIFAR-100 reinforce the generalizable effectiveness of Nash bargaining in our proposed two-stage meta-learning. 


\subsection{Broader Impact}
\label{app:broader_impact}
In this study, we present advancements in ML fairness through the development and analysis of new fairness-aware meta-learning methods. Our research strictly utilizes synthetic or publicly accessible open-source datasets, avoiding the use of human subjects. While our method demonstrates notable improvements over existing approaches in various experimental setups, we recognize that it is not infallible. As such, any application of our method in real-world decision-making tasks must be approached with meticulous consideration of the specific context and potential repercussions. It's crucial to understand that achieving ML fairness is an ongoing, collective endeavor within the research community. 

Moreover, we also acknowledge that the fairness-aware meta-learning methods developed in this study are not panaceas for all fairness-related issues in machine learning. There are still several challenges and open research directions in this area, such as addressing fairness in multimodal learning, developing better explainability and interpretability techniques, and ensuring fairness in model adaptation and transfer learning. Furthermore, there is a need for more diverse and representative datasets to evaluate the effectiveness and generalizability of fairness-aware methods. To address these challenges, we call for further research and collaboration among researchers, practitioners, and policymakers to advance the state-of-the-art in ML fairness and to ensure that AI systems are fair, transparent, and accountable for all individuals and groups.


\end{document}